\documentclass[twoside]{article}

\usepackage[preprint]{icml2026}
\usepackage{amsmath}
\usepackage{booktabs}
\usepackage{graphicx}
\usepackage{macro}
\usepackage{hyperref}

\usepackage{caption}
\usepackage{subcaption}

\begin{document}

%

%

\twocolumn[
  \icmltitle{Data Augmentation via Causal-Residual Bootstrapping}



  \icmlsetsymbol{equal}{*}

  \begin{icmlauthorlist}
    \icmlauthor{Mateusz Gajewski}{poznan,akces}
    \icmlauthor{Sophia Xiao}{dartmouth}
    \icmlauthor{Bijan Mazaheri}{dartmouth,ewsc}
  \end{icmlauthorlist}

  \icmlaffiliation{poznan}{Poznań University of Technology, Poznań, Poland}
  \icmlaffiliation{dartmouth}{Thayer School of Engineering, Dartmouth, Hanover, NH, USA}
  \icmlaffiliation{ewsc}{Eric and Wendy Schmidt Center, Broad Institute of MIT and Harvard, Cambridge, MA, USA}
  \icmlaffiliation{akces}{AKCES NCBR, Warsaw, Poland}

  \icmlcorrespondingauthor{Bijan Mazaheri}{bijan.h.mazaheri@dartmouth.edu}

\icmlkeywords{Causal Inference, Data Augmentation, Machine Learning}

  \vskip 0.3in
]

\printAffiliationsAndNotice{}







\begin{abstract}
  Data augmentation integrates domain knowledge into a dataset by making domain-informed modifications to existing data points. For example, image data can be augmented by duplicating images in different tints or orientations, thereby incorporating the knowledge that images may vary in these dimensions. Recent work by Teshima and Sugiyama has explored the integration of causal knowledge (e.g, A causes B causes C) up to conditional independence equivalence. We suggest a related approach for settings with additive noise that can incorporate information beyond a Markov equivalence class. The approach, built on the principle of independent mechanisms, permutes the residuals of models built on marginal probability distributions. Predictive models built on our augmented data demonstrate improved accuracy, for which we provide theoretical backing in linear Gaussian settings.
\end{abstract}

\section{Introduction}


Large, comprehensive datasets have driven a meteoric rise in the scale and capabilities of machine learning systems. Nevertheless, data-sparse regimes remain, particularly in human health and science, where financial and ethical barriers prohibit extensive data collection.

By adding noise or anticipated transformations, data augmentation creates new training data points, thereby enhancing the power of small datasets \citep{wang2024comprehensive}. Such augmentation strategies can be thought of as incorporating prior knowledge as new data. For example, an augmented image dataset may account for known variations by varying lighting conditions and performing rotations \citep{van2001art}.

Causal information is often represented in the form of a structural causal model (SCM), which is known to imply conditional independencies according to d-separation rules \citep{pearl2009causality}. For example, consider a causal chain with $A$ causes $B$ and $B$ causes $C$, and no other causal dependencies. Such a system is represented using a directed acyclic graph $A \rightarrow B \rightarrow C$, which is constrained by a Markov property of $A \indep C \given B$.

Causal structures or conditional independencies are often known from prior experiments or by design. For example, randomized controlled trials (which are often small) are known to assign a randomized treatment ($T$) that contains no arrows to or from other attributes of the participants ($\bvec{X}$). Such knowledge tells us that $\bvec{X} \indep T$ without the need for statistical testing. In other settings, causal structures can be learned using a process known as causal discovery \citep{squires2023causal}.

Recently, foundation models based on the prior–data fitted network (PFN) paradigm have gained traction in the ML community. This includes TabPFN~\citep{mullertransformers, hollmann2025accurate}, which was pre-trained on synthetic datasets generated using random structural causal models and Bayesian networks. Surprisingly, models pre-trained on synthetic data perform well on real-world datasets, seemingly benefiting from information that a causal structure \emph{exists}, even though the exact causal structure of the eventual real-world data was not necessarily provided. This suggests that structural priors capture complex independence relations in real-world data~\cite{jiang2025tabstruct}.

Seminal work by \citet{teshima2021incorporating} has explored incorporating causal information into data augmentation. In particular, they noted that conditional independencies allow entries of a dataset to be permuted. For example, if $A \indep B$, then any observed value of $A$ may be paired with any observed value of $B$. These permutations augment the data, e.g., a two-point set $(0,0), (1,1)$ may be augmented to $(0,0), (1,1), (0,1), (1,0)$. Their approach shows moderate improvement in prediction accuracy.

The approach by \citet{teshima2021incorporating} is limited in two ways. First, their approach requires reweighting data according to estimates of probability density, which thereby requires tuning the radius of a RBF kernel \citep{poinsot2023guide}. Reweighting based on density was criticized by \citet{NEURIPS2020_24368c74} for its dependence on tuning and general instability. Second, their approach is limited to augmenting conditional independence properties, which cannot account for causal direction. As such, the entire directed acyclic graph (DAG) cannot be utilized.

Conditional independence information alone cannot fully specify a causal structure. For example, $A \rightarrow B \rightarrow C$, $A \leftarrow B \leftarrow C$, and $A \leftarrow B \rightarrow C$ display the same set of conditional independence properties, known as a Markov equivalence class (MEC), which is represented using a ``completed partial DAG'' (CP-DAG) when there are no latent variables, and an acyclic directed mixed graph (ADMG) in the presence of latent confounding \citep{spirtes2000causation}. Literature on causal discovery has shown that systems with nonlinear dependencies and Gaussian additive noise contain additional information beyond the conditional independence properties that specify an MEC \citep{HoyerJanzingMooijPetersScholkopf2008, PetersMooijJanzingScholkopf2014}. In particular, nonlinear relationships shift Gaussian distributions to non-Gaussian distributions, yielding Gaussian residuals in the causal direction and non-Gaussian residuals in the anti-causal direction.

The goal of this paper is to address the limitation of \citet{teshima2021incorporating} by incorporating information from the \emph{full causal structure}, including causal directions. We do this using the principle of independent mechanisms, which states that if $X$ causes $Y$, then the mechanism that turns $X$ into $Y$ (i.e., $\Pr(Y | X)$) does not depend on how $X$ is distributed ($\Pr(X)$) \citep{peters2017elements}. As such, the distribution of residuals of the effect $Y$ is \emph{independent} from the value of the cause $X$.

\subsection{Summary of Contributions}

We begin by motivating the incorporation of causal structure via data augmentation, rigorously demonstrating its ability to improve downstream regression tasks. Section~\ref{sec: causal constraints} provides a theoretical justification for this claim, showing that the improvement in regression error scales proportionally to the inverse of the starting dataset size (smaller datasets are improved more).

Motivated by these theoretical results, we build on the work of \citet{teshima2021incorporating} to develop a more broadly-applicable causal data augmentation approach, called ``Causal-Residual Bootstrapping'' (CRB). CRB is given in Section~\ref{sec: CRB}, utilizing the principle of independent mechanisms. The key observation is that predictions of variables from their direct causes should exhibit residuals that are independent from their predecessors' values. As such, the residuals may be permuted, allowing us to ``bootstrap'' the noise term and generate additional data. CRB satisfies two goals: it may preserve existing structural information by augmenting data based on the output of a causal discovery algorithm, or it may help inject new structural information into a dataset by leveraging a known set of causal dependencies.

To emphasize the need for new methods like CRB, we demonstrate that genAI-based augmentation not only fails to capture causal structure but also contributes to its decline. Section~\ref{sec: data augmentation and causal structure} shows that VAEs, GANs, and diffusion models all weaken the performance of causal discovery algorithms run on their augmented datasets. Meanwhile, CRB and the approach by \citet{teshima2021incorporating} preserve information about causal structure, as evidenced by unchanging performance in causal discovery. We further test both of these causally-informed augmentation methods on linear models with non-Gaussian noise, which exhibit additional information beyond their MECs. In these experiments, the direct LinGAM algorithm \citep{shimizu2011directlingam} does not perform well on data augmented by \citet{teshima2021incorporating}'s approach, but retains performance for CRB. This demonstrates the added value of augmenting according to the full causal structure.

Since many datasets do not satisfy the linear restrictions of our theoretical results, Section~\ref{sec: real data experiments} provides an empirical study of CRB's improvements to downstream regression tasks. Experiments are done on both synthetic (Ap.~\ref{sec: synthetic data exp}) and real datasets. These tests include both settings with known causal structures and those with unknown causal structures learned from causal discovery algorithms.

\subsection{Related Works}

\paragraph{Data Augmentation.}
Various domains have utilized prior knowledge for data augmentation. Modern techniques include generative adversarial networks \citep{bowles2018gan, tanaka2019data} and diffusion models \citep{trabucco2024effective}. \citet{myronenko20183d} utilized an auto-encoder with built-in randomness to regularize a model for brain-tumor segmentation to incorporate the prior belief that brain tumors are not location-specific. SMOTE \citep{chawla2002smote} uses data augmentation to address undersampling. 

\paragraph{Incorporating Causal Constraints.}
A series of works has explored incorporating successively increasing amounts of causally-motivated constraints into regression. Early work by \citet{chaudhuri2007estimation} investigated improvements in predictive regression tasks by zeroing entries of covariance matrices based on known \emph{marginal} independencies. \citet{teshima2021incorporating} expanded this notion to \emph{conditional} independencies by incorporating information about the full MEC implied by a causal structure. This work also shifted from covariance matrix modification to data augmentation, thereby relaxing the approach to non-Gaussian and non-linear settings. Still, this approach did not incorporate all causal information, omitting causal directions that are unspecified by an MEC. Most recently, augmenting datasets using the underlying (causal) Bayesian network has improved fine-tuning of tabular foundation models in low-data regimes~\citep{buhler2026causal}. This approach was developed for a more specialized setting, but aligns with our motivation to use the entire causal structure for augmentation. Our approach eliminates the need for density estimation, used in \citet{buhler2026causal} and \citet{teshima2021incorporating}. The approach can be thought of as a data-augmentation version of projecting distributions into a causal structure, a notion introduced by \citet{mazaheri2025meta}.

\paragraph{Causal Structure and Predictive Models.}
Having access to the causal structure was shown to be beneficial in many cases. This includes improving robustness to distribution shifts~\citep{lu2021invariant, heinze2021conditional, magliacane2018domain, rojas2018invariant, mazaheri2023causal},  feature selection via Markov blankets \citep{tsamardinos2003towards, yu2020causality}, and measuring fairnes~\citep{makhlouf2024causality, maasch2025local}. All of these approaches are orthogonal to our own in that they model invariances that help build more robust machine learning models, but do not directly incorporate that information into the data. Such works motivate our goal of incorporating this information directly into the dataset.

\paragraph{Other Related Works.}
Our work should not be confused with ``causal boostrapping,'' proposed by \citet{little2019causal}, which involves re-sampling interventional distributions from observational ones.

\section{Preliminaries}

\subsection{Notation Conventions}
We will use the capital Roman alphabet (e.g., $X, Y, A, B$) to denote endogenous (observed) random variables and the lowercase Roman alphabet to denote instantiations of those random variables (e.g., $x_i$ is a data measurement of $X$). $\varepsilon_i$ will generally be used to represent random variables in the form of exogenous noise.

We will encounter two types of sets. Bold font will indicate sets of random variables, i.e. $\bvec{V} = \{V_1, V_2, \ldots, V_n\}$. Caligraphic font will generally be used to indicate sets of assignments (e.g., a dataset) to those random variables. For example, $\mathcal{V_i} = \{v_i^{(1)}, v_i^{(2)}, \ldots v_i^{(m)}\}$.

This paper will make use of directed acyclic graphs (DAGs), e.g. $\mathcal{G} = (\bvec{V}, \bvec{E})$, for which we will need to reference parent sets. The vertices of these graphs will usually be indexed as $\bvec{V} = \{V_1, V_2, \ldots, V_n\}$. We denote the parent set with respect to $\G$ with edges $\bvec{E}$ using $\PA(V_j) = \{V_i : (V_i, V_j) \in \bvec{E}\}$.

\subsection{Structural Causal Models}
A \emph{structural causal model} (SCM) is a set of random variables $\bvec{V}$ and a DAG $\mathcal{G} = (\bvec{V}, \bvec{E})$ on those random variables, representing causal dependencies \citep{pearl2009causality}. For example, $A \rightarrow B$ indicates that $A$ causes $B$.

Graphical properties of $\G$ imply conditional independence properties on the random variables. D-separation rules, given in \citet{pearl2009causality}, specify these graphical properties for conditional independence. Constraint-based causal discovery algorithms utilize these properties to ascertain graphical information using statistical tests for independence \citep{spirtes2000causation}.

Conditional independencies do not form a one-to-one mapping with causal DAGs; in fact, many causal DAGs can correspond to the same set of conditional independence properties. For this reason, many causal discovery algorithms only aim to recover these ``Markov equivalence classes'' (MECs). In general, all of the graphs in a MEC contain the same ``undirected skeleton,'' formed by replacing all directed edges $(V_i, V_j)$ with undirected edges $\{V_i, V_j\}$. In this sense, we can conclude that conditional independence contains full information about causal adjacency (the presence or lack of a causal linkage), but incomplete information about the directions of those relationships. This incomplete information is often represented in an ADMG called a completed partial DAG (CP-DAG) \citep{spirtes2000causation}. For example, an undirected edge between $A,B$ means that either $A \rightarrow B$ or $B \rightarrow A$, but we do not know which one.

\subsection{Structural Equation Models}
Parametric assumptions have been utilized further to resolve the output of causal discovery into a full causal DAG. In order to make these assumptions, we must augment SCMs with functional forms for the relationships between their random variables,
\begin{equation} \label{eq: structural equations}
    V_i = f_i(\PA(V_i), \varepsilon_i).
\end{equation}
$V_i$ is therefore generated using a function of $\PA_i$, the direct causes of $V_i$, and it's own source of exogeneous noise $\varepsilon_i$. It is common to assume that the exogenous noise sources are independent, i.e., $\varepsilon_i \indep \varepsilon_j$ for all $i,j$.

A common assumption on Eq.~\eqref{eq: structural equations} is additive noise, i.e.,
\begin{equation} \label{eq:anm}
    V_i = f_i(\PA(V_i)) + \varepsilon_i.
\end{equation}
In this framework, full identifiability of a causal model has been shown for linear $f_i(\cdot)$ and non-Gaussian $\varepsilon_i$ (LiNGAM) \citep{shimizu2014lingam} and non-linear $f_i(\cdot)$ with Gaussian $\varepsilon_i$s \citep{HoyerJanzingMooijPetersScholkopf2008, PetersMooijJanzingScholkopf2014}.

\section{Causally Constrained Regression} \label{sec: causal constraints}

In this section, we summarize our theoretical results in Ap.~\ref{sec: theory} on the value of incorporating causal constraints into predictive regressions on observational data. For now, we only focus on the incorporation of the constraints within regression. Later (and in Ap.~\ref{subsec: eq to constrained MLE}), we show that our data augmentation method CRB, asymptotically approaches this constrained regression.

The principle assumption of our theoretical results is that the data-generating process follows a linear Gaussian structural causal model (SCM) as in Eq.~\eqref{eq:anm}, where $f_i$ are linear functions and $\varepsilon_i$ are Gaussian noise variables with variance $\Var(\varepsilon_i) > 0$, and that the true causal DAG $\mathcal{G} = (\bvec{V}, \bvec{E})$ is known. The full set of assumptions is stated formally in Ap.~\ref{sec: theory}. We emphasize that violations of linearity and Gaussianity are heavily tested in our empirical sections.

We compare two approaches to fitting a multivariate Gaussian distribution to the given training data:
\begin{enumerate}
    \item \textbf{Unconstrained estimation:} We fit a full multivariate Gaussian by computing the empirical covariance matrix $\hat{\boldsymbol{\Sigma}}_{\text{full}}$ without imposing any structural constraints.
    
    \item \textbf{DAG-constrained estimation:} We fit a multivariate Gaussian distribution $\hat{\boldsymbol{\Sigma}}_{\text{DAG}}$ that respects the conditional independence structure implied by the graph $\mathcal{G}$. Specifically, for any sets of variables $A$, $B$, and $S$, if $A \indep B \mid S$ holds in $\mathcal{G}$ according to d-separation, then the fitted distribution must also satisfy $A \indep B \mid S$.
\end{enumerate}

From the estimated covariance matrix, we can derive the regression coefficients for predicting a target variable $Y$ from features $\mathbf{X}$:
\begin{equation}
    \boldsymbol{\beta} = \boldsymbol{\Sigma}_{\mathbf{XX}}^{-1}\boldsymbol{\Sigma}_{\mathbf{X}\mathbf{Y}}.
\end{equation}
In the unconstrained case, this is equivalent to standard ordinary least squares (OLS) regression.

Enforcing conditional independence constraints directly on the covariance matrix parameterization is non-trivial, as these constraints translate into complex nonlinear relationships among the entries of $\boldsymbol{\Sigma}$. However, such constraints are equivalent to setting certain parameters to zero in the decomposition of the \emph{precision matrix}. Our results show that this sparsity pattern induces zeros in the Fisher information matrix, thereby decreasing the asymptotic variance of the estimated parameters.

Specifically, we establish the following chain of results. First, the DAG-constrained estimator achieves lower variance in the Loewner ordering:
\begin{equation}
    \Cov(\hat{\boldsymbol{\Sigma}}_{\text{DAG}}) \preceq \Cov(\hat{\boldsymbol{\Sigma}}_{\text{full}}).
\end{equation}
This ordering transfers to the regression coefficients and, consequently, to the prediction error:
\begin{equation}
    \mathbb{E}[\mathrm{MSE}_{\text{pred}}^{\text{DAG}}] \leq \mathbb{E}[\mathrm{MSE}_{\text{pred}}^{\text{full}}].
\end{equation}

Moreover, since the MLE parameters are asymptotically distributed as $\hat{\boldsymbol{\theta}} \sim \mathcal{N}(\boldsymbol{\theta}, \frac{1}{N}\mathcal{I}^{-1})$, where $\mathcal{I}$ is the Fisher information matrix, we can quantify the expected improvement. The difference in prediction MSE scales as:
\begin{equation}
    \mathbb{E}[\mathrm{MSE}_{\text{pred}}^{\text{full}}] - \mathbb{E}[\mathrm{MSE}_{\text{pred}}^{\text{DAG}}] = \frac{C}{N},
\end{equation}
where $C > 0$ is a constant that depends on the DAG structure and the true distribution, but is independent of the sample size $N$. 

Finally, we show that only DAG constraints involving the Markov boundary of the chosen label $Y$ contribute to the improvement in the prediction of $Y$. Although constraints among variables outside $\mathrm{MB}(Y)$ decrease the variance of the estimated distribution parameters, this reduction does not translate to lower prediction MSE.

\section{Causal-Residual Bootstrapping (CRB)} \label{sec: CRB}

In this section, we present a procedure that allows one to incorporate prior knowledge of a DAG by adding new augmented points to the dataset.
We do this by resampling residuals, leveraging the principle of independent mechanisms.
The proposed method uses regression models instead of more complicated generative models and can be applied to non-linear and non-Gaussian settings, unlike the constrained regression discussed in our theoretical results.

\subsection{Problem Setup and Input Specification}

The augmentation procedure takes as input:

\begin{itemize}
    \itemsep0em 
    \item A directed acyclic graph (DAG) $\mathcal{G} = (\bvec{V}, \bvec{E})$ representing prior knowledge that we have about the data generation process and that we want to incorporate.
    \item Dataset of values of $n$ observed datapoints $\mathcal{X}$.
    \item A regression model class $\mathcal{F}$ such that $f\in\mathcal{F}: \mathbb{R}^d \to \mathbb{R}, d \in \mathbb{N}$ (e.g., linear regression, random forests, neural networks) for the structural equations that can be fitted to data.
    \item $M$, the number of points to generate.
\end{itemize}

The procedure outputs a new dataset $\tilde{\mathcal{X}}$ with augmented samples.

\subsection{Augmentation Procedure}

The augmentation procedure consists of two main phases:

\textbf{Learning Phase:} We learn the causal mechanisms by:
\begin{enumerate}
    \item Computing a topological ordering $\pi$ of the DAG $\mathcal{G}$. 
    \item For each non-root variable $X_j$, we:
    \begin{itemize}
        \item Train a regression model $\hat{f}_j \in \mathcal{F}$ to predict $X_j$ from $\PA(X_j)$ using the data $\mathcal{V}$.
        \item Compute residuals $\hat{\varepsilon}_j^{(i)} = x_j^{(i)} - \hat{f}_j(\PA(x_j^{(i)}))$ for all samples $i = 1, \ldots, N$.
    \end{itemize}
    \item For root variables, we store the vector of all observations $v_j$.
\end{enumerate}

\textbf{Generation Phase:} To generate $N$ synthetic samples, we:
\begin{enumerate}
    \item For each synthetic sample $m = 1, \ldots, M$:
    \begin{itemize}
        \item For each variable $X_j$ in topological order $\pi$:
        \begin{itemize}
            \item If $X_j$ is a root node: sample $\tilde{x}_j^{(n)}$ from the vector $v_j$.
            \item Otherwise: compute $\tilde{x}_j^{(m)} = \hat{f}_j(\PA(\tilde{x}_j^{(m)})) + \tilde{\varepsilon}_j$, where $\tilde{\varepsilon}_j$ is randomly sampled from $\{\hat{\varepsilon}_j^{(i)}\}_{i=1}^{N}$.
        \end{itemize}
    \end{itemize}
\end{enumerate}

This procedure ensures that the generated data respects the structural equation framework specified by the assumed SEM/SCM. As such, both the conditional independence properties (implied by the d-separation conditions of the MEC) and the law of independent mechanisms (implied by causal direction) are enforced in the augmented data. The pseudocode is given in Alg.~\ref{alg:pseudocode} in Ap.~\ref{sec: pseudocode}.

\subsection{CRB as Constrained Maximum Likelihood}
\label{sec:crb_mle}

Under the assumptions of a linear SCM with Gaussian noise, when CRB uses linear regression to model each variable as a function of its parents, the method is asymptotically equivalent to performing global maximum likelihood estimation over the class of multivariate Gaussians constrained by the DAG structure, as described in Section~\ref{sec: causal constraints}. Specifically, as the number of generated samples grows large, the CRB estimates converge to the constrained MLE solution. The full derivation is provided in Ap.~\ref{subsec: eq to constrained MLE}.

Figure~\ref{fig:mse_improvement_validation} validates these theoretical predictions empirically. We tested two synthetic configurations: a simple chain ($A \to B \to C$) and a confounded structure ($A \to B \leftarrow D$, $B \to C$). In both cases, the observed MSE gap between unconstrained and DAG-constrained estimation follows the predicted $C/N$ decay discussed in Section~\ref{sec: causal constraints}, with larger improvements at smaller sample sizes.

\begin{figure}[htbp]
    \centering
    \includegraphics[width=\columnwidth]{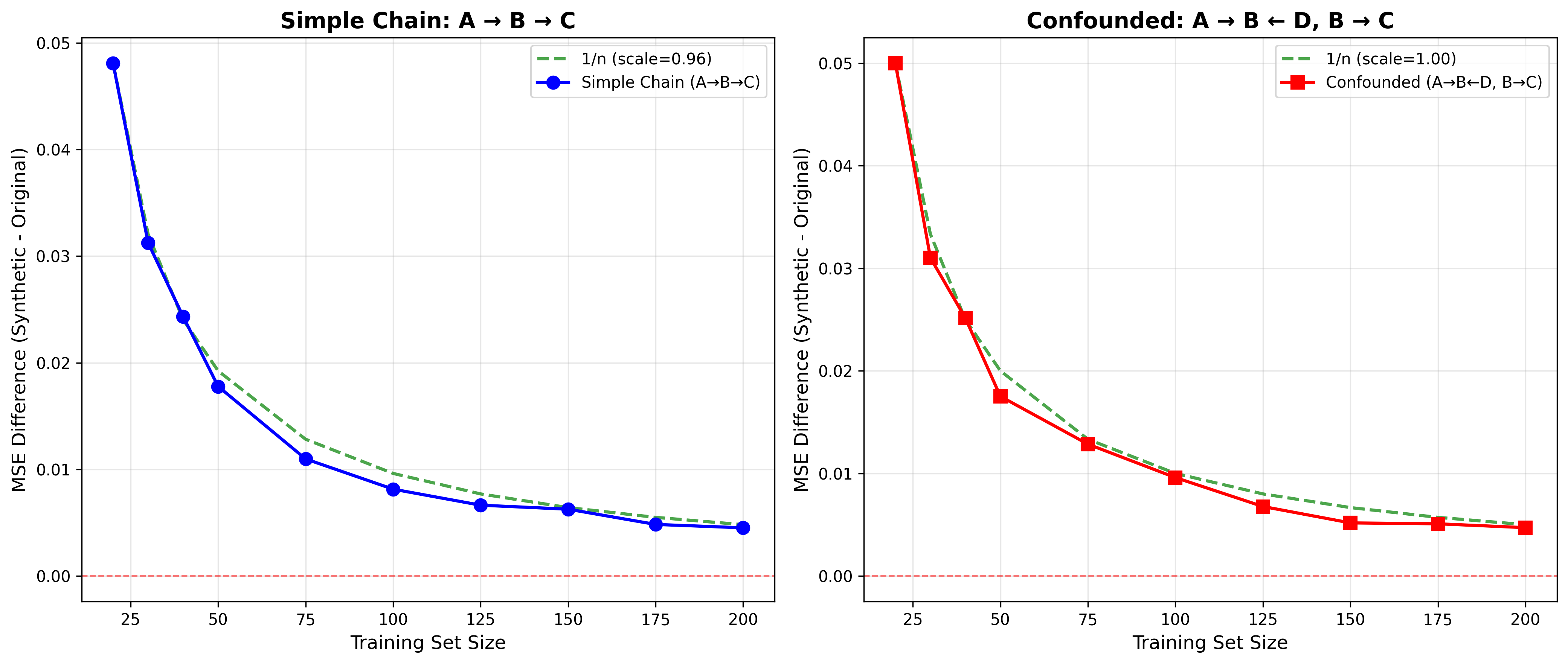}
    \caption{Empirical validation of MSE improvement rate. Left: Simple chain $A \to B \to C$. Right: Confounded structure $A \to B \leftarrow D$, $B \to C$. Both configurations show MSE improvement following the predicted $1/N$ decay. In both cases B is the predicted value}
    \label{fig:mse_improvement_validation}
\end{figure}

\section{Data Augmentation and Causal Structure}\label{sec: data augmentation and causal structure}

To demonstrate the necessity of the proposed causal-residual-bootstrapping approach, we investigate whether deep learning augmentation techniques can preserve information about causal structures. We evaluated this using two causal discovery algorithms: The PC algorithm \citep{spirtes2000causation}, which recovers causal structures up to their MECs, and direct LinGAM \citep{shimizu2011directlingam}, which recovers the \emph{full causal structure} in settings with linear structural equations and non-Gaussian noise. Good performance, as measured by the Structural Hamming Distance (SHD), is taken to indicate that the augmented data preserves information about the causal structure. Poor performance indicates that the augmentation technique fails to preserve the key properties of data generated by that structure. 

\subsection{Preservation of MECs}
We first tested the preservation of information pertaining to the \emph{Markov equivalence class}. We evaluated three generative methods: (1) Diffusion Models, (2) Variational Autoencoders (VAEs), and (3) Generative Adversarial Networks (GANs). Tests were performed on $100$ toy datasets generated from linear Gaussian SEMs corresponding to distinct and known DAGs. For each method, new data points were generated using the trained augmentation model and appended to the original dataset. The PC algorithm was then applied to the augmented datasets to recover the causal graph. Further details for the experimental setup are given in Ap.~\ref{sec: implementation details for causal structure loss}.

Figure \ref{fig:shd_results} illustrates the structural accuracy of the recovered graphs as a function of the number of augmented points.
We observe that as the number of synthetic datapoints increases, the SHD worsens across all baseline methods. This indicates that while these methods may achieve high distributional fidelity (low FID), they dilute the causal structural signal inherent in the original data. In contrast to causal-residual-bootstrapping, these causally-agnostic approaches fail to preserve the conditional independence relations required for accurate causal discovery. Meanwhile, the performance of causal discovery is unaffected by augmentation from CRB, as indicated by a non-increasing SHD.

\begin{figure*} 
\hfill \begin{subfigure}[b]{0.23\textwidth} \centering \includegraphics[width=\textwidth]{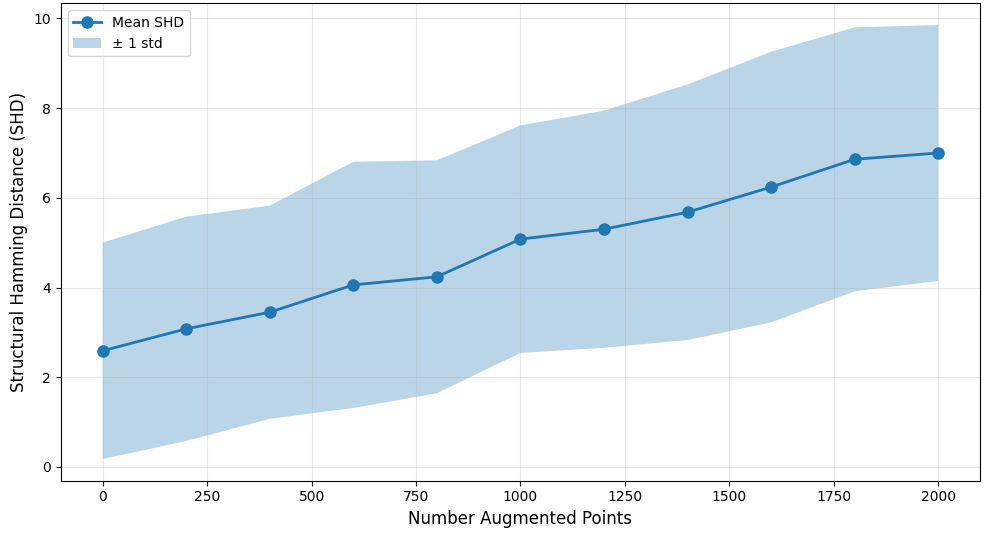}
\caption{Diffusion Model} \label{fig:diffusion} \end{subfigure} \hfill \begin{subfigure}[b]{0.23\textwidth} \centering  \includegraphics[width=\textwidth]{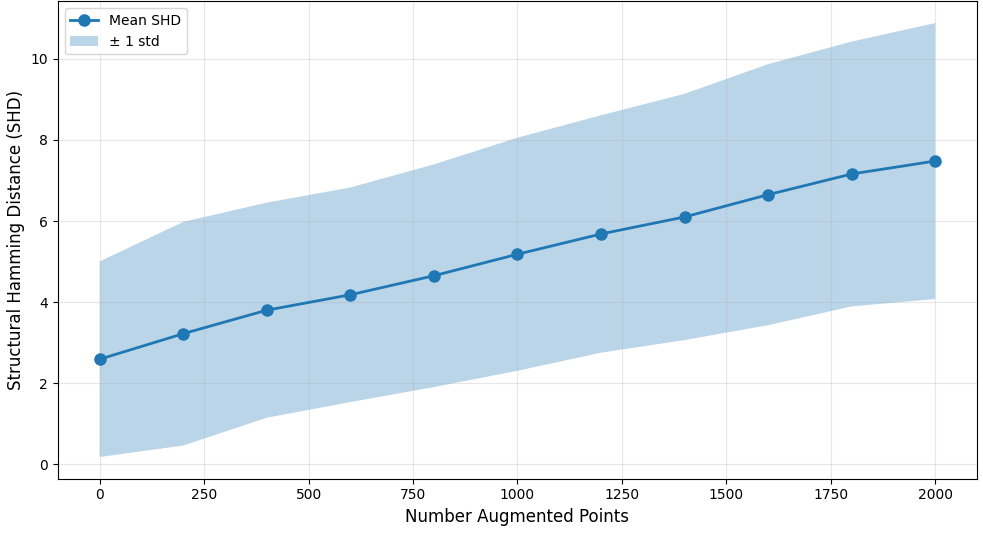}
\caption{Variational Autoencoder} \label{fig:vae} \end{subfigure} \hfill
\begin{subfigure}[b]{0.23\textwidth} \centering
\includegraphics[width=\textwidth]{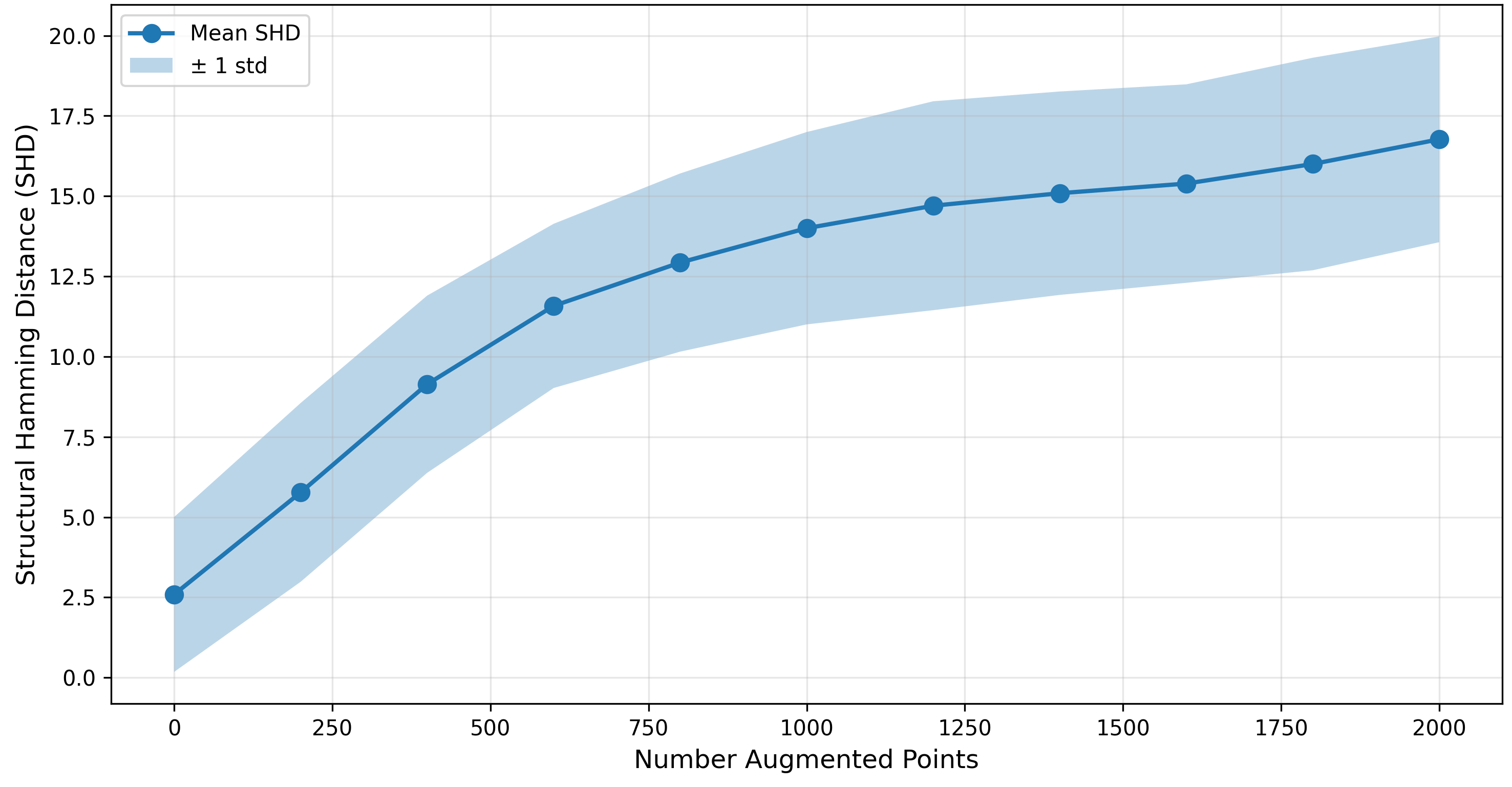}
\caption{GAN} \label{fig:gan} 
\end{subfigure}
\begin{subfigure}[b]{0.23\textwidth} \centering
\includegraphics[width=\textwidth]{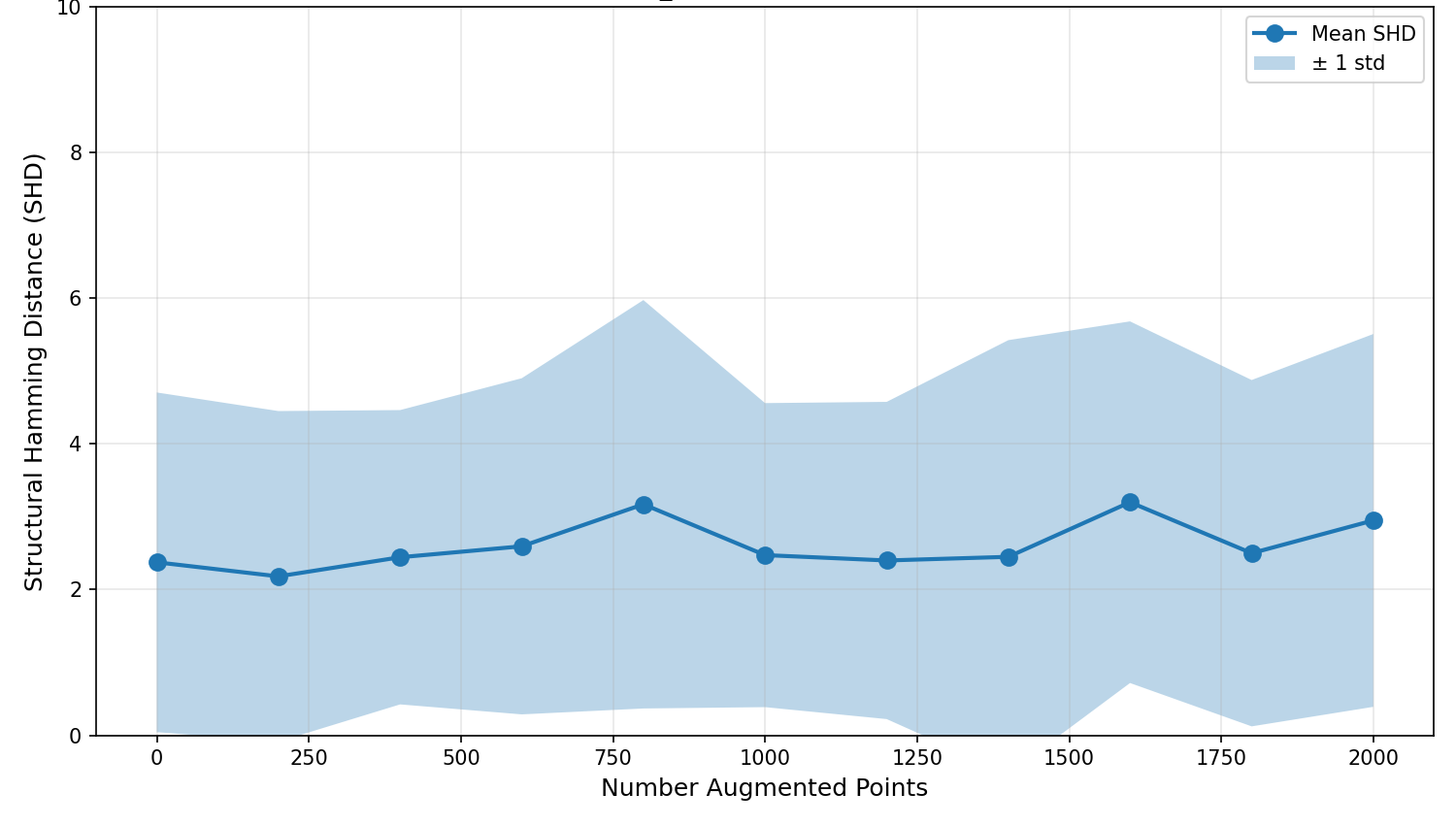}
\caption{CRB} \label{fig:crb}
\end{subfigure}

\caption{Performance of the PC algorithm on datasets augmented by alternative methods. The plots show the average Structural Hamming Distance (SHD) between the true DAG and the estimated CPDAG returned by the PC algorithm as the number of augmented points increases; the shaded region indicates one standard deviation. Higher SHD indicates worse performance.}
\label{fig:shd_results}
\end{figure*}

\subsection{Preservation of Full Causal Structure}
While linear models with Gaussian additive noise cannot be recovered beyond their MECs, additional information is contained in non-Gaussian settings. Fig.~\ref{fig:directLinGAM} shows the performance of the direct LinGAM algorithm on data augmented with CRB and \citet{teshima2021incorporating}'s ADMGTian approach. CRB continues to preserve all of the causal structure, while data added by ADMGTian weaken the algorithm's ability to orient causal direction despite retaining conditional independencies. This demonstrates the added value of shifting from a purely MEC-based augmentation technique like ADMGTian to CRB.

\begin{figure}[htbp]
\centering
\begin{subfigure}[t]{0.48\columnwidth}
    \centering
    \includegraphics[width=\textwidth]{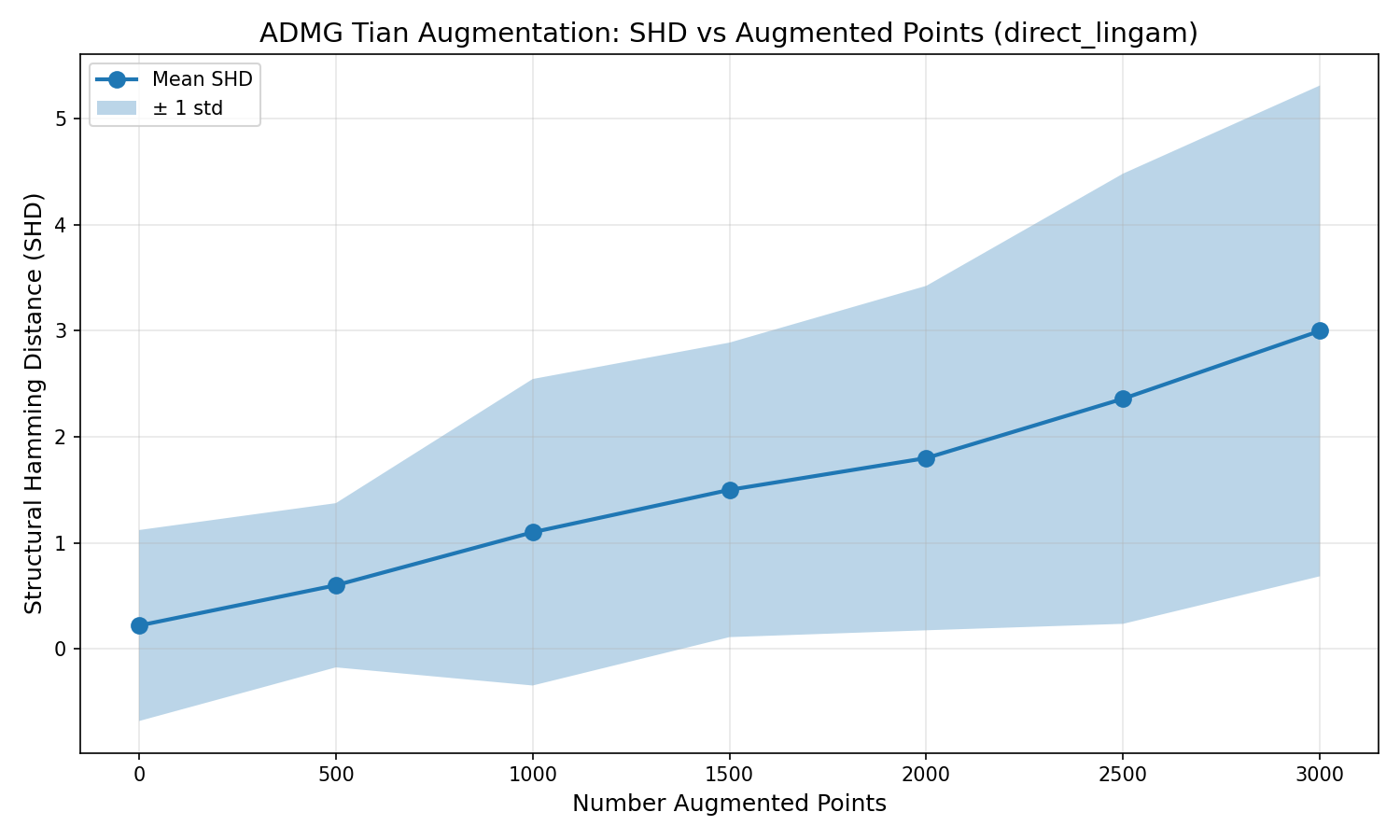}
    \caption{ADMGTian}
    \label{fig:lingam_admg_tian}
\end{subfigure}
\hfill
\begin{subfigure}[t]{0.48\columnwidth}
    \centering
    \includegraphics[width=\textwidth]{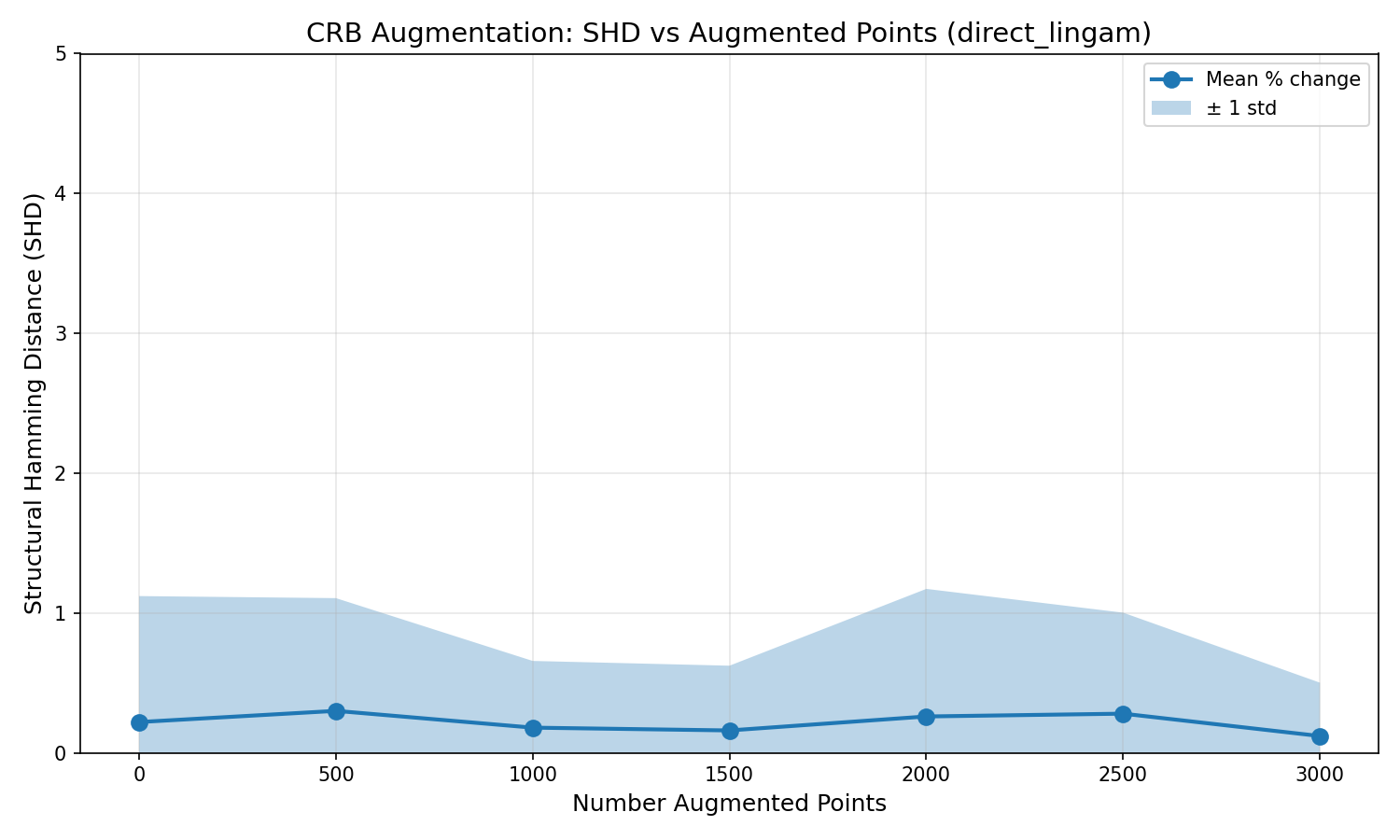}
    \caption{CRB}
    \label{fig:lingam_carb}
\end{subfigure}
\caption{Performance of DirectLINGAM algorithm.}
\label{fig:directLinGAM}
\end{figure}

\section{Empirical Validation} \label{sec: real data experiments}

In this section, we empirically evaluate the CRB method. Our experiments fall into two categories: (1) settings where the true causal graph is known and we seek to incorporate this structural information into the dataset, and (2) settings where we assess CRB as a principled approach to data augmentation, comparing it against existing methods where the graph is unknown, but learnable from causal discovery.
 Additional synthetic data experiments are in Ap.~\ref{sec: synthetic data exp}.

\subsection{Experimental Setup}

\paragraph{Prediction Model.}
For all prediction tasks, we use XGBoost~\citep{chen2016xgboost}, a gradient boosted tree model well-suited for tabular data. Hyperparameters are tuned using Optuna~\citep{akiba2019optuna} to ensure good quality of prediction and flexibility across diverse datasets~\citep{shwartz2022tabular, grinsztajn2022tree}. See Ap.~\ref{app:hyperparams} for details. Ap.~\ref{app:nn_results} includes additional experiments with neural networks.
\paragraph{Augmentation Methods.}
We compare CRB against several baseline augmentation methods: CTGAN~\citep{bowles2018gan}, a GAN-based approach for tabular data; TVAE~\citep{xu1907modeling}, a variational autoencoder adapted for tables; TabDDPM~\citep{kotelnikov2023tabddpm}, a diffusion-based generative model; ARF~\citep{watson2023adversarial}, which uses adversarial random forests for density estimation; NFLOW~\citep{durkan2019neural}, a normalizing flow approach. We also compare against ADMGTian~\citep{teshima2021incorporating}, the prior method that incorporates causal structure into augmentation. Additionally, we include a no-augmentation baseline where the predictor is trained solely on the original data.
All baseline were grid seached for every sample size and dataset; the details of which can be found in Ap.~\ref {app:hyperparams}

\subsection{Augmentation with Known Causal Graphs}

Our first experiment investigates whether incorporating known causal structure into data augmentation can improve downstream prediction accuracy. In this setting, the direct competitor to our method is ADMGTian~\citep{teshima2021incorporating}, which also leverages causal graph information. We additionally include non-causal augmentation methods (CTGAN, TVAE, DDPM, ARF, NFLOW) to verify that any observed improvements stem from the principled use of causal structure rather than simply from increasing the number of training points.

\textbf{Evaluation Protocol.} 
For evaluation, we focus on the prediction accuracy of downstream regression tasks as measured by mean squared error (MSE).
Apart from practical use to improve prediction, \citet{jiang2025tabstruct} utilized the average MSE across all possible label choices as a measure of augmentation quality. Hence, we train many models on each dataset to predict each variable using the remaining variables as covariates. Predictors are trained on the augmented data, for which we evaluate mean squared error (MSE) on held-out test data.

\textbf{Dataset: Causal Chambers.} A fundamental challenge in evaluating causal data augmentation methods is obtaining real-world datasets where the true causal graph is known. This is a surprisingly difficult requirement~\citep{brouillard2024landscape}, as most datasets lack absolute knowledge of a ground-truth graph and rely on expert opinions that may contain errors. To address this challenge, we utilize data from the Causal Chamber~\citep{gamella2025causal}, a physical experimental platform specifically designed for causal research. The Causal Chamber consists of modular hardware components---including light sources, sensors, polarizers, and other optical elements---whose physical interactions define a known causal structure. Because the data-generating process is governed by well-understood physical laws, the causal graph is determined by the experimental apparatus itself rather than by statistical estimation or expert judgment. This provides a unique opportunity to evaluate causal methods against ground truth, but it is worth noting that similar settings emerge in many engineering applications.

\textbf{Results.} Figure~\ref{fig:known_graph_mean_mse} presents the mean MSE across all variables for each augmentation method using 100 training samples. CRB achieves the lowest mean MSE, outperforming both the no-augmentation baseline and all competing methods. Notably, ADMG-Tian and the other non-causal methods (ARF, CTGAN, TVAE) perform worse than no augmentation, suggesting that naive augmentation without causal constraints can degrade predictive performance.

\begin{figure}[htbp]
    \centering
    \includegraphics[width=\columnwidth]{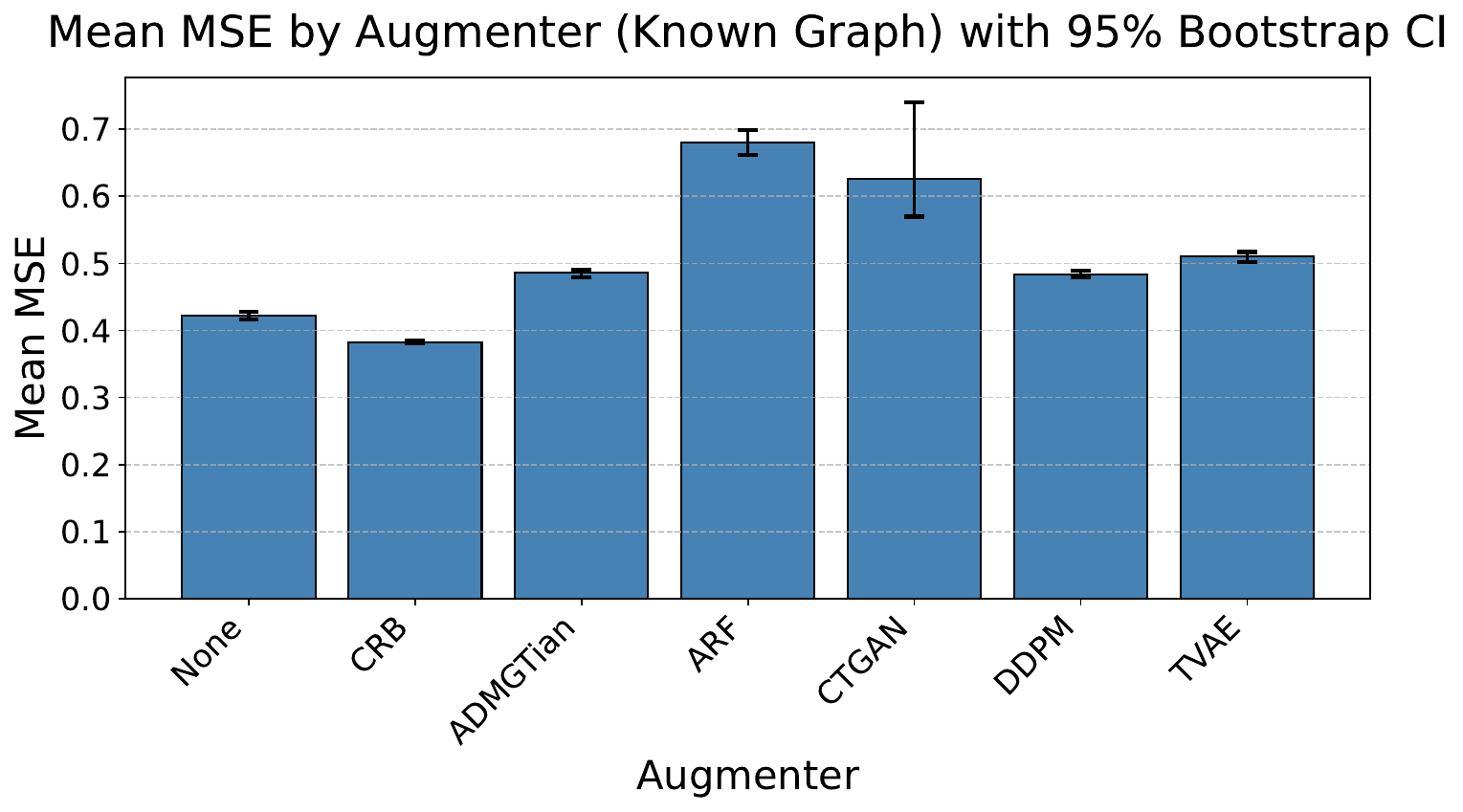}
    \caption{Mean MSE across all variables by augmentation method (Known Graph, $n=100$). Lower is better. CRB achieves the best performance, while non-causal augmenters often increase error relative to no augmentation.}
    \label{fig:known_graph_mean_mse}
\end{figure}

The improvement from CRB is not uniform across variables. Figure~\ref{fig:known_graph_variables} illustrates two contrasting cases: the \texttt{blue} variable, where CRB substantially reduces MSE compared to all other methods, and the \texttt{ir\_1} variable, where improvements are more modest. Table~\ref{tab:known_graph_improvement} summarizes the best and worst per-variable performance for each augmenter relative to no augmentation. CRB shows the largest improvement on its best variable (\texttt{green}, MSE reduced by 0.23) while exhibiting minimal degradation on its worst variable (\texttt{ir\_3}, MSE increased by only 0.006). In contrast, other methods show substantial degradation on some variables---for example, ARF increases MSE by 0.89 on \texttt{angle\_1}. Full per-variable results are provided in Ap.~\ref{app:per_variable}. 

\begin{figure}[htbp]
    \centering
    \includegraphics[width=\columnwidth]{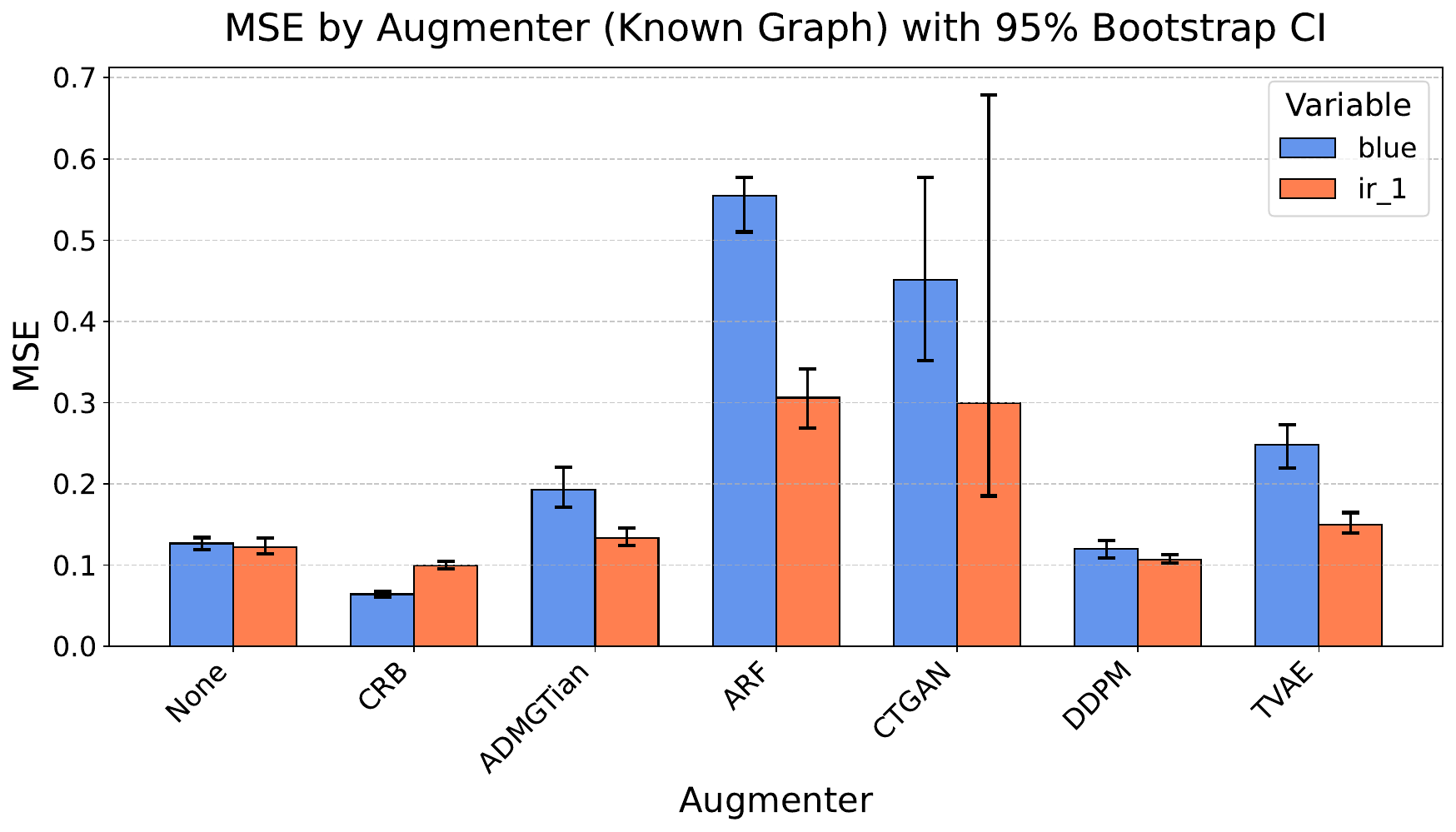}
    \caption{Per-variable MSE comparison (Known Graph, $n=100$).}
    \label{fig:known_graph_variables}
\end{figure}


\begin{table}[htbp]
\centering
\caption{Best and worst per-variable performance relative to no augmentation. ``Improv.'' corresponds to MSE reduction (positive = better) on the best performing variable; ``Worsen.'' corresponds to MSE increase on the worst-performing variable.}
\label{tab:known_graph_improvement}
\scalebox{.8}{
\begin{tabular}{lcccc}
\toprule
Method & Best Var. & Improv. & Worst Var. & Worsen. \\
\midrule
CRB & green & 0.227 & ir\_3 & 0.006 \\
ADMG-Tian & angle\_1 & 0.000 & l\_21 & 0.169 \\
ARF & l\_12 & 0.118 & angle\_1 & 0.889 \\
CTGAN & l\_12 & 0.001 & green & 0.451 \\
DDPM & red & 0.016 & l\_11 & 0.244 \\
TVAE & current & $-$0.024 & green & 0.221 \\
\bottomrule
\end{tabular}}
\end{table}

\subsection{Scaling}
Our theoretical results for linear models suggest that the improvement from incorporating causal structure diminishes as the number of training samples grows. To empirically validate this prediction, we conducted experiments on the Causal Chamber dataset with varying sample sizes.
As shown in Fig.~\ref{fig:known_graph_samples_mean_mse}, this intuition from simple models aligns with observations from the real-world dataset. CRB outperforms no augmentation for small data sizes, and the difference gradually decreases for larger datasets. 
On the other hand, the ADMGTian method fails to outperform no augmentation for all sample sizes.
\begin{table*}[htbp]
\centering
\caption{Mean MSE ($\downarrow$) across datasets with learned causal graph (100 samples). 95\% CI in parentheses. Full results in Ap.~\ref{app:full_tables}.}
\label{tab:mse_summary}
\small
\begin{tabular}{lccccc}
\toprule
Method & Boston & Sachs & Wine (red) & Wine (white) & Causal Chamber \\
\midrule
ADMGTian & 1.886 {\tiny (1.45, 2.13)} & \textbf{0.596} {\tiny (0.54, 0.67)} & 0.831 {\tiny (0.79, 0.86)} & 0.874 {\tiny (0.83, 0.93)} & 0.474 {\tiny (0.47, 0.48)} \\
ARF & 0.625 {\tiny (0.59, 0.70)} & 0.830 {\tiny (0.81, 0.85)} & 0.981 {\tiny (0.92, 1.02)} & 1.034 {\tiny (1.03, 1.04)} & 0.738 {\tiny (0.71, 0.76)} \\
CTGAN & 0.612 {\tiny (0.58, 0.66)} & 0.876 {\tiny (0.81, 0.95)} & 0.769 {\tiny (0.73, 0.80)} & 0.912 {\tiny (0.86, 1.02)} & 0.579 {\tiny (0.55, 0.61)} \\
CRB & 0.457 {\tiny (0.45, 0.47)} & \textbf{0.766} {\tiny (0.55, 1.50)} & 0.635 {\tiny (0.62, 0.65)} & \textbf{0.689} {\tiny (0.67, 0.71)} & \textbf{0.383} {\tiny (0.38, 0.38)} \\
DDPM & \textbf{0.371} {\tiny (0.36, 0.39)} & \textbf{0.545} {\tiny (0.48, 0.71)} & \textbf{0.586} {\tiny (0.58, 0.60)} & 0.757 {\tiny (0.73, 0.78)} & 0.483 {\tiny (0.48, 0.49)} \\
NFLOW & 0.489 {\tiny (0.48, 0.51)} & 0.783 {\tiny (0.72, 0.85)} & 0.668 {\tiny (0.63, 0.72)} & 0.799 {\tiny (0.77, 0.84)} & 0.537 {\tiny (0.50, 0.60)} \\
None & 0.364 {\tiny (0.35, 0.38)} & 0.998 {\tiny (0.99, 1.00)} & 0.587 {\tiny (0.58, 0.60)} & 0.679 {\tiny (0.67, 0.68)} & 0.422 {\tiny (0.42, 0.43)} \\
TVAE & 0.531 {\tiny (0.50, 0.57)} & 0.795 {\tiny (0.72, 0.86)} & 0.764 {\tiny (0.73, 0.80)} & 0.837 {\tiny (0.80, 0.90)} & 0.543 {\tiny (0.53, 0.55)} \\
\bottomrule
\end{tabular}
\end{table*}

\begin{figure}[htbp]
    \centering
    \includegraphics[width=\columnwidth]{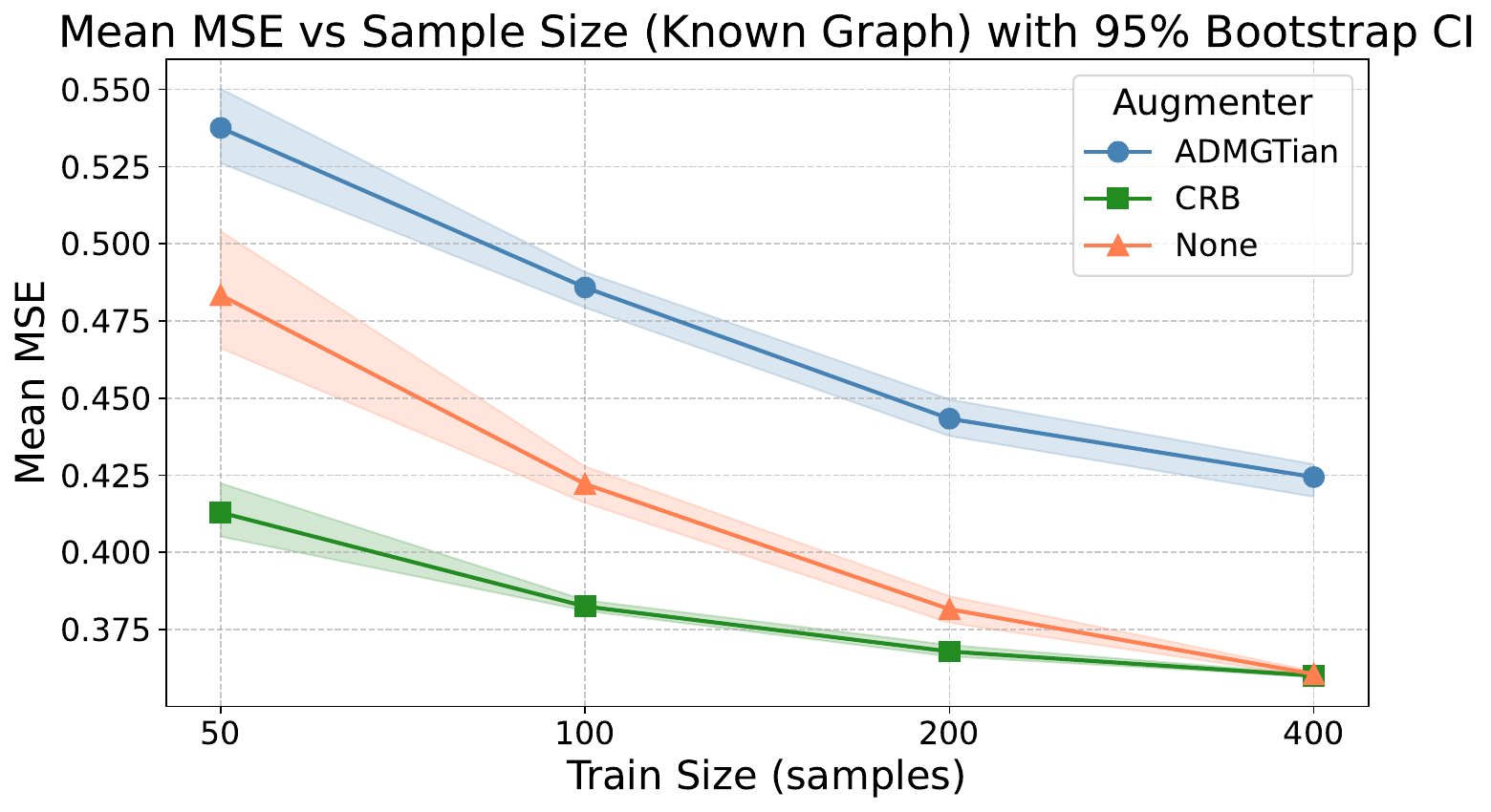}
    \caption{Mean MSE across all variables as sample size increases (Known Graph). Lower is better. CRB maintains strong performance across sample sizes.}
    \label{fig:known_graph_samples_mean_mse}
\end{figure}
\subsection{Unknown graph}
In this section, we evaluate the setting where the true causal graph is unknown and must be learned from data. Our goal is to demonstrate that CRB can serve as an effective synthetic data generation method, comparable to the best data augmentation approaches, even when relying on discovered rather than known causal structure. For these tests, we evaluate our metrics on new data points generated by the augmentation method, excluding the original dataset. The ``no augmentation'' case includes only the original data. 

For both CRB and ADMG-Tian, we employ a causal discovery step prior to augmentation. Following \citet{teshima2021incorporating}, we use the DirectLiNGAM~\citep{shimizu2011directlingam} algorithm to learn the causal graph from the observed data (details in Ap.~\ref{app:cd_details}). We continue to use the Causal Chamber dataset, as we know the underlying system satisfies the assumptions required by CRB: there exists a true causal graph, and the data contain no unobserved confounders or self-loops. Additionally, to facilitate comparison with prior work, we evaluate on several widely recognized datasets used by \citet{teshima2021incorporating}: Sachs~\citep{sachs2005causal}, Boston Housing~\citep{harrison1978hedonic}, Red Wine Quality, and White Wine Quality~\citep{wine_quality_186}.

Table~\ref{tab:mse_summary} presents the mean MSE across all variables for $100$ samples. The results for bigger samples sizes and distribution metrics are in Ap.~\ref{app:full_tables}. Overall, CRB and TabDDPM are the two best-performing methods, aligning with previous observations~\citep{jiang2025tabstruct}. On Causal Chambers, CRB achieves the best performance (0.383), outperforming all methods, including TabDDPM (0.483) and no augmentation (0.422). The same holds for White Wine, where CRB outperforms all augmentation methods and performs comparably to no augmentation. For Red Wine, TabDDPM is the best augmenter, performing on par with no augmentation, with CRB as the second best. On Sachs, all augmentation methods help, with TabDDPM, CRB, and ADMG-Tian performing best. On Boston, TabDDPM performs best with CRB again in second place.

\section{Conclusion}

We provide a simple approach to augmenting data using a known causal structure. This approach helps integrate known causal knowledge into a dataset, either from previous experiments or a learned causal structure (thereby integrating the assumption of an existing causal structure).

Our extensive empirical results show promising improvement in prediction tasks. Compared to the current state of the art, our approach is a more-consistent method for incorporating causal knowledge into data augmentation that does not require density estimation. The downside of our approach is its reliance on trained models, whose performance declines with weaker signals.

The results from experiments with learned causal graphs highlight important considerations when combining causal discovery with data augmentation. Errors or inaccuracies in the learned graph structure can significantly limit the benefits of structure-based augmentation, as incorrect edge orientations or missing edges may violate the independence assumptions on which our method relies. As methods for causal discovery improve, we expect causal data augmentation to become significantly more useful. Furthermore, different causal discovery methods operate under varying assumptions. These underlying assumptions can interact with the augmentation procedure in complex and sometimes unpredictable ways, because the learned graph may be valid only under the specific conditions assumed by the discovery algorithm.
It is not immediately obvious which causal discovery algorithms are optimal for data augmentation. Understanding these interactions and identifying which properties of learned causal graphs are most critical for effective augmentation are interesting and important areas for future work.

Data augmentation that preserves and reinforces causal structure has potentially vast applications for personal privacy because of its ability to generate synthetic points with no direct personal connections while retaining both inter-variable relationships and causal structure.

\section*{Acknowledgements}
Support for Bijan Mazaheri and Sophia Xiao was provided by the Advanced Research Concepts (ARC) COMPASS program, sponsored by the Defense Advanced Research Projects Agency (DARPA) under agreement
number HR001-25-3-0212.

\section*{Impact Statement}

This paper presents work whose goal is to advance the field of machine learning. There are many potential societal consequences of our work, none of which we feel must be specifically highlighted here. Use of the Boston Housing Dataset \citep{harrison1978hedonic} has been criticized. However, since the dataset is used \citet{teshima2021incorporating}, we elected to also include it for the sake of comparison.

\bibliography{refs}


\clearpage
\appendix
\thispagestyle{empty}

\onecolumn
\section*{Supplementary Materials}

\section{Algorithm Pseudocode} \label{sec: pseudocode}

Alg.~\ref{alg:pseudocode} provides the pseudocode for the CRB algorithm.

\begin{algorithm}
\caption{Causal Data Augmentation}
\begin{algorithmic}[1]
\REQUIRE DAG $\mathcal{G}$, dataset $\mathcal{V}$, regression model class $\mathcal{F}$, number of synthetic samples $M$
\ENSURE Synthetic dataset $\tilde{\mathcal{X}} = \{\tilde{\mathbf{x}}^{(1)}, \ldots, \tilde{\mathbf{x}}^{(M)}\}$

\STATE \textbf{// Learning Phase}
\STATE $\pi \gets \text{TopologicalSort}(\mathcal{G})$ \COMMENT{Get causal ordering}
\FOR{each variable $X_j \in \bvec{V}$ in order $\pi$}
    \STATE $\PA_j \gets \text{Parents}(X_j, \mathcal{G})$
\IF{$|\PA_j| = 0$} 
        \STATE $\text{Dist}_j \gets \{x_j^{(1)}, \ldots, x_j^{(N)}\}$ \COMMENT{Store empirical distribution}
    \ELSE 
        \STATE $\mathbf{X}_{\PA_j} \gets [\PA_j \text{ values from } \mathcal{D}]$ \COMMENT{Parent features}
        \STATE $\mathbf{y}_j \gets [x_j^{(1)}, \ldots, x_j^{(N)}]$ \COMMENT{Target values}
        \STATE $\hat{f}_j \gets \text{Train}(\mathcal{F}, \mathbf{X}_{\PA_j}, \mathbf{y}_j)$ \COMMENT{Learn regression model}
        \STATE $\hat{\mathbf{y}}_j \gets \hat{f}_j(\mathbf{X}_{\PA_j})$ \COMMENT{Predictions}
        \STATE $\hat{\boldsymbol{\varepsilon}}_j \gets \mathbf{y}_j - \hat{\mathbf{y}}_j$ \COMMENT{Compute residuals}
    \ENDIF
\ENDFOR

\STATE
\STATE \textbf{// Generation Phase}
\STATE $\tilde{\mathcal{D}} \gets \emptyset$
\FOR{$m = 1$ to $M$}
    \STATE $\tilde{\mathbf{x}}^{(m)} \gets$ empty vector
    \FOR{each variable $X_j \in \bvec{V}$ in order $\pi$}
\IF{$|\PA_j| = 0$} 
            \STATE $\tilde{x}_j^{(m)} \gets \text{Sample}(\text{Dist}_j)$ \COMMENT{Sample from marginal}
        \ELSE 
            \STATE $\tilde{\mathbf{x}}_{\PA_j}^{(m)} \gets [\text{parent values from } \tilde{\mathbf{x}}^{(m)}]$
            \STATE $\tilde{\varepsilon}_j \gets \text{Sample}(\hat{\boldsymbol{\varepsilon}}_j)$ \COMMENT{Sample residual}
            \STATE $\tilde{x}_j^{(m)} \gets \hat{f}_j(\tilde{\mathbf{x}}_{\PA_j}^{(m)}) + \tilde{\varepsilon}_j$ \COMMENT{Generate value}
        \ENDIF
    \ENDFOR
    \STATE $\tilde{\mathcal{D}} \gets \tilde{\mathcal{D}} \cup \{\tilde{\mathbf{x}}^{(m)}\}$
\ENDFOR

\RETURN $\tilde{\mathcal{D}}$
\end{algorithmic}
\label{alg:pseudocode}
\end{algorithm}

\section{Theoretical Analysis of Causal Structure in Regression} \label{sec: theory}

\subsection{Introduction and Setup}

In this section, we investigate how incorporating additional information in the form of graph knowledge can improve the quality of prediction for regression tasks. We present a theoretical analysis of this setting, establishing conditions under which knowledge of the causal DAG provably reduces estimation variance and, consequently, prediction error.

Throughout this section, we operate under the following assumptions:

\begin{assumption}[Linear Gaussian Structural Causal Model]
\label{assump:linear_gaussian_new}
The data-generating process follows a linear Gaussian structural causal model (SCM). That is, for each variable $V_j \in \bvec{V}$, for $j = 1, 2, \dots, n $, the corresponding structural equation has the form
\begin{equation}
V_j = \sum_{V_i \in \PA(V_j)} \beta_{ij} V_i + \varepsilon_j,
\end{equation}
where $\beta_{ij} \in \mathbb{R}$ are the structural coefficients and $\varepsilon_j \sim \mathcal{N}(0, \sigma_j^2)$ are mutually independent Gaussian noise terms. \\
This system can be written in matrix form as 
\begin{equation}
\mathbf{V} = \mathbf{B}\mathbf{V} + \boldsymbol{\varepsilon},
\end{equation}
where $\mathbf{B} \in \mathbb{R}^{d \times d}$ is a strictly lower-triangular matrix (under a topological ordering of the variables) of structural coefficients such that $B_{ji} = \beta_{ij}$ if $V_i \in \text{PA}(V_j)$ and $0$ otherwise, and $\boldsymbol{\varepsilon} \sim \mathcal{N}(\mathbf{0}, \boldsymbol{\Sigma_\varepsilon})$ is a multivariate Gaussian noise vector with covariance matrix $\boldsymbol{\Sigma_\varepsilon} = \text{diag}(\sigma_1^2, \dots, \sigma_d^2)$.
\end{assumption}

\begin{assumption}[Non-Degenerate Noise]
\label{assump:nonzero_noise_new}
All noise variances are strictly positive, i.e., $\sigma_j^2 > 0$ for all $j \in \{1, \ldots, n\}$. This ensures that each variable exhibits genuine stochastic variation beyond what is explained by its parents.
\end{assumption}

\begin{assumption}[Known Causal Structure]
\label{assump:known_dag_new}
We have access to the true causal DAG $\mathcal{G} = (\bvec{V}, \bvec{E})$ that generated the data. The graph correctly specifies all direct causal relationships: $(V_i, V_j) \in \bvec{E}$ if and only if $V_i$ is a direct cause of $V_j$ in the underlying SCM.
\end{assumption}

Under these assumptions, we establish three main results: (1) Causal-Residual Bootstrapping is asymptotically equivalent to constrained maximum likelihood estimation, (2) enforcing correct DAG constraints provably reduces parameter variance, and (3) this variance reduction translates to improved prediction accuracy as measured by mean squared error.

\subsection{Equivalence to Constrained Maximum Likelihood Estimation} \label{subsec: eq to constrained MLE}

We now show that Causal-Residual Bootstrapping with linear regression is equivalent to maximum likelihood estimation under the DAG-constrained model.

\subsubsection{Likelihood Factorization over the DAG}

Under the linear Gaussian SCM (Assumption~\ref{assump:linear_gaussian_new}), each conditional distribution takes the form:
\begin{equation}
V_j \mid \PA(V_j) \sim \mathcal{N}\left(\sum_{V_i \in \PA(V_j)} \beta_{ij} V_i, \sigma_j^2\right).
\end{equation}

By the Markov property induced by the DAG $\mathcal{G}$, the joint distribution factorizes according to the graph structure:
\begin{equation}
p(V_1, V_2, \ldots, V_n) = \prod_{j=1}^{n} p(V_j \mid \PA(V_j)).
\end{equation}

Taking logarithms, the log-likelihood decomposes as a sum over local conditional log-likelihoods:
\begin{equation}
\log p(V_1, V_2, \ldots, V_n) = \sum_{j=1}^{n} \log p(V_j \mid \PA(V_j)).
\end{equation}

Given a dataset $\mathcal{D} = \{(v_1^{(i)}, \ldots, v_n^{(i)})\}_{i=1}^{N}$ of $N$ i.i.d.\ observations, the total log-likelihood is:
\begin{equation}
\ell(\boldsymbol{\theta}; \mathcal{D}) = \sum_{i=1}^{N} \sum_{j=1}^{n} \log p(v_j^{(i)} \mid \mathbf{v}_{\PA(j)}^{(i)}; \boldsymbol{\theta}_j),
\end{equation}
where $\boldsymbol{\theta}_j = (\boldsymbol{\beta}_j, \sigma_j^2)$ denotes the parameters of the $j$-th conditional distribution, with $\boldsymbol{\beta}_j = (\beta_{ij})_{V_i \in \PA(V_j)}$.

\subsubsection{Decomposition into Independent Local Problems}

A key observation is that the parameters $\boldsymbol{\theta}_j$ for different variables $j$ appear in disjoint terms of the log-likelihood. Specifically, rearranging the sums:
\begin{equation}
\ell(\boldsymbol{\theta}; \mathcal{D}) = \sum_{j=1}^{n} \underbrace{\sum_{i=1}^{N} \log p(v_j^{(i)} \mid \mathbf{v}_{\PA(j)}^{(i)}; \boldsymbol{\theta}_j)}_{\ell_j(\boldsymbol{\theta}_j)}.
\end{equation}

Since $\boldsymbol{\theta}_j$ only appears in $\ell_j(\boldsymbol{\theta}_j)$, the global maximization problem separates into $n$ independent local problems:
\begin{equation}
\argmax_{\boldsymbol{\theta}} \ell(\boldsymbol{\theta}; \mathcal{D}) = \left\{ \argmax_{\boldsymbol{\theta}_j} \ell_j(\boldsymbol{\theta}_j) \right\}_{j=1}^{n}.
\end{equation}

\subsubsection{Local MLE Equals Least Squares Regression}

For the Gaussian conditional $V_j \mid \PA(V_j) \sim \mathcal{N}(\mathbf{v}_{\PA(j)}^\top \boldsymbol{\beta}_j, \sigma_j^2)$, the local log-likelihood is:
\begin{equation}
\ell_j(\boldsymbol{\beta}_j, \sigma_j^2) = -\frac{N}{2}\log(2\pi\sigma_j^2) - \frac{1}{2\sigma_j^2} \sum_{i=1}^{N} \left(v_j^{(i)} - \mathbf{v}_{\PA(j)}^{(i)\top} \boldsymbol{\beta}_j\right)^2.
\end{equation}

Maximizing with respect to $\boldsymbol{\beta}_j$ is equivalent to minimizing the sum of squared residuals:
\begin{equation}
\hat{\boldsymbol{\beta}}_j = \argmin_{\boldsymbol{\beta}_j} \sum_{i=1}^{N} \left(v_j^{(i)} - \mathbf{v}_{\PA(j)}^{(i)\top} \boldsymbol{\beta}_j\right)^2,
\end{equation}
which is precisely ordinary least squares (OLS) regression of $V_j$ on its parents $\PA(V_j)$.

The MLE for the noise variance is the empirical variance of the residuals:
\begin{equation}
\hat{\sigma}_j^2 = \frac{1}{N} \sum_{i=1}^{N} \left(v_j^{(i)} - \mathbf{v}_{\PA(j)}^{(i)\top} \hat{\boldsymbol{\beta}}_j\right)^2.
\end{equation}

\begin{proposition}[CRB Learning Phase Equals Constrained MLE]
\label{prop:crb_mle}
The learning phase of Causal-Residual Bootstrapping, which performs linear regression of each variable $V_j$ on its parents $\PA(V_j)$, computes the maximum likelihood estimates under the DAG-constrained linear Gaussian model.
\end{proposition}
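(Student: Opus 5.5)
The plan is to assemble the three ingredients already established in this subsection: the DAG factorization of the likelihood, its separation into independent local problems, and the identification of each local problem with OLS. First I will make precise what the ``DAG-constrained linear Gaussian model'' is. I take it to be the family of distributions $\mathbf{V} = \mathbf{B}\mathbf{V} + \boldsymbol{\varepsilon}$ with $\mathbf{B}$ supported on the edges of $\mathcal{G}$ and $\boldsymbol{\Sigma}_\varepsilon$ diagonal and positive (Assumptions~\ref{assump:linear_gaussian_new}--\ref{assump:known_dag_new}). This is parameterized exactly by $\boldsymbol{\theta} = (\boldsymbol{\theta}_1,\dots,\boldsymbol{\theta}_n)$ with $\boldsymbol{\theta}_j = (\boldsymbol{\beta}_j,\sigma_j^2)$. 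The parameter space is a Cartesian product $\prod_j (\mathbb{R}^{|\PA(V_j)|}\times(0,\infty))$, so there are no cross-constraints between the $\boldsymbol{\theta}_j$. This product structure is what licenses the separation step: once $\ell(\boldsymbol{\theta};\mathcal{D}) = \sum_j \ell_j(\boldsymbol{\theta}_j)$, maximizing over the product equals maximizing each summand separately.

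Second, I will solve each local problem explicitly. Write $\mathbf{X}_{\PA_j}$ for the $N\times|\PA(V_j)|$ design matrix. For fixed $\sigma_j^2>0$, $\ell_j$ is a strictly concave quadratic in $\boldsymbol{\beta}_j$ whenever $\mathbf{X}_{\PA_j}$ has full column rank. Its unique maximizer is $\hat{\boldsymbol{\beta}}_j = (\mathbf{X}_{\PA_j}^\top\mathbf{X}_{\PA_j})^{-1}\mathbf{X}_{\PA_j}^\top\mathbf{y}_j$, and it does not depend on $\sigma_j^2$. Profiling $\boldsymbol{\beta}_j$ out leaves $-\tfrac N2\log\sigma_j^2 - \mathrm{RSS}_j/(2\sigma_j^2)$, which is maximized at $\hat\sigma_j^2=\mathrm{RSS}_j/N$. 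For root nodes the parent set is empty (with a zero mean, or an intercept if means are included), and the MLE is just the empirical second moment or variance.

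Third, I will compare this with Algorithm~\ref{alg:pseudocode}. Its learning phase trains $\hat f_j$ by least squares on exactly $\mathbf{X}_{\PA_j}$ and $\mathbf{y}_j$, so $\hat f_j(\mathbf{x}) = \mathbf{x}^\top\hat{\boldsymbol{\beta}}_j$. The stored residuals $\hat{\boldsymbol{\varepsilon}}_j$ have empirical second moment $\hat\sigma_j^2$, so the CRB learning phase produces the MLE $\hat{\boldsymbol{\theta}}$. I will also note a consequence: the implied covariance $(\mathbf{I}-\hat{\mathbf{B}})^{-1}\hat{\boldsymbol{\Sigma}}_\varepsilon(\mathbf{I}-\hat{\mathbf{B}})^{-\top}$ is the DAG-constrained MLE $\hat{\boldsymbol{\Sigma}}_{\text{DAG}}$ of Section~\ref{sec: causal constraints}. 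This follows by invariance of the MLE under the bijective reparameterization $(\mathbf{B},\boldsymbol{\Sigma}_\varepsilon)\mapsto\boldsymbol{\Sigma}$ on the model. That map is a bijection because, for a fixed DAG, each $\boldsymbol{\beta}_j$ and $\sigma_j^2$ is recoverable from $\boldsymbol{\Sigma}$ as the population regression coefficients and residual variance.

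The main obstacle is not computational but definitional. I must argue that ``DAG-constrained'' in the d-separation sense used in Section~\ref{sec: causal constraints} coincides with the recursive SEM parameterization. I plan to invoke the standard equivalence between the global Markov property and the factorization $p=\prod_j p(V_j\mid\PA(V_j))$ for Gaussians: any Gaussian satisfying the d-separation constraints factorizes this way, with each conditional linear Gaussian. A second caveat concerns well-posedness. I will assume $N$ exceeds the largest parent-set size and that the design matrices are almost surely of full rank, which holds since the Gaussian joint has a density by Assumption~\ref{assump:nonzero_noise_new}; uniqueness of the maximizer then follows.
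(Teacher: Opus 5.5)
Your proposal is correct and follows the same route as the paper: factorize the likelihood over $\mathcal{G}$, split it into the local terms $\ell_j(\boldsymbol{\theta}_j)$, and identify each local maximizer with OLS of $V_j$ on its parents, which is exactly what the CRB learning phase computes. Your additions are rigor the paper's short proof leaves implicit, not a different argument: the product parameter space that licenses the separation, the profiled $\hat\sigma_j^2=\mathrm{RSS}_j/N$, the full-rank and existence conditions, and the link between the d-separation definition and the recursive SEM parameterization.
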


\begin{proof}
The CRB learning phase fits $\hat{f}_j$ by regressing $V_j$ on $\PA(V_j)$. With linear regression, this solves:
\[
\hat{\boldsymbol{\beta}}_j = \argmin_{\boldsymbol{\beta}_j} \sum_{i=1}^{N} \left(v_j^{(i)} - \mathbf{v}_{\PA(j)}^{(i)\top} \boldsymbol{\beta}_j\right)^2.
\]
As shown above, this is exactly the local MLE for $\boldsymbol{\theta}_j$. Since the global MLE decomposes into independent local MLEs, and CRB solves each local problem, it achieves the global MLE.
\end{proof}

\begin{remark}
The DAG structure constrains which parameters are estimated: only coefficients $\beta_{ij}$ for edges $(V_i, V_j) \in \bvec{E}$ are included. Coefficients for non-edges are implicitly constrained to zero. This is in contrast to an unconstrained model that would estimate a full covariance matrix without imposing the DAG structure.
\end{remark}

\subsection{Correspondence Between Precision Factorization and DAG Constraints}
Here we show that DAG constraints on a linear Gaussian SCM correspond to constraints on the $\boldsymbol{U}$ factor of the UDU decomposition of the precision matrix of $\bvec{V}$. 

\subsubsection{Covariance Matrix Representation}

The linear Gaussian SCM from Assumption~\ref{assump:linear_gaussian_new} corresponds to a multivariate Gaussian distribution over $n$ variables $\bvec{V} = (V_1, \ldots, V_n)$. It is fully characterized by its mean vector $\boldsymbol{\mu} \in \mathbb{R}^n$ and covariance matrix $\boldsymbol{\Sigma} \in \mathbb{R}^{n \times n}$. Without loss of generality, we assume zero mean ($\boldsymbol{\mu} = \mathbf{0}$), as centering does not affect the causal structure.

By definition, the covariance matrix $\boldsymbol{\Sigma}$ is symmetric positive semi-definite.
Under Assumption~\ref{assump:nonzero_noise_new} (non-degenerate noise), the covariance matrix of $\bvec{V}$ is strictly positive definite ($\boldsymbol{\Sigma} \succ 0$); therefore, its inverse, known as the precision matrix and denoted as $\boldsymbol{\Sigma}^{-1}:=\boldsymbol{\Omega}$ exists and is also symmetric positive definite. 

\subsubsection{LDL and UDU Decomposition}
Any symmetric positive definite matrix $\boldsymbol{A} \in \mathbb{R}^n$ admits a unique LDL decomposition
\begin{equation}
\boldsymbol{A} = \mathbf{L} \mathbf{D} \mathbf{L}^\top,
\end{equation}
where:
\begin{itemize}
    \item $\mathbf{L} \in \mathbb{R}^{n \times n}$ is a lower triangular matrix with ones on the diagonal ($L_{ii} = 1$ for all $i$),
    \item $\mathbf{D} = \diag(d_1, \ldots, d_n) \in \mathbb{R}^{n \times n}$ is a diagonal matrix with strictly positive entries ($d_i > 0$).
\end{itemize}

The LDL decomposition is closely related to the standard Cholesky decomposition $\boldsymbol{A} = \mathbf{C}\mathbf{C}^\top$, where $\mathbf{C} = \mathbf{L}\mathbf{D}^{1/2}$.

Similarly, any symmetric positive definite matrix $\boldsymbol{A} \in \mathbb{R}^n$ admits a unique UDU decomposition

\begin{equation}
\boldsymbol{A} = \mathbf{U} \mathbf{D} \mathbf{U}^\top,
\end{equation}
where:
\begin{itemize}
    \item $\mathbf{U} \in \mathbb{R}^{n \times n}$ is an upper triangular matrix with ones on the diagonal ($U_{ii} = 1$ for all $i$),
    \item $\mathbf{D} = \diag(d_1', \ldots, d_n')$ is a diagonal matrix with strictly positive entries ($d_i' > 0$).
\end{itemize} 

\subsubsection{Causal Interpretation under Topological Ordering}

Let the variables $V_1, \dots, V_d$ be ordered according to a topological ordering $\pi$ of the DAG $\mathcal{G}$ (i.e., if $(V_i, V_j) \in \mathbf{E}$, then $i < j$ in the ordering). Under Assumption~\ref{assump:linear_gaussian_new}, the structural equations can be written in matrix form as $(\mathbf{I} - \mathbf{B})\bvec{V} = \boldsymbol{\varepsilon}$. 

Rearranging, we obtain the expression
\begin{equation}
\bvec{V} = (\mathbf{I}- \mathbf{B})^{-1}\boldsymbol{\varepsilon}
\end{equation}
Taking the covariance of $\bvec{V}$, we have:

\begin{equation}\label{equation:cov_v}
\boldsymbol{\Sigma} = \Cov(\bvec{V}) = (\mathbf{I} - \mathbf{B})^{-1} \boldsymbol{\Sigma_\varepsilon } (\mathbf{I} - \mathbf{B})^{-\top}
\end{equation}

where $\boldsymbol{\Sigma_\varepsilon }= \text{diag}(\sigma_1^2, \dots, \sigma_d^2)$. Note that $\mathbf{B}$ is strictly lower triangular in this ordering, so $\mathbf{I} - \mathbf{B}$ and its inverse are unit lower triangular. Note that \eqref{equation:cov_v} corresponds to the LDL decomposition of the covariance matrix $\boldsymbol{\Sigma}$.

Inverting both sides of Equation~\ref{equation:cov_v}, we obtain the \textit{UDU} decomposition for the precision matrix $\bvec{\Omega}$:
\begin{equation}
\boldsymbol{\Omega} = \boldsymbol{\Sigma^{-1}} = (I - \mathbf{B})^\top \Sigma_\varepsilon^{-1} (I - \mathbf{B})
\end{equation}
Note that $\mathbf{U} := (\mathbf{I} - \mathbf{B})^\top$ is unit upper triangular and $\mathbf{D}_\Omega := \Sigma_\varepsilon^{-1} = \text{diag}(1/\sigma_1^2, \dots, 1/\sigma_d^2)$. 

\begin{proposition}[Causal Meaning of UDU Entries]
\label{prop:udu_causal}
Let $\bvec{V}$ be ordered according to a topological ordering of $\mathcal{G}$. The unique UDU decomposition of the precision matrix $\Omega = \mathbf{U} \mathbf{D} \mathbf{U}^\top$ encodes the direct causal structure as follows:
\begin{enumerate}
    \item \textbf{Direct Effects:} For $i < j$, the super-diagonal entry $U_{ij}$ is the negative of the direct causal effect (structural coefficient) of $V_i$ on $V_j$:
    \begin{equation}
    U_{ij} = -\beta_{ji},
    \end{equation}
    where $\beta_{ji}$ is the coefficient from the SCM $V_j = \sum_{i < j} \beta_{ji} V_i + \varepsilon_j$. Consequently, $U_{ij} \neq 0$ if and only if $V_i \to V_j$ exists in $\mathcal{G}$.
    
    \item \textbf{Precision Scale:} The jth diagonal entry $d_{jj}$ of $\mathbf{D}$ is the inverse of the conditional variance of $V_j$ given its predecessors:
    \begin{equation}
    d_{jj} = \frac{1}{\Var(V_j \mid V_1, \dots, V_{j-1})} = \frac{1}{\sigma_j^2}.
    \end{equation}
\end{enumerate}
\end{proposition}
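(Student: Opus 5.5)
The plan is to read both claims off the factorization $\boldsymbol{\Omega} = (\mathbf{I}-\mathbf{B})^\top \boldsymbol{\Sigma}_\varepsilon^{-1}(\mathbf{I}-\mathbf{B})$ derived just before the statement, using uniqueness of the UDU decomposition.

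\textbf{Step 1: the factorization is the UDU decomposition.} Since $\mathbf{B}$ is strictly lower triangular in the topological order, $(\mathbf{I}-\mathbf{B})^\top$ is unit upper triangular. Under Assumption~\ref{assump:nonzero_noise_new}, $\boldsymbol{\Sigma}_\varepsilon^{-1}$ is diagonal with strictly positive entries. So this factorization has exactly the form required of a UDU decomposition. Because $\boldsymbol{\Omega}\succ 0$, the UDU decomposition is unique, and therefore $\mathbf{U} = (\mathbf{I}-\mathbf{B})^\top$ and $\mathbf{D} = \boldsymbol{\Sigma}_\varepsilon^{-1}$.

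I would include a one-line justification of uniqueness rather than only citing it. Suppose $\mathbf{U}_1\mathbf{D}_1\mathbf{U}_1^\top = \mathbf{U}_2\mathbf{D}_2\mathbf{U}_2^\top$. Then
\[
\mathbf{U}_2^{-1}\mathbf{U}_1\mathbf{D}_1 = \mathbf{D}_2\mathbf{U}_2^\top\mathbf{U}_1^{-\top}.
\]
The left side is upper triangular and the right side is lower triangular, so both are diagonal. Comparing unit diagonals gives $\mathbf{U}_1=\mathbf{U}_2$ and $\mathbf{D}_1=\mathbf{D}_2$.

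\textbf{Step 2: direct effects.} For $i<j$, $U_{ij} = [(\mathbf{I}-\mathbf{B})^\top]_{ij} = -B_{ji}$. By Assumption~\ref{assump:linear_gaussian_new}, $B_{ji}$ is the coefficient of $V_i$ in the equation for $V_j$. This yields $U_{ij} = -\beta_{ji}$ in the statement's indexing; the statement's subscript convention is reversed relative to the assumption's $\beta_{ij}$, and I would remark on this.

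The ``if and only if'' needs one extra fact. Non-edges give $B_{ji}=0$ by construction. For the converse I would invoke Assumption~\ref{assump:known_dag_new}: an edge $V_i\to V_j$ belongs to $\mathcal{G}$ exactly when $V_i$ is a genuine direct cause, which I would interpret as a nonzero coefficient.

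\textbf{Step 3: precision scale.} We have $d_{jj} = 1/\sigma_j^2$ directly from Step 1. It remains to show $\Var(V_j\mid V_1,\dots,V_{j-1}) = \sigma_j^2$. From the triangular structure, $\varepsilon_j$ is independent of $\varepsilon_1,\dots,\varepsilon_{j-1}$, and $V_1,\dots,V_{j-1}$ are functions only of those earlier noises. Since $V_j = \sum_{i<j}\beta_{ji}V_i + \varepsilon_j$, conditioning on the predecessors leaves exactly the variance of $\varepsilon_j$. As a consistency check, this also matches the standard Schur-complement formula relating the last pivot of an LDL/UDU factorization to a conditional variance.

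\textbf{Main obstacle.} The only delicate point is the ``if and only if'' in Step 2, which requires reading Assumption~\ref{assump:known_dag_new} as a faithfulness-type condition, namely $\beta_{ji}\neq 0$ on every edge. I would state that explicitly. The remaining steps are routine bookkeeping about triangularity.
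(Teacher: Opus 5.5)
Your proposal is correct and takes the paper's route. Like the paper, you invert $\boldsymbol{\Sigma}=(\mathbf{I}-\mathbf{B})^{-1}\boldsymbol{\Sigma}_\varepsilon(\mathbf{I}-\mathbf{B})^{-\top}$, use uniqueness of the UDU decomposition to identify $\mathbf{U}=(\mathbf{I}-\mathbf{B})^\top$ and $\mathbf{D}=\boldsymbol{\Sigma}_\varepsilon^{-1}$, and read off the entries; your additions (the explicit uniqueness proof, the index-convention remark, the derivation of $\Var(V_j\mid V_1,\dots,V_{j-1})=\sigma_j^2$, and the point that the converse of the ``if and only if'' needs nonzero coefficients on every edge, which the paper simply asserts in the proof of Theorem~\ref{thm:dag_udu}) fill in steps the paper leaves implicit.
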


This decomposition corresponds to the recursive factorization of the joint density implied by the chain rule.

\subsubsection{DAG Constraints as Zero Entries}
The absence of an edge in the DAG corresponds to zero entries in $\mathbf{U}$.

\begin{theorem}[DAG Constraints in UDU Form]
\label{thm:dag_udu}
Let $\Omega = \mathbf{U} \mathbf{D}_\Omega \mathbf{U}^\top$ be the unique UDU decomposition of the precision matrix under a topological ordering. Then, for $i < j$:
\begin{equation}
V_i \notin \text{PA}(V_j) \Rightarrow U_{ij} = 0.
\end{equation}
\end{theorem}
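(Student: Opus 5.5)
The plan is to identify the UDU factors of $\boldsymbol{\Omega}$ explicitly and then read off the zero pattern from the structural coefficient matrix. Under Assumption~\ref{assump:linear_gaussian_new}, with the variables in the topological order $\pi$, the computation preceding Proposition~\ref{prop:udu_causal} gives
\begin{equation*}
\boldsymbol{\Omega} = (\mathbf{I}-\mathbf{B})^\top \boldsymbol{\Sigma}_\varepsilon^{-1} (\mathbf{I}-\mathbf{B}).
\end{equation*}
Here $\mathbf{B}$ is strictly lower triangular, so $(\mathbf{I}-\mathbf{B})^\top$ is unit upper triangular. By Assumption~\ref{assump:nonzero_noise_new}, $\boldsymbol{\Sigma}_\varepsilon^{-1}$ is diagonal with strictly positive entries. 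This expression therefore already has the form $\mathbf{U}'\mathbf{D}'\mathbf{U}'^\top$ required of a UDU decomposition, if we orient the factors correctly.

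\textbf{Step 1: fix the orientation of the factors.} A literal UDU factorization would need the triangular factor on the left and its transpose on the right. With $\mathbf{U}' := (\mathbf{I}-\mathbf{B})^\top$, the displayed product is $\mathbf{U}'\mathbf{D}'\mathbf{U}'^{\top}$ only after swapping the roles of the transposes. Concretely, $(\mathbf{I}-\mathbf{B})^\top \mathbf{D} (\mathbf{I}-\mathbf{B})$ is $\mathbf{U}'\mathbf{D}\mathbf{U}'^{\top}$ where $\mathbf{U}'=(\mathbf{I}-\mathbf{B})^\top$ is upper triangular. I will check this orientation carefully and adopt the paper's convention that $\mathbf{U}:=(\mathbf{I}-\mathbf{B})^\top$ and $\mathbf{D}_\Omega := \boldsymbol{\Sigma}_\varepsilon^{-1}$.

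\textbf{Step 2: invoke uniqueness.} The UDU decomposition of a symmetric positive definite matrix is unique. This follows from uniqueness of the LDL/Cholesky factorization applied to $\mathbf{J}\boldsymbol{\Omega}\mathbf{J}$, where $\mathbf{J}$ is the reversal permutation. Uniqueness lets us conclude that the $\mathbf{U}$ in the statement equals $(\mathbf{I}-\mathbf{B})^\top$. Consequently $U_{ij} = (\mathbf{I}-\mathbf{B})_{ji} = -B_{ji}$ for $i<j$, which is the first item of Proposition~\ref{prop:udu_causal}, and the proof could simply cite that proposition.

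\textbf{Step 3: conclude.} By construction of $\mathbf{B}$ in Assumption~\ref{assump:linear_gaussian_new}, $B_{ji} = \beta_{ij}$ if $V_i\in\PA(V_j)$ and $B_{ji}=0$ otherwise. So if $V_i \notin \PA(V_j)$, then $U_{ij} = -B_{ji} = 0$. Assumption~\ref{assump:known_dag_new} guarantees that the graph $\mathcal{G}$ used for the ordering and the parent sets is the true generating graph, so this zero pattern refers to the actual $\boldsymbol{\Omega}$.

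\textbf{Main obstacle.} The only delicate point is Step 2. It requires justifying uniqueness of the UDU factorization, as opposed to the LDL factorization, and matching the transpose convention so that we are not accidentally identifying the lower factor. I would handle it via the reversal-permutation argument: $\mathbf{J}\mathbf{U}\mathbf{J}$ is unit lower triangular, so any two UDU factorizations of $\boldsymbol{\Omega}$ yield two LDL factorizations of $\mathbf{J}\boldsymbol{\Omega}\mathbf{J}$. These coincide by uniqueness of Cholesky, using positive definiteness from Assumption~\ref{assump:nonzero_noise_new}.
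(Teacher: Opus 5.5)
Your proposal is correct and matches the paper's proof. Both identify $\mathbf{U}=(\mathbf{I}-\mathbf{B})^\top$ via Proposition~\ref{prop:udu_causal} and read off $U_{ij}=-B_{ji}=0$ for non-parents; your reversal-permutation argument also supplies the uniqueness of the UDU factorization, which the paper only asserts. The orientation worry in Step~1 is unnecessary, since $(\mathbf{I}-\mathbf{B})^\top\boldsymbol{\Sigma}_\varepsilon^{-1}(\mathbf{I}-\mathbf{B})$ is literally $\mathbf{U}\mathbf{D}_\Omega\mathbf{U}^\top$ with $\mathbf{U}=(\mathbf{I}-\mathbf{B})^\top$, with no swapping of transposes required.
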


\begin{proof}
As established in Proposition~\ref{prop:udu_causal}, $\mathbf{U} = (\boldsymbol{I} - \mathbf{B})^\top$. The entry $U_{ij}$ for $i < j$ corresponds to the entry $(\boldsymbol{I} - \mathbf{B})_{ji} = -B_{ji}$. In a linear SCM, the structural coefficient $B_{ji}$ is non-zero if and only if $V_i$ is a direct parent of $V_j$. Therefore, if $V_i \notin \text{PA}(V_j)$, then $B_{ji} = 0$, which implies $U_{ij} = 0$. 
\end{proof}
\begin{remark}[Parameterization Perspective]
This result provides an alternative view of DAG-constrained estimation. Rather than thinking of the DAG as constraining which regressions to run, we can equivalently view it as constraining the UDU parameterization of the precision matrix: certain off-diagonal entries of $\mathbf{U}$ are fixed to zero. The unconstrained model estimates all $\frac{n(n-1)}{2}$ sub-diagonal entries, while the DAG-constrained model only estimates the $|\bvec{E}|$ entries corresponding to edges.
\end{remark}

\subsection{Variance Reduction via Parameter Constraints}

We now establish that knowing certain parameters are zero---as implied by the DAG structure---leads to reduced estimation variance.

\subsubsection{Asymptotic Distribution of MLE}

Let $\boldsymbol{\theta} \in \mathbb{R}^p$ denote the vector of parameters in the UDU decomposition of $\Omega$ (specifically, the $d(d-1)/2$ superdiagonal entries of $\mathbf{U}$ and the $d$ diagonal entries of $\mathbf{D}$). Under standard regularity conditions, the maximum likelihood estimator $\hat{\boldsymbol{\theta}}_{\text{MLE}}$ is asymptotically normal:
\begin{equation}
\sqrt{N}(\hat{\boldsymbol{\theta}}_{\text{MLE}} - \boldsymbol{\theta}_0) \xrightarrow{d} \mathcal{N}(\mathbf{0}, \mathcal{I}(\boldsymbol{\theta}_0)^{-1}),
\end{equation}
where $\boldsymbol{\theta}_0$ is the true parameter value and $\mathcal{I}(\boldsymbol{\theta}_0)$ is the Fisher information matrix:
\begin{equation}
\mathcal{I}(\boldsymbol{\theta}_0) = -\mathbb{E}\left[\frac{\partial^2 \log p(\bvec{V}; \boldsymbol{\theta})}{\partial \boldsymbol{\theta} \partial \boldsymbol{\theta}^\top}\right]\bigg|_{\boldsymbol{\theta} = \boldsymbol{\theta}_0}.
\end{equation}

The asymptotic covariance of the MLE is therefore:
\begin{equation}
\Cov(\hat{\boldsymbol{\theta}}_{\text{MLE}}) \approx \frac{1}{N} \mathcal{I}(\boldsymbol{\theta}_0)^{-1}.
\end{equation}

\subsubsection{Constrained vs.\ Unconstrained Estimation}

Now consider two estimation scenarios:
\begin{itemize}
    \item \textbf{Unconstrained (full model):} We need to estimate all parameters $\boldsymbol{\theta}_{\text{full}} \in \mathbb{R}^p$.
    \item \textbf{DAG-constrained model:} If we know that certain parameters are zero, we can partition the parameters as $\boldsymbol{\theta}_{\text{full}} = (\boldsymbol{\theta}_G, \boldsymbol{\theta}_0)$, where $\boldsymbol{\theta}_G$ are the free parameters (corresponding to edges in the DAG) and $\boldsymbol{\theta}_0$ denote the parameters constrained to zero (corresponding to non-edges in the DAG).
\end{itemize}

The Fisher information matrix of the full model has a corresponding block structure:
\begin{equation}\label{equation:fim}
\mathcal{I}_{\text{full}} = \begin{pmatrix}
\mathcal{I}_{GG} & \mathcal{I}_{G0} \\
\mathcal{I}_{0G} & \mathcal{I}_{00}
\end{pmatrix},
\end{equation}
where $\mathcal{I}_{GG}$ corresponds to the free parameters, $\mathcal{I}_{00}$ to the constrained parameters, and $\mathcal{I}_{G0} = \mathcal{I}_{0G}^\top$ captures cross-information.

\subsubsection{Covariance Matrix Inequality}

The key result is that constraining parameters to their known values reduces the estimation variance of the remaining parameters.

\begin{theorem}[Variance Reduction from Known Parameters]
\label{thm:variance_reduction}
Let $\hat{\boldsymbol{\theta}}_G^{\text{(full)}}$ denote the MLE of $\boldsymbol{\theta}_G$ in the unconstrained model, and let $\hat{\boldsymbol{\theta}}_G^{\text{(DAG)}}$ denote the MLE in the DAG-constrained model (where $\boldsymbol{\theta}_0 = \mathbf{0}$ is known). Then:
\begin{equation}
\Cov(\hat{\boldsymbol{\theta}}_G^{\text{(DAG)}}) \preceq \Cov(\hat{\boldsymbol{\theta}}_G^{\text{(full)}}),
\end{equation}
where $\preceq$ denotes the Loewner (positive semi-definite) ordering: $\mathbf{A} \preceq \mathbf{B}$ means $\mathbf{B} - \mathbf{A}$ is positive semi-definite.
\end{theorem}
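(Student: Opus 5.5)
We read the covariances in the theorem asymptotically, through the Fisher information in \eqref{equation:fim}. By the asymptotic normality result above, the unconstrained MLE of the full vector $(\boldsymbol{\theta}_G,\boldsymbol{\theta}_0)$ has covariance $\frac1N\mathcal{I}_{\text{full}}^{-1}$. Its marginal covariance for $\hat{\boldsymbol{\theta}}_G^{\text{(full)}}$ is therefore $\frac1N$ times the $(G,G)$ block of $\mathcal{I}_{\text{full}}^{-1}$. The DAG-constrained model is a regular submodel with the same density, restricted to $\boldsymbol{\theta}_0=\mathbf{0}$. Its Fisher information for $\boldsymbol{\theta}_G$ is the block $\mathcal{I}_{GG}$ of the same matrix, evaluated at the true parameter, which satisfies the constraint by Theorem~\ref{thm:dag_udu}. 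Hence $\Cov(\hat{\boldsymbol{\theta}}_G^{\text{(DAG)}})\approx\frac1N\mathcal{I}_{GG}^{-1}$. The claim thus reduces to the matrix inequality
\begin{equation*}
\mathcal{I}_{GG}^{-1} \preceq \left[\mathcal{I}_{\text{full}}^{-1}\right]_{GG}.
\end{equation*}

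To prove this inequality, I would use the block-inversion (Schur complement) formula. Because $\mathcal{I}_{\text{full}}\succ 0$ under Assumption~\ref{assump:nonzero_noise_new} (the UDU parametrization is a smooth bijection onto positive definite matrices), it gives
\begin{equation*}
\left[\mathcal{I}_{\text{full}}^{-1}\right]_{GG} = \left(\mathcal{I}_{GG} - \mathcal{I}_{G0}\mathcal{I}_{00}^{-1}\mathcal{I}_{0G}\right)^{-1}.
\end{equation*}
The subtracted term $\mathcal{I}_{G0}\mathcal{I}_{00}^{-1}\mathcal{I}_{0G}$ is positive semidefinite, since $\mathcal{I}_{00}\succ0$ as a principal submatrix of a positive definite matrix. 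So the Schur complement $S$ satisfies $0\prec S\preceq \mathcal{I}_{GG}$. Matrix inversion is operator-antitone on positive definite matrices, which gives $\mathcal{I}_{GG}^{-1}\preceq S^{-1}$. This is the desired ordering, and multiplying by $1/N$ preserves it.

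For antitonicity, I would give the one-line argument. If $0\prec A\preceq B$, then $B^{-1/2}AB^{-1/2}\preceq I$, so its eigenvalues are at most one. The inverse $B^{1/2}A^{-1}B^{1/2}$ then has eigenvalues at least one, so $B^{1/2}A^{-1}B^{1/2}\succeq I$, which means $A^{-1}\succeq B^{-1}$.

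I expect the main obstacle to be justifying the identification of the constrained estimator's covariance with $\mathcal{I}_{GG}^{-1}$, rather than the linear algebra. One must check two things. First, the constrained MLE (CRB's per-node OLS, by Proposition~\ref{prop:crb_mle}) is asymptotically normal with exactly this information. Second, the true parameter lies in the interior of the constrained space, since the nonzero edge coefficients and positive variances are interior points. I would handle both by noting that the constrained model is itself a regular exponential-family-type submodel, so the same asymptotic normality theorem applies to it.

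It is also worth noting that, in this UDU parametrization, the per-node log-likelihood decomposition makes $\mathcal{I}_{\text{full}}$ block diagonal across nodes. Within node $j$, the coefficient block is $\Cov(V_{<j})/\sigma_j^2$. This gives a concrete check: the inequality reduces to the familiar fact that OLS on a subset of true regressors has smaller variance than OLS with irrelevant extra regressors. The inequality is strict unless the corresponding cross-covariances vanish.
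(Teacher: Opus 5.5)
Your proposal is correct and follows essentially the same route as the paper. You identify the two asymptotic covariances as $\frac1N[\mathcal{I}_{\text{full}}^{-1}]_{GG}$ and $\frac1N\mathcal{I}_{GG}^{-1}$, rewrite the former via the Schur complement, observe that $\mathcal{I}_{G0}\mathcal{I}_{00}^{-1}\mathcal{I}_{0G}\succeq 0$, and conclude by antitonicity of inversion. Your extra justifications (positive definiteness of $\mathcal{I}_{\text{full}}$, the short proof of antitonicity, and the per-node block-diagonal check) fill in steps the paper only asserts. One small correction to your closing remark: when the cross-covariances are nonzero, the difference is nonzero, but it need not be strictly positive definite.
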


\begin{proof}
In the unconstrained model, the asymptotic covariance of $\hat{\boldsymbol{\theta}}_G^{\text{(full)}}$ is the upper-left block of $\mathcal{I}_{\text{full}}^{-1}$. Taking the Schur complement of Equation~\ref{equation:fim}, we get

\begin{equation}
(\mathcal{I}_{\text{full}}^{-1})_{GG} = \left(\mathcal{I}_{GG} - \mathcal{I}_{G0}\mathcal{I}_{00}^{-1}\mathcal{I}_{0G}\right)^{-1}.
\end{equation}

In the DAG-constrained model, since $\boldsymbol{\theta}_0$ is fixes at zero, we only need to estimate $\boldsymbol{\theta}_G$. The Fisher information for $\boldsymbol{\theta}_G$ alone is $\mathcal{I}_{GG}$, so:
\begin{equation}
\Cov(\hat{\boldsymbol{\theta}}_G^{\text{(DAG)}}) \approx \frac{1}{N}\mathcal{I}_{GG}^{-1}.
\end{equation}

The matrix $\mathcal{I}_{G0}\mathcal{I}_{00}^{-1}\mathcal{I}_{0G}$ is positive semi-definite (it is a quadratic form in $\mathcal{I}_{G0}$ with positive definite weight $\mathcal{I}_{00}^{-1}$). Therefore:
\begin{equation}
\mathcal{I}_{GG} - \mathcal{I}_{G0}\mathcal{I}_{00}^{-1}\mathcal{I}_{0G} \preceq \mathcal{I}_{GG}.
\end{equation}

Taking inverses (which reverses the ordering for positive definite matrices):
\begin{equation}
\mathcal{I}_{GG}^{-1} \preceq \left(\mathcal{I}_{GG} - \mathcal{I}_{G0}\mathcal{I}_{00}^{-1}\mathcal{I}_{0G}\right)^{-1}.
\end{equation}

This gives us:
\begin{equation}
\Cov(\hat{\boldsymbol{\theta}}_G^{\text{(DAG)}}) \preceq \Cov(\hat{\boldsymbol{\theta}}_G^{\text{(full)}}),
\end{equation}
as claimed.
\end{proof}

\begin{remark}[Intuition]
The variance reduction can be understood intuitively: in the unconstrained model, uncertainty about $\boldsymbol{\theta}_0$ ``spills over'' into uncertainty about $\boldsymbol{\theta}_G$ through the cross-information term $\mathcal{I}_{G0}$. When we know $\boldsymbol{\theta}_0 = \mathbf{0}$, this source of uncertainty is eliminated, leading to more precise estimates of $\boldsymbol{\theta}_G$.
\end{remark}

\subsubsection{Extension to Full Parameter Space}

We can extend the variance reduction result to the full parameter space by embedding the constrained estimator appropriately.

\begin{corollary}[Variance Reduction in Full Parameter Space]
\label{cor:full_variance_reduction}
Define the covariance matrices over the full parameter space $\boldsymbol{\theta}_{\text{full}} = (\boldsymbol{\theta}_G, \boldsymbol{\theta}_0)$:
\begin{itemize}
    \item For the unconstrained model: $\boldsymbol{\Sigma}_{\text{full}} = \mathcal{I}_{\text{full}}^{-1}$.
    \item For the DAG-constrained model: $\boldsymbol{\Sigma}_{\text{DAG}} = \begin{pmatrix} \mathcal{I}_{GG}^{-1} & \mathbf{0} \\ \mathbf{0} & \mathbf{0} \end{pmatrix}$,

    where the zero blocks reflect that $\boldsymbol{\theta}_0$ is known exactly (zero variance).
\end{itemize}
Then:
\begin{equation}
\boldsymbol{\Sigma}_{\text{DAG}} \preceq \boldsymbol{\Sigma}_{\text{full}}.
\end{equation}
\end{corollary}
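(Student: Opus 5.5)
We must show that $\boldsymbol{\Sigma}_{\text{full}} - \boldsymbol{\Sigma}_{\text{DAG}}$ is positive semi-definite. The plan is to write $\boldsymbol{\Sigma}_{\text{full}} = \mathcal{I}_{\text{full}}^{-1}$ in block form via the Schur complement, subtract the block-diagonal $\boldsymbol{\Sigma}_{\text{DAG}}$, and recognize the difference as a congruence transform of a PSD matrix. Throughout we assume $\mathcal{I}_{\text{full}} \succ 0$, as is implicit in the asymptotic normality statement. Consequently $\mathcal{I}_{GG}$, $\mathcal{I}_{00}$, and the Schur complement $S := \mathcal{I}_{GG} - \mathcal{I}_{G0}\mathcal{I}_{00}^{-1}\mathcal{I}_{0G}$ are all positive definite.

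\textbf{Step 1 (invert in the other order).} Take the Schur complement with respect to $\mathcal{I}_{GG}$ rather than $\mathcal{I}_{00}$. Let $T := \mathcal{I}_{00} - \mathcal{I}_{0G}\mathcal{I}_{GG}^{-1}\mathcal{I}_{G0} \succ 0$ and $K := \mathcal{I}_{GG}^{-1}\mathcal{I}_{G0}$. The standard block-inverse formula then gives
\begin{equation*}
\mathcal{I}_{\text{full}}^{-1} = \begin{pmatrix} \mathcal{I}_{GG}^{-1} & \mathbf{0} \\ \mathbf{0} & \mathbf{0} \end{pmatrix} + \begin{pmatrix} K \\ -\mathbf{I} \end{pmatrix} T^{-1} \begin{pmatrix} K^\top & -\mathbf{I} \end{pmatrix}.
\end{equation*}
This identity is verified routinely by multiplying by $\mathcal{I}_{\text{full}}$ and checking that the product is the identity. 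It is the key algebraic step, and I expect the bookkeeping of signs and transposes to be the only real obstacle.

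\textbf{Step 2 (conclude).} The identity gives $\boldsymbol{\Sigma}_{\text{full}} - \boldsymbol{\Sigma}_{\text{DAG}} = W T^{-1} W^\top$ with $W = \begin{pmatrix} K \\ -\mathbf{I}\end{pmatrix}$. Since $T^{-1} \succ 0$, for any vector $x$ we have $x^\top W T^{-1} W^\top x = (W^\top x)^\top T^{-1}(W^\top x) \ge 0$. The difference is therefore PSD, which proves $\boldsymbol{\Sigma}_{\text{DAG}} \preceq \boldsymbol{\Sigma}_{\text{full}}$.

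\textbf{Consistency check.} The result is compatible with Theorem~\ref{thm:variance_reduction}. The upper-left block of Step 1 equals $\mathcal{I}_{GG}^{-1} + K T^{-1} K^\top$, and by the Woodbury identity this coincides with $S^{-1}$. Restricting the quadratic form to vectors supported on the $G$ coordinates therefore recovers exactly the earlier inequality $\mathcal{I}_{GG}^{-1} \preceq S^{-1}$.

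An alternative route, if one prefers to reuse the earlier theorem, is also available. One would write the full difference as $\begin{pmatrix} S^{-1} - \mathcal{I}_{GG}^{-1} & * \\ * & T^{-1}\end{pmatrix}$ and argue PSD-ness via its own Schur complement. However, the factorization in Step 1 is cleaner and avoids that further computation, so that is the route I would take.
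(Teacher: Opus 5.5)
Your proposal is correct and is essentially the paper's proof. The paper likewise applies the block-inverse formula with the Schur complement of $\mathcal{I}_{GG}$ (which it calls $S$, your $T$), sets $K=\mathcal{I}_{GG}^{-1}\mathcal{I}_{G0}$ and $M=\bigl(\begin{smallmatrix} K \\ -\mathbf{I}\end{smallmatrix}\bigr)$, and concludes from $\boldsymbol{\Sigma}_{\text{full}}-\boldsymbol{\Sigma}_{\text{DAG}} = M S^{-1} M^\top \succeq 0$; your Woodbury consistency check against the earlier theorem is a correct extra but not needed.
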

\begin{proof}
Partition the Fisher information for the full parameter vector
$\boldsymbol{\theta}_{\text{full}}=(\boldsymbol{\theta}_G,\boldsymbol{\theta}_0)$ as
\[
\mathcal I_{\text{full}}
=
\begin{pmatrix}
A & B\\[4pt]
B^\top & C
\end{pmatrix},
\qquad A=\mathcal I_{GG},\; B=\mathcal I_{G0},\; C=\mathcal I_{00}.
\]
Assume $A$ is invertible and the Schur complement $S:=C-B^\top A^{-1}B$ is positive definite (these hold under the usual identifiability / regularity assumptions so that $\mathcal I_{\text{full}}$ is invertible).

By the block inverse (Schur complement) formula,
\[
\mathcal I_{\text{full}}^{-1}
=
\begin{pmatrix}
A^{-1}+A^{-1} B S^{-1} B^\top A^{-1} & -A^{-1} B S^{-1}\\[4pt]
- S^{-1} B^\top A^{-1} & S^{-1}
\end{pmatrix}.
\]
The DAG-constrained estimator (which corresponds to treating $\boldsymbol{\theta}_0$ as known) has covariance
\[
\Sigma_{\mathrm{DAG}}=\begin{pmatrix} A^{-1} & 0\\[4pt] 0 & 0\end{pmatrix}.
\]
Subtracting,
\[
\mathcal I_{\text{full}}^{-1}-\Sigma_{\mathrm{DAG}}
=
\begin{pmatrix}
A^{-1} B S^{-1} B^\top A^{-1} & -A^{-1} B S^{-1}\\[4pt]
- S^{-1} B^\top A^{-1} & S^{-1}
\end{pmatrix}.
\]
Set $K:=A^{-1}B$ and $M:=\begin{pmatrix} K \\ -I \end{pmatrix}$. Then the difference factors as
\[
\mathcal I_{\text{full}}^{-1}-\Sigma_{\mathrm{DAG}} \;=\; M\,S^{-1}\,M^\top,
\]
and since $S^{-1}\succeq 0$ this is positive semi-definite. Hence
\[
\Sigma_{\mathrm{DAG}}\preceq \mathcal I_{\text{full}}^{-1}=\Sigma_{\text{full}},
\]
as required.
\end{proof}

\begin{remark}
This corollary shows that the DAG-constrained estimator dominates the unconstrained estimator in the matrix sense over the \emph{entire} parameter space. The constrained parameters $\boldsymbol{\theta}_0$ contribute zero variance (since they are known), while the free parameters $\boldsymbol{\theta}_G$ have reduced variance compared to the unconstrained case. This provides a complete picture of the efficiency gains from incorporating DAG knowledge.
\end{remark}

\subsection{Preservation of Variance Reduction under Smooth Maps}

We have established variance reduction in the UDU parameter space. We now show that this reduction is preserved when mapping back to the original covariance matrix or to regression coefficients.

\subsubsection{The Delta Method and Loewner Order}

Let $g: \mathbb{R}^p \to \mathbb{R}^q$ be a differentiable function, and let $\hat{\boldsymbol{\theta}}$ be an estimator of $\boldsymbol{\theta}_0$ with asymptotic covariance $\boldsymbol{\Sigma}_{\boldsymbol{\theta}}$. By the delta method:
\begin{equation}
\sqrt{N}(g(\hat{\boldsymbol{\theta}}) - g(\boldsymbol{\theta}_0)) \xrightarrow{d} \mathcal{N}(\mathbf{0}, \mathbf{J}_g \boldsymbol{\Sigma}_{\boldsymbol{\theta}} \mathbf{J}_g^\top),
\end{equation}
where $\mathbf{J}_g = \frac{\partial g}{\partial \boldsymbol{\theta}^\top}\big|_{\boldsymbol{\theta}_0}$ is the Jacobian of $g$ evaluated at the true parameter.

A key property of the Loewner order is that it is preserved under congruence transformations:

\begin{lemma}[Loewner Order Preserved under Congruence]
\label{lem:loewner_congruence}
Let $\mathbf{A}, \mathbf{B} \in \mathbb{R}^{p \times p}$ be symmetric matrices with $\mathbf{A} \preceq \mathbf{B}$. Then for any matrix $\mathbf{M} \in \mathbb{R}^{q \times p}$:
\begin{equation}
\mathbf{M}\mathbf{A}\mathbf{M}^\top \preceq \mathbf{M}\mathbf{B}\mathbf{M}^\top.
\end{equation}
\end{lemma}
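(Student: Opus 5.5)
The plan is to reduce the claim to the definition of the Loewner order and then show that a single quadratic form is nonnegative. By definition, $\mathbf{A} \preceq \mathbf{B}$ means that $\mathbf{B} - \mathbf{A}$ is positive semi-definite, that is, $\mathbf{x}^\top(\mathbf{B}-\mathbf{A})\mathbf{x} \ge 0$ for every $\mathbf{x} \in \mathbb{R}^p$. So we must show that $\mathbf{M}\mathbf{B}\mathbf{M}^\top - \mathbf{M}\mathbf{A}\mathbf{M}^\top$ is symmetric and positive semi-definite as a $q \times q$ matrix.

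First I will use linearity to rewrite the difference as $\mathbf{M}(\mathbf{B}-\mathbf{A})\mathbf{M}^\top$. This matrix is symmetric, since $(\mathbf{M}(\mathbf{B}-\mathbf{A})\mathbf{M}^\top)^\top = \mathbf{M}(\mathbf{B}-\mathbf{A})^\top\mathbf{M}^\top$ and $\mathbf{B}-\mathbf{A}$ is symmetric. Next I fix an arbitrary $\mathbf{y} \in \mathbb{R}^q$ and set $\mathbf{x} := \mathbf{M}^\top \mathbf{y} \in \mathbb{R}^p$. Then
\[
\mathbf{y}^\top \mathbf{M}(\mathbf{B}-\mathbf{A})\mathbf{M}^\top \mathbf{y} = \mathbf{x}^\top (\mathbf{B}-\mathbf{A})\mathbf{x} \ge 0,
\]
where the inequality is the hypothesis $\mathbf{A} \preceq \mathbf{B}$. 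Since $\mathbf{y}$ was arbitrary, $\mathbf{M}(\mathbf{B}-\mathbf{A})\mathbf{M}^\top \succeq 0$, which is exactly the claim.

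I do not expect a real obstacle. The one point to check is that nothing requires $\mathbf{M}$ to be square, invertible, or of full rank. The argument only uses the substitution $\mathbf{x} = \mathbf{M}^\top\mathbf{y}$, and that substitution is valid for any $\mathbf{M}$ of size $q \times p$.

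For the intended application, I take $\mathbf{M} = \mathbf{J}_g$ and compare the asymptotic covariances $\boldsymbol{\Sigma}_{\text{DAG}} \preceq \boldsymbol{\Sigma}_{\text{full}}$ from Corollary~\ref{cor:full_variance_reduction}. The lemma then transfers the ordering to $\mathbf{J}_g\boldsymbol{\Sigma}_{\text{DAG}}\mathbf{J}_g^\top \preceq \mathbf{J}_g\boldsymbol{\Sigma}_{\text{full}}\mathbf{J}_g^\top$ via the delta method.
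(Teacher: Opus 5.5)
Your proof is correct and matches the paper's argument: write the difference as $\mathbf{M}(\mathbf{B}-\mathbf{A})\mathbf{M}^\top$ and evaluate its quadratic form at $\mathbf{M}^\top\mathbf{y}$, which is nonnegative by the hypothesis. Your explicit check that the difference is symmetric, and your remark that $\mathbf{M}$ need not be square or full rank, are small additions that the paper leaves implicit.
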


\begin{proof}
By definition, $\mathbf{A} \preceq \mathbf{B}$ means $\mathbf{B} - \mathbf{A} \succeq 0$. For any $\mathbf{x} \in \mathbb{R}^q$:
\[
\mathbf{x}^\top (\mathbf{M}\mathbf{B}\mathbf{M}^\top - \mathbf{M}\mathbf{A}\mathbf{M}^\top) \mathbf{x} = \mathbf{x}^\top \mathbf{M}(\mathbf{B} - \mathbf{A})\mathbf{M}^\top \mathbf{x} = (\mathbf{M}^\top\mathbf{x})^\top (\mathbf{B} - \mathbf{A}) (\mathbf{M}^\top\mathbf{x}) \geq 0,
\]
since $\mathbf{B} - \mathbf{A} \succeq 0$.
\end{proof}

\subsubsection{Smoothness of the UDU Decomposition}

We first establish that both the map from UDU parameters to covariance and its inverse are smooth.

The map from UDU parameters $\boldsymbol{\theta} = (\text{vec}(\mathbf{U}), \text{diag}(\mathbf{D}))$ to the covariance matrix is:
\begin{equation}
g_{\Sigma}: \boldsymbol{\theta} \mapsto \boldsymbol{\Sigma} =(\mathbf{U}\mathbf{D}\mathbf{U}^\top)^{-1}.
\end{equation}
This map is the composition of two separate maps:

\begin{enumerate}
    \item The map from the parameters to the UDU factorization of the precision matrix $\boldsymbol{\theta} \mapsto \boldsymbol{\Omega} = \boldsymbol{\mathbf{U}\mathbf{D}\mathbf{U}^\top} $. The entries of $\boldsymbol{\Omega}$ are finite sums and products of the entries of U and D; therefore this map is polynomial.
    \item Matrix inversion: $\boldsymbol{\Omega} \mapsto \boldsymbol{\Sigma}$. Note that matrix inversion is a smooth map on the set of non-singular matrices.
\end{enumerate}

$g_\Sigma$ is the composition of two smooth maps; therefore, it is smooth.

The inverse map $g_{\Sigma}^{-1}: \boldsymbol{\Sigma} \mapsto \boldsymbol{\theta}$ is also smooth on the cone of positive definite matrices. To see this, note that $g_{\Sigma}^{-1}$ is the composition of the smooth inversion map $\boldsymbol{\Sigma} \mapsto \boldsymbol{\Omega} = \boldsymbol{\Sigma}^{-1}$ and the UDU decomposition of $\boldsymbol{\Omega}$. The UDU decomposition $\boldsymbol{\Omega} = \mathbf{U}\mathbf{D}\mathbf{U}^\top$ can be computed recursively (starting from $i=p$ down to $1$):
\begin{align}
D_{ii} &= \Omega_{ii} - \sum_{k=i+1}^{p} U_{ik}^2 D_{kk}, \\
U_{ji} &= \frac{1}{D_{ii}}\left(\Omega_{ji} - \sum_{k=i+1}^{p} U_{jk} U_{ik} D_{kk}\right), \quad \text{for } j < i.
\end{align}
Each entry of $\mathbf{U}$ and $\mathbf{D}$ is a rational function of the entries of $\boldsymbol{\Omega}$ (and thus of $\boldsymbol{\Sigma}$). The denominators involve the diagonal entries $D_{pp}, D_{p-1,p-1}, \ldots$, which are the conditional precisions of the variables. More generally, these denominators are ratios of consecutive trailing principal minors of $\boldsymbol{\Omega}$. For any $\boldsymbol{\Sigma} \succ 0$, the precision matrix $\boldsymbol{\Omega}$ is also positive definite; therefore, these minors (and the resulting $D_{ii}$ values) are strictly positive. 

Since rational functions with non-vanishing denominators are smooth, the map $g_{\Sigma}^{-1}$ is smooth on the positive definite cone.

\subsubsection{Application to Covariance Matrix Estimation}

\begin{corollary}[Variance Reduction for Covariance Estimator]
\label{cor:cov_variance_reduction}
Let $\hat{\boldsymbol{\Sigma}}_{\text{DAG}}$ and $\hat{\boldsymbol{\Sigma}}_{\text{full}}$ be the covariance matrix estimators obtained by applying $g_{\Sigma}$ to the DAG-constrained and unconstrained parameter estimates, respectively. Then:
\begin{equation}
\Cov(\mathrm{vec}(\hat{\boldsymbol{\Sigma}}_{\text{DAG}})) \preceq \Cov(\mathrm{vec}(\hat{\boldsymbol{\Sigma}}_{\text{full}})).
\end{equation}
\end{corollary}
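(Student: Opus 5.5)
The argument combines the delta method with the congruence lemma. Take $g_\Sigma$, the smooth map from UDU parameters $\boldsymbol{\theta}_{\text{full}}=(\boldsymbol{\theta}_G,\boldsymbol{\theta}_0)$ to $\mathrm{vec}(\boldsymbol{\Sigma})$; smoothness was established above. Both estimators are images of parameter estimates under this \emph{same} map. The unconstrained estimator is $g_\Sigma(\hat{\boldsymbol{\theta}}_{\text{full}})$. The DAG-constrained one is $g_\Sigma(\hat{\boldsymbol{\theta}}_G^{(\text{DAG})},\mathbf{0})$, where the constrained coordinates are held fixed at their true value $\mathbf{0}$. Under Assumption~\ref{assump:known_dag_new} the true parameter $\boldsymbol{\theta}^\star$ satisfies $\boldsymbol{\theta}_0=\mathbf{0}$, so both estimators are consistent for the same point $\boldsymbol{\theta}^\star$. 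I will therefore linearize $g_\Sigma$ at $\boldsymbol{\theta}^\star$ with a single Jacobian $\mathbf{J}=\partial\,\mathrm{vec}(g_\Sigma)/\partial\boldsymbol{\theta}^\top|_{\boldsymbol{\theta}^\star}$.

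First, I apply the delta method to each estimator. For the full model this gives asymptotic covariance $\frac1N \mathbf{J}\,\boldsymbol{\Sigma}_{\text{full}}\,\mathbf{J}^\top$ with $\boldsymbol{\Sigma}_{\text{full}}=\mathcal{I}_{\text{full}}^{-1}$. For the constrained model, the estimator of the full parameter vector is $(\hat{\boldsymbol{\theta}}_G^{(\text{DAG})},\mathbf{0})$, whose asymptotic covariance is exactly the block matrix $\boldsymbol{\Sigma}_{\text{DAG}}$ of Corollary~\ref{cor:full_variance_reduction}. The zero block is deterministic, so the delta method still applies and gives $\frac1N\mathbf{J}\,\boldsymbol{\Sigma}_{\text{DAG}}\,\mathbf{J}^\top$. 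Equivalently, one can use $\mathbf{J}_G\,\mathcal{I}_{GG}^{-1}\mathbf{J}_G^\top$, where $\mathbf{J}_G$ consists of the columns of $\mathbf{J}$ belonging to $\boldsymbol{\theta}_G$.

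Second, I invoke Corollary~\ref{cor:full_variance_reduction}, which gives $\boldsymbol{\Sigma}_{\text{DAG}}\preceq\boldsymbol{\Sigma}_{\text{full}}$. Lemma~\ref{lem:loewner_congruence} with $\mathbf{M}=\mathbf{J}$ then yields $\mathbf{J}\boldsymbol{\Sigma}_{\text{DAG}}\mathbf{J}^\top\preceq\mathbf{J}\boldsymbol{\Sigma}_{\text{full}}\mathbf{J}^\top$. Multiplying by $1/N>0$ preserves the ordering, which is the claim in the asymptotic sense in which $\Cov$ is used throughout this section.

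The main point requiring care is justifying that the \emph{same} Jacobian serves both estimators. This rests on evaluating both linearizations at the common true parameter, which is possible only because the DAG is correctly specified, so that $\boldsymbol{\theta}_0=\mathbf{0}$ holds at the truth. I will also note that $\mathrm{vec}(\boldsymbol{\Sigma})$ has duplicated symmetric entries, so the resulting covariance is singular. This is harmless, since Lemma~\ref{lem:loewner_congruence} places no rank requirement on $\mathbf{M}$. Finally, I will state explicitly that $\Cov$ here denotes the leading-order asymptotic covariance, consistent with the earlier results, rather than a finite-sample covariance.
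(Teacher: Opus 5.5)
Your proposal is correct and follows the paper's proof essentially step for step. Both estimators are images under the same map $g_\Sigma$, linearized at the common true parameter $(\boldsymbol{\theta}_G^*,\mathbf{0})$, so the delta method yields $\mathbf{J}\boldsymbol{\Sigma}_{\text{DAG}}\mathbf{J}^\top$ and $\mathbf{J}\boldsymbol{\Sigma}_{\text{full}}\mathbf{J}^\top$, and Corollary~\ref{cor:full_variance_reduction} together with Lemma~\ref{lem:loewner_congruence} (with $\mathbf{M}=\mathbf{J}$) finishes it. Your extra remarks are harmless refinements that the paper leaves implicit: the explicit $1/N$ factor, the deterministic zero block, and the singularity caused by duplicated entries of $\mathrm{vec}(\boldsymbol{\Sigma})$.
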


\begin{proof}
The key observation is that both estimators target the same true parameter. In the full parameter space, the true parameter is $\boldsymbol{\theta}^* = (\boldsymbol{\theta}_G^*, \mathbf{0})$, where $\boldsymbol{\theta}_0 = \mathbf{0}$ due to the DAG structure.

Both estimators map to the covariance matrix via the same function $g_\Sigma$:
\begin{itemize}
    \item DAG estimator: $\hat{\boldsymbol{\Sigma}}_{\text{DAG}} = g_\Sigma(\hat{\boldsymbol{\theta}}_G^{\text{DAG}}, \mathbf{0})$
    \item Full estimator: $\hat{\boldsymbol{\Sigma}}_{\text{full}} = g_\Sigma(\hat{\boldsymbol{\theta}}_G^{\text{full}}, \hat{\boldsymbol{\theta}}_0^{\text{full}})$
\end{itemize}

The delta method states that for an estimator $\hat{\boldsymbol{\theta}}$ of $\boldsymbol{\theta}^*$:
\[
\Cov(g(\hat{\boldsymbol{\theta}})) \approx \mathbf{J}_g(\boldsymbol{\theta}^*) \, \Cov(\hat{\boldsymbol{\theta}}) \, \mathbf{J}_g(\boldsymbol{\theta}^*)^\top,
\]
where the Jacobian $\mathbf{J}_g(\boldsymbol{\theta}^*)$ is evaluated at the \emph{true} parameter value.

Since both estimators have the same true parameter $\boldsymbol{\theta}^*$, the same Jacobian $\mathbf{J}_{g_\Sigma} = \mathbf{J}_{g_\Sigma}(\boldsymbol{\theta}^*)$ applies to both:
\begin{align}
\Cov(\mathrm{vec}(\hat{\boldsymbol{\Sigma}}_{\text{DAG}})) &\approx \mathbf{J}_{g_\Sigma} \boldsymbol{\Sigma}_{\text{DAG}} \mathbf{J}_{g_\Sigma}^\top, \\
\Cov(\mathrm{vec}(\hat{\boldsymbol{\Sigma}}_{\text{full}})) &\approx \mathbf{J}_{g_\Sigma} \boldsymbol{\Sigma}_{\text{full}} \mathbf{J}_{g_\Sigma}^\top,
\end{align}
where $\boldsymbol{\Sigma}_{\text{DAG}}$ and $\boldsymbol{\Sigma}_{\text{full}}$ are the parameter-space covariances from Corollary~\ref{cor:full_variance_reduction}.

Since $\boldsymbol{\Sigma}_{\text{DAG}} \preceq \boldsymbol{\Sigma}_{\text{full}}$, applying Lemma~\ref{lem:loewner_congruence} with $\mathbf{M} = \mathbf{J}_{g_\Sigma}$ yields:
\[
\mathbf{J}_{g_\Sigma} \boldsymbol{\Sigma}_{\text{DAG}} \mathbf{J}_{g_\Sigma}^\top \preceq \mathbf{J}_{g_\Sigma} \boldsymbol{\Sigma}_{\text{full}} \mathbf{J}_{g_\Sigma}^\top,
\]
which is the desired result.
\end{proof}

\subsubsection{Application to Regression Coefficients}

For downstream prediction, we typically care about the regression coefficients $\boldsymbol{\beta} = \boldsymbol{\Sigma}_{\mathbf{XX}}^{-1}\boldsymbol{\Sigma}_{\mathbf{X}Y}$. The map from covariance to regression coefficients:
\begin{equation}
h: \boldsymbol{\Sigma} \mapsto \boldsymbol{\beta} = \boldsymbol{\Sigma}_{\mathbf{XX}}^{-1}\boldsymbol{\Sigma}_{\mathbf{X}Y}
\end{equation}
is also smooth (rational function of covariance entries, well-defined when $\boldsymbol{\Sigma}_{\mathbf{XX}} \succ 0$).

\begin{corollary}[Variance Reduction for Regression Coefficients]
\label{cor:beta_variance_reduction}
Let $\hat{\boldsymbol{\beta}}_{\text{DAG}}$ and $\hat{\boldsymbol{\beta}}_{\text{full}}$ be the regression coefficient estimators derived from the DAG-constrained and unconstrained covariance estimates. Then:
\begin{equation}
\Cov(\hat{\boldsymbol{\beta}}_{\text{DAG}}) \preceq \Cov(\hat{\boldsymbol{\beta}}_{\text{full}}).
\end{equation}
\end{corollary}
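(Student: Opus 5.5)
The plan mirrors the proof of Corollary~\ref{cor:cov_variance_reduction}, with the map $h$ composed on top of $g_\Sigma$. Define the composite map $\phi := h \circ g_\Sigma$ from the UDU parameter vector $\boldsymbol{\theta}_{\text{full}} = (\boldsymbol{\theta}_G, \boldsymbol{\theta}_0)$ to the regression coefficients $\boldsymbol{\beta} = \boldsymbol{\Sigma}_{\mathbf{XX}}^{-1}\boldsymbol{\Sigma}_{\mathbf{X}Y}$. Both estimators are images of parameter estimates under the same map:
\[
\hat{\boldsymbol{\beta}}_{\text{DAG}} = \phi(\hat{\boldsymbol{\theta}}_G^{\text{DAG}}, \mathbf{0}), \qquad \hat{\boldsymbol{\beta}}_{\text{full}} = \phi(\hat{\boldsymbol{\theta}}_G^{\text{full}}, \hat{\boldsymbol{\theta}}_0^{\text{full}}).
\]
Both parameter estimators are consistent for the same true value $\boldsymbol{\theta}^* = (\boldsymbol{\theta}_G^*, \mathbf{0})$, since the data truly follow $\mathcal{G}$ (Assumption~\ref{assump:known_dag_new}).

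The first step is to show that $\phi$ is differentiable at $\boldsymbol{\theta}^*$. We already know $g_\Sigma$ is smooth. The map $h$ is a rational function of the entries of $\boldsymbol{\Sigma}$, and its denominator is $\det \boldsymbol{\Sigma}_{\mathbf{XX}}$. This determinant is nonzero at $\boldsymbol{\Sigma}^* = g_\Sigma(\boldsymbol{\theta}^*)$: by Assumption~\ref{assump:nonzero_noise_new} we have $\boldsymbol{\Sigma}^* \succ 0$, so every principal submatrix, in particular $\boldsymbol{\Sigma}_{\mathbf{XX}}$, is positive definite. Hence $h$ is smooth on a neighborhood of $\boldsymbol{\Sigma}^*$, and the chain rule gives $\mathbf{J}_\phi = \mathbf{J}_h(\boldsymbol{\Sigma}^*)\,\mathbf{J}_{g_\Sigma}(\boldsymbol{\theta}^*)$.

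Next, apply the delta method to each estimator. The Jacobian is evaluated at the common true parameter, so the same $\mathbf{J}_\phi$ appears in both expressions:
\[
\Cov(\hat{\boldsymbol{\beta}}_{\text{DAG}}) \approx \mathbf{J}_\phi \boldsymbol{\Sigma}_{\text{DAG}} \mathbf{J}_\phi^\top, \qquad \Cov(\hat{\boldsymbol{\beta}}_{\text{full}}) \approx \mathbf{J}_\phi \boldsymbol{\Sigma}_{\text{full}} \mathbf{J}_\phi^\top.
\]
Here $\boldsymbol{\Sigma}_{\text{DAG}}$ and $\boldsymbol{\Sigma}_{\text{full}}$ are the parameter-space covariances of Corollary~\ref{cor:full_variance_reduction}. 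That corollary gives $\boldsymbol{\Sigma}_{\text{DAG}} \preceq \boldsymbol{\Sigma}_{\text{full}}$. Lemma~\ref{lem:loewner_congruence} with $\mathbf{M} = \mathbf{J}_\phi$ then yields the claimed ordering.

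An equivalent route starts from Corollary~\ref{cor:cov_variance_reduction} and applies the delta method with $h$ alone. This requires that $\hat{\boldsymbol{\Sigma}}_{\text{DAG}}$ and $\hat{\boldsymbol{\Sigma}}_{\text{full}}$ share the limit $\boldsymbol{\Sigma}^*$, which holds for the same reason as above.

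The main obstacle is justifying the delta method for the constrained estimator. Its parameter covariance has zero blocks, so its asymptotic law is degenerate on the $\boldsymbol{\theta}_0$ coordinates. This is harmless, because the delta method only needs $\sqrt{N}$-convergence in distribution, possibly to a degenerate Gaussian, together with differentiability of $\phi$ at the limit point. Consequently the statement, like the earlier corollaries, should be read as a statement about asymptotic covariances, i.e.\ the limiting covariance of $\sqrt{N}(\hat{\boldsymbol{\beta}} - \boldsymbol{\beta}^*)$.
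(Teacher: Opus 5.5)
Your proposal is correct and follows the paper's own argument: compose $h$ with $g_\Sigma$, take the chain-rule Jacobian at the common true parameter $\boldsymbol{\theta}^*=(\boldsymbol{\theta}_G^*,\mathbf{0})$, and transfer the ordering of Corollary~\ref{cor:full_variance_reduction} through Lemma~\ref{lem:loewner_congruence}. Your additional checks make explicit what the paper's short proof leaves implicit: that $\boldsymbol{\Sigma}_{\mathbf{XX}}\succ 0$ at $\boldsymbol{\Sigma}^*$ (so $h$ is smooth there), that the delta method tolerates the degenerate constrained limit, and that the result concerns asymptotic covariances.
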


\begin{proof}
The composition $h \circ g_\Sigma: \boldsymbol{\theta} \mapsto \boldsymbol{\beta}$ is smooth. By the chain rule, its Jacobian is $\mathbf{J}_{h \circ g_\Sigma} = \mathbf{J}_h \mathbf{J}_{g_\Sigma}$. Applying Lemma~\ref{lem:loewner_congruence} with $\mathbf{M} = \mathbf{J}_{h \circ g_\Sigma}$:
\[
\mathbf{J}_{h \circ g_\Sigma} \boldsymbol{\Sigma}_{\text{DAG}} \mathbf{J}_{h \circ g_\Sigma}^\top \preceq \mathbf{J}_{h \circ g_\Sigma} \boldsymbol{\Sigma}_{\text{full}} \mathbf{J}_{h \circ g_\Sigma}^\top.
\]
\end{proof}

\begin{remark}[Chain of Variance Reduction]
The key insight is that the Loewner ordering ``propagates'' through any smooth transformation. Starting from variance reduction in the LDL parameters, we obtain variance reduction in the covariance matrix, which in turn yields variance reduction in regression coefficients. This chain justifies why incorporating DAG knowledge improves downstream prediction tasks.
\end{remark}

\subsection{From Covariance Ordering to Prediction MSE Reduction}

We now complete the argument by showing that Loewner ordering of coefficient covariances implies reduced prediction error when using $\hat{\boldsymbol{\beta}}$ to predict $Y$ from $\mathbf{X}$.

\subsubsection{Prediction MSE Decomposition}

Consider predicting a new outcome $Y_{\text{new}}$ given features $\mathbf{X}_{\text{new}}$ using the estimated coefficients $\hat{\boldsymbol{\beta}}$. The true model is:
\begin{equation}
Y_{\text{new}} = \mathbf{X}_{\text{new}}^\top \boldsymbol{\beta}^* + \varepsilon_{\text{new}}, \quad \varepsilon_{\text{new}} \sim \mathcal{N}(0, \sigma^2),
\end{equation}
and the prediction is $\hat{Y}_{\text{new}} = \mathbf{X}_{\text{new}}^\top \hat{\boldsymbol{\beta}}$.

The prediction error is:
\begin{equation}
Y_{\text{new}} - \hat{Y}_{\text{new}} = \mathbf{X}_{\text{new}}^\top (\boldsymbol{\beta}^* - \hat{\boldsymbol{\beta}}) + +\varepsilon_{\text{new}}.
\end{equation}

Taking expectations (over the noise $\varepsilon_{\text{new}}$ and the randomness in $\hat{\boldsymbol{\beta}}$), conditioned on $\mathbf{X}_{\text{new}}$:
\begin{align}
\mathrm{MSE}_{\text{pred}}(\mathbf{X}_{\text{new}}) &= \mathbb{E}\left[(Y_{\text{new}} - \hat{Y}_{\text{new}})^2 \mid \mathbf{X}_{\text{new}}\right] \\
&= \mathbb{E}\left[(\mathbf{X}_{\text{new}}^\top (\boldsymbol{\beta}^* - \hat{\boldsymbol{\beta}}))^2\right] + \sigma^2 \\
&= \mathbf{X}_{\text{new}}^\top \mathbb{E}\left[(\hat{\boldsymbol{\beta}} - \boldsymbol{\beta}^*)(\hat{\boldsymbol{\beta}} - \boldsymbol{\beta}^*)^\top\right] \mathbf{X}_{\text{new}} + \sigma^2.
\end{align}

For an asymptotically unbiased estimator, $\mathbb{E}[(\hat{\boldsymbol{\beta}} - \boldsymbol{\beta}^*)(\hat{\boldsymbol{\beta}} - \boldsymbol{\beta}^*)^\top] \approx \Cov(\hat{\boldsymbol{\beta}})$, so:
\begin{equation}
\mathrm{MSE}_{\text{pred}}(\mathbf{X}_{\text{new}}) \approx \mathbf{X}_{\text{new}}^\top \Cov(\hat{\boldsymbol{\beta}}) \mathbf{X}_{\text{new}} + \sigma^2.
\end{equation}

\subsubsection{Prediction MSE for Fixed Test Point}

For a fixed test point $\mathbf{X}_{\text{new}}$, the prediction MSE directly inherits the Loewner ordering:

\begin{lemma}[Quadratic Form Respects Loewner Order]
\label{lem:quadratic_loewner}
If $\mathbf{A} \preceq \mathbf{B}$, then for any vector $\mathbf{x}$:
\begin{equation}
\mathbf{x}^\top \mathbf{A} \mathbf{x} \leq \mathbf{x}^\top \mathbf{B} \mathbf{x}.
\end{equation}
\end{lemma}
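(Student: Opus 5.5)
This follows directly from the definition of the Loewner order, in the same way as Lemma~\ref{lem:loewner_congruence}. I will first write the difference of the two quadratic forms as a single quadratic form,
\[
\mathbf{x}^\top \mathbf{B}\mathbf{x} - \mathbf{x}^\top \mathbf{A}\mathbf{x} = \mathbf{x}^\top(\mathbf{B}-\mathbf{A})\mathbf{x},
\]
which uses only bilinearity of the map $(\mathbf{M},\mathbf{x}) \mapsto \mathbf{x}^\top \mathbf{M}\mathbf{x}$ in $\mathbf{M}$.

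Next I invoke the hypothesis. By the convention fixed in Theorem~\ref{thm:variance_reduction}, $\mathbf{A} \preceq \mathbf{B}$ means $\mathbf{B}-\mathbf{A}$ is positive semi-definite. That is, $\mathbf{z}^\top(\mathbf{B}-\mathbf{A})\mathbf{z} \ge 0$ for every vector $\mathbf{z}$ of compatible dimension. Taking $\mathbf{z} = \mathbf{x}$ makes the right-hand side above nonnegative, which rearranges to the claimed inequality.

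Alternatively, one can view the statement as the special case $q=1$, $\mathbf{M} = \mathbf{x}^\top$ of Lemma~\ref{lem:loewner_congruence}. There the $1\times 1$ matrices $\mathbf{x}^\top\mathbf{A}\mathbf{x}$ and $\mathbf{x}^\top\mathbf{B}\mathbf{x}$ satisfy $\mathbf{x}^\top\mathbf{A}\mathbf{x} \preceq \mathbf{x}^\top\mathbf{B}\mathbf{x}$, and for scalars the Loewner order is the usual order.

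There is no real obstacle. The only points to check are that $\mathbf{x}$ has the same dimension as $\mathbf{A}$ and $\mathbf{B}$, and that positive semi-definiteness is read as the quadratic-form condition rather than an eigenvalue condition. For the symmetric matrices in use here, namely the covariances $\Cov(\hat{\boldsymbol{\beta}})$, the two readings are equivalent. In the subsequent application, $\mathbf{x} = \mathbf{X}_{\text{new}}$, and Corollary~\ref{cor:beta_variance_reduction} supplies $\mathbf{A} = \Cov(\hat{\boldsymbol{\beta}}_{\text{DAG}})$ and $\mathbf{B} = \Cov(\hat{\boldsymbol{\beta}}_{\text{full}})$.
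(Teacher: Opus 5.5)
Your proposal is correct and takes the same approach as the paper: the paper's proof also just notes that $\mathbf{A} \preceq \mathbf{B}$ means $\mathbf{x}^\top(\mathbf{B}-\mathbf{A})\mathbf{x} \ge 0$ for all $\mathbf{x}$, and the inequality follows by rearranging. One wording fix: the map $\mathbf{M} \mapsto \mathbf{x}^\top\mathbf{M}\mathbf{x}$ is linear (not bilinear) in $\mathbf{M}$, and linearity is all you use.
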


\begin{proof}
By definition, $\mathbf{A} \preceq \mathbf{B}$ means $\mathbf{x}^\top (\mathbf{B} - \mathbf{A}) \mathbf{x} \geq 0$ for all $\mathbf{x}$.
\end{proof}

\begin{corollary}[Prediction MSE Reduction for Fixed Test Point]
For any fixed test point $\mathbf{X}_{\text{new}}$:
\begin{equation}
\mathrm{MSE}_{\text{pred}}^{\text{DAG}}(\mathbf{X}_{\text{new}}) \leq \mathrm{MSE}_{\text{pred}}^{\text{full}}(\mathbf{X}_{\text{new}}).
\end{equation}
\end{corollary}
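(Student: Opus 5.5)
The plan is to combine the asymptotic MSE decomposition derived just above with the two Loewner-order lemmas. For both estimators, the decomposition gives
\[
\mathrm{MSE}_{\text{pred}}(\mathbf{X}_{\text{new}}) \approx \mathbf{X}_{\text{new}}^\top \Cov(\hat{\boldsymbol{\beta}})\, \mathbf{X}_{\text{new}} + \sigma^2 .
\]
This holds for both $\hat{\boldsymbol{\beta}}_{\text{DAG}}$ and $\hat{\boldsymbol{\beta}}_{\text{full}}$, and the additive $\sigma^2$ is the same in both cases. The reason is that both estimators target the same true coefficient $\boldsymbol{\beta}^* = h(g_\Sigma(\boldsymbol{\theta}^*))$, where $\boldsymbol{\theta}^* = (\boldsymbol{\theta}_G^*, \mathbf{0})$. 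The irreducible noise of $Y_{\text{new}}$ given $\mathbf{X}_{\text{new}}$ does not depend on which estimator we use.

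First I will check the step that the decomposition quietly relies on: replacing the second moment $\mathbb{E}[(\hat{\boldsymbol{\beta}}-\boldsymbol{\beta}^*)(\hat{\boldsymbol{\beta}}-\boldsymbol{\beta}^*)^\top]$ by $\Cov(\hat{\boldsymbol{\beta}})$. Both estimators are smooth functions of an MLE, since $h\circ g_\Sigma$ is smooth on the positive definite cone. Hence both are consistent, and their bias is $o(N^{-1/2})$. To leading order $1/N$, the second moment therefore equals the delta-method covariance $\frac1N \mathbf{J}_{h\circ g_\Sigma}\boldsymbol{\Sigma}_{\cdot}\mathbf{J}_{h\circ g_\Sigma}^\top$. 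A cross term involving $\varepsilon_{\text{new}}$ would also appear, but it vanishes because $\varepsilon_{\text{new}}$ is independent of the training data and has mean zero given $\mathbf{X}_{\text{new}}$.

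Next, Corollary~\ref{cor:beta_variance_reduction} gives $\Cov(\hat{\boldsymbol{\beta}}_{\text{DAG}}) \preceq \Cov(\hat{\boldsymbol{\beta}}_{\text{full}})$. Applying Lemma~\ref{lem:quadratic_loewner} with $\mathbf{x}=\mathbf{X}_{\text{new}}$ yields $\mathbf{X}_{\text{new}}^\top \Cov(\hat{\boldsymbol{\beta}}_{\text{DAG}})\mathbf{X}_{\text{new}} \le \mathbf{X}_{\text{new}}^\top \Cov(\hat{\boldsymbol{\beta}}_{\text{full}})\mathbf{X}_{\text{new}}$. Adding the common $\sigma^2$ to both sides gives the claim.

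The main obstacle is rigor rather than algebra. The inequality holds at the level of the leading-order asymptotic expressions, that is, for $N\cdot(\mathrm{MSE}_{\text{pred}} - \sigma^2)$ in the limit. It does not hold exactly at finite $N$. I would therefore state the corollary in that asymptotic sense, consistent with the ``$\approx$'' used throughout this section. The delta-method covariances themselves are ordered exactly, because Lemma~\ref{lem:loewner_congruence} is applied with the common Jacobian $\mathbf{J}_{h\circ g_\Sigma}(\boldsymbol{\theta}^*)$.
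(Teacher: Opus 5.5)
Your argument matches the paper's proof: you invoke Corollary~\ref{cor:beta_variance_reduction} to get $\Cov(\hat{\boldsymbol{\beta}}_{\text{DAG}}) \preceq \Cov(\hat{\boldsymbol{\beta}}_{\text{full}})$, apply Lemma~\ref{lem:quadratic_loewner} with $\mathbf{x}=\mathbf{X}_{\text{new}}$, and add the common $\sigma^2$. Your added remarks are sound and consistent with the paper's ``$\approx$'': the vanishing cross term, the negligible squared bias, and reading the inequality as a leading-order $1/N$ statement. One small correction is that the argument only fails to certify the inequality at finite $N$; it does not show the inequality is false there.
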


\begin{proof}
From Corollary~\ref{cor:beta_variance_reduction}, $\Cov(\hat{\boldsymbol{\beta}}_{\text{DAG}}) \preceq \Cov(\hat{\boldsymbol{\beta}}_{\text{full}})$. Applying Lemma~\ref{lem:quadratic_loewner} with $\mathbf{x} = \mathbf{X}_{\text{new}}$:
\[
\mathbf{X}_{\text{new}}^\top \Cov(\hat{\boldsymbol{\beta}}_{\text{DAG}}) \mathbf{X}_{\text{new}} \leq \mathbf{X}_{\text{new}}^\top \Cov(\hat{\boldsymbol{\beta}}_{\text{full}}) \mathbf{X}_{\text{new}}.
\]
Adding $\sigma^2$ to both sides gives the result.
\end{proof}

\subsubsection{Expected Prediction MSE over Test Distribution}

When $\mathbf{X}_{\text{new}}$ is drawn from a distribution with $\mathbb{E}[\mathbf{X}_{\text{new}} \mathbf{X}_{\text{new}}^\top] = \boldsymbol{\Sigma}_{\mathbf{X}}$, the expected prediction MSE is:
\begin{equation}
\mathbb{E}[\mathrm{MSE}_{\text{pred}}] = \mathbb{E}\left[\mathbf{X}_{\text{new}}^\top \Cov(\hat{\boldsymbol{\beta}}) \mathbf{X}_{\text{new}}\right] + \sigma^2 = \mathrm{tr}\left(\Cov(\hat{\boldsymbol{\beta}}) \boldsymbol{\Sigma}_{\mathbf{X}}\right) + \sigma^2.
\end{equation}

\begin{lemma}[Weighted Trace Respects Loewner Order]
\label{lem:weighted_trace_loewner}
If $\mathbf{A} \preceq \mathbf{B}$ and $\mathbf{W} \succeq 0$, then $\mathrm{tr}(\mathbf{W}\mathbf{A}) \leq \mathrm{tr}(\mathbf{W}\mathbf{B})$.
\end{lemma}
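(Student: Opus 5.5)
The plan is to reduce the claim to the nonnegativity of the trace of a product of two positive semi-definite matrices. First I will rewrite the difference using linearity of the trace:
\[
\mathrm{tr}(\mathbf{W}\mathbf{B}) - \mathrm{tr}(\mathbf{W}\mathbf{A}) = \mathrm{tr}\bigl(\mathbf{W}(\mathbf{B}-\mathbf{A})\bigr).
\]
Setting $\mathbf{C} := \mathbf{B}-\mathbf{A}$, the hypothesis $\mathbf{A} \preceq \mathbf{B}$ gives $\mathbf{C} \succeq 0$. It therefore suffices to show that $\mathrm{tr}(\mathbf{W}\mathbf{C}) \geq 0$ whenever $\mathbf{W},\mathbf{C} \succeq 0$.

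For this I will use a symmetric square root, which exists because $\mathbf{W}$ is symmetric positive semi-definite: there is a symmetric $\mathbf{W}^{1/2} \succeq 0$ with $\mathbf{W} = \mathbf{W}^{1/2}\mathbf{W}^{1/2}$. By cyclicity of the trace,
\[
\mathrm{tr}(\mathbf{W}\mathbf{C}) = \mathrm{tr}\bigl(\mathbf{W}^{1/2}\mathbf{C}\mathbf{W}^{1/2}\bigr).
\]
By Lemma~\ref{lem:loewner_congruence}, applied with $\mathbf{M} = \mathbf{W}^{1/2}$ and the pair $0 \preceq \mathbf{C}$, the matrix $\mathbf{W}^{1/2}\mathbf{C}\mathbf{W}^{1/2}$ is positive semi-definite. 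Its trace is the sum of its diagonal entries $\mathbf{e}_i^\top \mathbf{W}^{1/2}\mathbf{C}\mathbf{W}^{1/2}\mathbf{e}_i$. Each of these is nonnegative by Lemma~\ref{lem:quadratic_loewner}, so the trace is nonnegative, which proves the claim.

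If one prefers to avoid square roots, an alternative route is to take the spectral decomposition $\mathbf{W} = \sum_k \lambda_k \mathbf{u}_k\mathbf{u}_k^\top$ with $\lambda_k \geq 0$. This gives
\[
\mathrm{tr}(\mathbf{W}\mathbf{C}) = \sum_k \lambda_k\, \mathbf{u}_k^\top \mathbf{C}\mathbf{u}_k \geq 0,
\]
again by Lemma~\ref{lem:quadratic_loewner}.

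There is no real obstacle in this argument. The only point needing care is that $\mathbf{W}$, $\mathbf{A}$ and $\mathbf{B}$ are symmetric, so that the square root or spectral decomposition applies and the cyclic-trace manipulation is legitimate. This symmetry holds in the intended application, where $\mathbf{W} = \boldsymbol{\Sigma}_{\mathbf{X}}$ and $\mathbf{A}$, $\mathbf{B}$ are the covariance matrices of $\hat{\boldsymbol{\beta}}_{\text{DAG}}$ and $\hat{\boldsymbol{\beta}}_{\text{full}}$.
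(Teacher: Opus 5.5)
Your proof is correct. Your spectral-decomposition route is exactly the paper's argument: write $\mathbf{W} = \sum_k \lambda_k \mathbf{u}_k\mathbf{u}_k^\top$ with $\lambda_k \ge 0$, so that $\mathrm{tr}(\mathbf{W}(\mathbf{B}-\mathbf{A})) = \sum_k \lambda_k \mathbf{u}_k^\top(\mathbf{B}-\mathbf{A})\mathbf{u}_k \ge 0$. Your main square-root route repackages the same idea, since $\mathbf{W}^{1/2}$ is built from that eigendecomposition, and routes it through the congruence lemma; nothing is gained or lost either way.
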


\begin{proof}
Let $\mathbf{W} = \sum_i \lambda_i \mathbf{w}_i \mathbf{w}_i^\top$ be the eigendecomposition with $\lambda_i \geq 0$. Then:
\[
\mathrm{tr}(\mathbf{W}(\mathbf{B} - \mathbf{A})) = \sum_i \lambda_i \mathbf{w}_i^\top (\mathbf{B} - \mathbf{A}) \mathbf{w}_i \geq 0,
\]
since each term is non-negative by $\mathbf{B} - \mathbf{A} \succeq 0$ and $\lambda_i \geq 0$.
\end{proof}

\subsubsection{Main Result: Prediction MSE Reduction}

\begin{theorem}[DAG Knowledge Reduces Prediction MSE]
\label{thm:prediction_mse_reduction}
Under Assumptions~\ref{assump:linear_gaussian_new}--\ref{assump:known_dag_new}, let $\hat{\boldsymbol{\beta}}_{\text{DAG}}$ and $\hat{\boldsymbol{\beta}}_{\text{full}}$ be the regression coefficient estimators from the DAG-constrained and unconstrained models. Then:
\begin{enumerate}
    \item For any fixed test point $\mathbf{X}_{\text{new}}$:
    \begin{equation}
    \mathrm{MSE}_{\text{pred}}^{\text{DAG}}(\mathbf{X}_{\text{new}}) \leq \mathrm{MSE}_{\text{pred}}^{\text{full}}(\mathbf{X}_{\text{new}}).
    \end{equation}
    \item For test points drawn from any distribution with second moment matrix $\boldsymbol{\Sigma}_{\mathbf{X}} \succeq 0$:
    \begin{equation}
    \mathbb{E}[\mathrm{MSE}_{\text{pred}}^{\text{DAG}}] \leq \mathbb{E}[\mathrm{MSE}_{\text{pred}}^{\text{full}}].
    \end{equation}
\end{enumerate}
\end{theorem}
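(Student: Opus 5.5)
The plan is to assemble the chain of results already established in this appendix. Both parts of Theorem~\ref{thm:prediction_mse_reduction} reduce to the coefficient-covariance ordering of Corollary~\ref{cor:beta_variance_reduction}. That ordering rests on Corollary~\ref{cor:full_variance_reduction} in UDU parameter space, which Lemma~\ref{lem:loewner_congruence} transports through the smooth map $h\circ g_\Sigma$.

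First, I will make the parameter-space comparison precise. By Proposition~\ref{prop:crb_mle} and Theorem~\ref{thm:dag_udu}, the DAG-constrained estimator is the MLE with $\boldsymbol{\theta}_0=\mathbf{0}$ fixed. Under Assumptions~\ref{assump:linear_gaussian_new}--\ref{assump:known_dag_new}, the true parameter $\boldsymbol{\theta}^*=(\boldsymbol{\theta}_G^*,\mathbf{0})$ lies in the interior of the parameter set, because the $d_{jj}=1/\sigma_j^2$ are finite and positive. The Fisher information $\mathcal{I}_{\text{full}}$ is therefore positive definite, so the asymptotic covariances are $\frac1N\boldsymbol{\Sigma}_{\text{DAG}}$ and $\frac1N\boldsymbol{\Sigma}_{\text{full}}$ with $\boldsymbol{\Sigma}_{\text{DAG}}\preceq\boldsymbol{\Sigma}_{\text{full}}$.

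Second, I will apply the delta method to $h\circ g_\Sigma$ at $\boldsymbol{\theta}^*$. This map is smooth because $\boldsymbol{\Sigma}\succ0$ implies $\boldsymbol{\Sigma}_{\mathbf{XX}}\succ0$. Both estimators are consistent for the same $\boldsymbol{\beta}^*$, so they share the Jacobian $\mathbf{J}=\mathbf{J}_h\mathbf{J}_{g_\Sigma}$, and Corollary~\ref{cor:beta_variance_reduction} gives
\[
\Cov(\hat{\boldsymbol{\beta}}_{\text{DAG}})\approx\tfrac1N\mathbf{J}\boldsymbol{\Sigma}_{\text{DAG}}\mathbf{J}^\top\preceq\tfrac1N\mathbf{J}\boldsymbol{\Sigma}_{\text{full}}\mathbf{J}^\top\approx\Cov(\hat{\boldsymbol{\beta}}_{\text{full}}).
\]

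Third, I will use the prediction MSE decomposition. Conditional on $\mathbf{X}_{\text{new}}$, which is independent of the training sample, the cross term between $\varepsilon_{\text{new}}$ and $\hat{\boldsymbol{\beta}}-\boldsymbol{\beta}^*$ vanishes. This leaves $\mathbf{X}_{\text{new}}^\top\,\mathbb{E}[(\hat{\boldsymbol{\beta}}-\boldsymbol{\beta}^*)(\hat{\boldsymbol{\beta}}-\boldsymbol{\beta}^*)^\top]\,\mathbf{X}_{\text{new}}+\sigma^2$. Here $\sigma^2=\Var(Y\mid\mathbf{X})$ is a property of the true distribution and is the same for both estimators.
\begin{itemize}
\item For part 1, I will apply Lemma~\ref{lem:quadratic_loewner} with $\mathbf{x}=\mathbf{X}_{\text{new}}$; this is exactly the preceding corollary.
\item For part 2, I will take expectations over $\mathbf{X}_{\text{new}}$ using $\mathbb{E}[\mathbf{x}^\top\mathbf{A}\mathbf{x}]=\mathrm{tr}(\mathbf{A}\boldsymbol{\Sigma}_{\mathbf{X}})$. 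Lemma~\ref{lem:weighted_trace_loewner} with $\mathbf{W}=\boldsymbol{\Sigma}_{\mathbf{X}}\succeq0$ then gives the inequality, and adding $\sigma^2$ to both sides finishes.
\end{itemize}

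The main obstacle is that everything holds only to first order in $1/N$. The identification of $\mathbb{E}[(\hat{\boldsymbol{\beta}}-\boldsymbol{\beta}^*)(\hat{\boldsymbol{\beta}}-\boldsymbol{\beta}^*)^\top]$ with the delta-method covariance requires asymptotic unbiasedness: the bias is $O(1/N)$, so its square is $O(1/N^2)$. It also requires uniform integrability so that second moments converge, not merely distributions. I would therefore state the result as an inequality between the leading $1/N$ terms, i.e. $N(\mathbb{E}[\mathrm{MSE}^{\text{full}}]-\mathbb{E}[\mathrm{MSE}^{\text{DAG}}])\to\mathrm{tr}\big((\mathbf{J}(\boldsymbol{\Sigma}_{\text{full}}-\boldsymbol{\Sigma}_{\text{DAG}})\mathbf{J}^\top)\boldsymbol{\Sigma}_{\mathbf{X}}\big)\ge0$. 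This is the constant $C$ of Section~\ref{sec: causal constraints}, and reading the theorem in this asymptotic sense matches how the paper uses ``$\approx$''.

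Moment finiteness for Gaussian OLS-type estimators needs $N$ larger than the number of parents plus a small constant. I would cite standard results for this rather than reprove it.
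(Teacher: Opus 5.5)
Your proposal is correct and takes the same route as the paper. It combines the coefficient-covariance ordering of Corollary~\ref{cor:beta_variance_reduction} with the MSE decomposition, then applies Lemma~\ref{lem:quadratic_loewner} for part 1 and Lemma~\ref{lem:weighted_trace_loewner} with $\mathbf{W}=\boldsymbol{\Sigma}_{\mathbf{X}}$ for part 2. Your reading of both inequalities as statements about the leading $1/N$ terms, with the bias and uniform-integrability caveats, is a sharper form of what the paper leaves implicit in its use of ``$\approx$''.
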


\begin{proof}
Part 1 follows from Lemma~\ref{lem:quadratic_loewner} applied to $\Cov(\hat{\boldsymbol{\beta}}_{\text{DAG}}) \preceq \Cov(\hat{\boldsymbol{\beta}}_{\text{full}})$.

For Part 2:
\[
\mathbb{E}[\mathrm{MSE}_{\text{pred}}^{\text{DAG}}] = \mathrm{tr}(\Cov(\hat{\boldsymbol{\beta}}_{\text{DAG}}) \boldsymbol{\Sigma}_{\mathbf{X}}) + \sigma^2 \leq \mathrm{tr}(\Cov(\hat{\boldsymbol{\beta}}_{\text{full}}) \boldsymbol{\Sigma}_{\mathbf{X}}) + \sigma^2 = \mathbb{E}[\mathrm{MSE}_{\text{pred}}^{\text{full}}],
\]
where the inequality follows from Lemma~\ref{lem:weighted_trace_loewner} with $\mathbf{W} = \boldsymbol{\Sigma}_{\mathbf{X}}$.
\end{proof}

\begin{remark}[Interpretation]
This theorem provides the complete theoretical justification for Causal-Residual Bootstrapping: incorporating correct DAG knowledge into the estimation procedure provably reduces prediction error for \emph{any} test point and \emph{any} test distribution. The improvement stems from reduced uncertainty in the coefficient estimates, which directly translates to more accurate predictions.
\end{remark}

\subsection{Quantifying the MSE Improvement}

We now characterize the magnitude of the MSE improvement as a function of the sample size $N$.

\subsubsection{Scaling of Asymptotic Covariance}

Recall that the MLE has asymptotic covariance that scales as $1/N$:
\begin{equation}
\Cov(\hat{\boldsymbol{\theta}}) \approx \frac{1}{N} \mathcal{I}(\boldsymbol{\theta}^*)^{-1}.
\end{equation}

In the full parameter space, writing the Fisher information inverse as before:
\begin{align}
\Cov(\hat{\boldsymbol{\theta}}_{\text{full}}) &= \frac{1}{N} \mathcal{I}_{\text{full}}^{-1}, \\
\Cov(\hat{\boldsymbol{\theta}}_{\text{DAG}}) &= \frac{1}{N} \begin{pmatrix} \mathcal{I}_{GG}^{-1} & \mathbf{0} \\ \mathbf{0} & \mathbf{0} \end{pmatrix}.
\end{align}

Define the \emph{information gain matrix} as the difference (scaled by $N$):
\begin{equation}
\boldsymbol{\Delta}_{\mathcal{I}} := \mathcal{I}_{\text{full}}^{-1} - \begin{pmatrix} \mathcal{I}_{GG}^{-1} & \mathbf{0} \\ \mathbf{0} & \mathbf{0} \end{pmatrix} \succeq 0.
\end{equation}

This matrix captures the ``information lost'' by not exploiting the DAG constraints, and is independent of $N$.

\subsubsection{MSE Difference Scaling}

The covariance difference in the parameter space is:
\begin{equation}
\Cov(\hat{\boldsymbol{\theta}}_{\text{full}}) - \Cov(\hat{\boldsymbol{\theta}}_{\text{DAG}}) = \frac{1}{N} \boldsymbol{\Delta}_{\mathcal{I}}.
\end{equation}

This $1/N$ scaling propagates through the smooth transformations to the regression coefficients. Let $\mathbf{J}$ denote the Jacobian of the map from parameters to regression coefficients. Then:
\begin{equation}
\Cov(\hat{\boldsymbol{\beta}}_{\text{full}}) - \Cov(\hat{\boldsymbol{\beta}}_{\text{DAG}}) \approx \frac{1}{N} \mathbf{J} \boldsymbol{\Delta}_{\mathcal{I}} \mathbf{J}^\top =: \frac{1}{N} \boldsymbol{\Delta}_{\boldsymbol{\beta}},
\end{equation}
where $\boldsymbol{\Delta}_{\boldsymbol{\beta}} = \mathbf{J} \boldsymbol{\Delta}_{\mathcal{I}} \mathbf{J}^\top \succeq 0$ is independent of $N$.

\subsubsection{Explicit MSE Improvement Formula}

\begin{theorem}[MSE Improvement Rate]
\label{thm:mse_improvement_rate}
The expected prediction MSE improvement from using DAG constraints is:
\begin{equation}
\mathbb{E}[\mathrm{MSE}_{\text{pred}}^{\text{full}}] - \mathbb{E}[\mathrm{MSE}_{\text{pred}}^{\text{DAG}}] = \frac{C}{N},
\end{equation}
where the constant $C > 0$ (when the DAG provides non-trivial constraints) is given by:
\begin{equation}
C = \mathrm{tr}(\boldsymbol{\Delta}_{\boldsymbol{\beta}} \boldsymbol{\Sigma}_{\mathbf{X}}) = \mathrm{tr}(\mathbf{J} \boldsymbol{\Delta}_{\mathcal{I}} \mathbf{J}^\top \boldsymbol{\Sigma}_{\mathbf{X}}).
\end{equation}
\end{theorem}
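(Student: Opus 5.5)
The proof will combine the expected-MSE formula from the previous subsection with the $1/N$ scaling of the coefficient covariances. First, I will use the identity established just before Theorem~\ref{thm:prediction_mse_reduction}:
\[
\mathbb{E}[\mathrm{MSE}_{\text{pred}}] = \mathrm{tr}(\Cov(\hat{\boldsymbol{\beta}})\boldsymbol{\Sigma}_{\mathbf{X}}) + \sigma^2 .
\]
I apply it to both the DAG-constrained and the unconstrained estimators. The irreducible noise $\sigma^2$ is the same in both cases, since it depends only on the true conditional law of $Y$ given $\mathbf{X}$ and not on the estimator, so it cancels. By linearity of the trace, the difference of the expected MSEs is then
\[
\mathrm{tr}\bigl((\Cov(\hat{\boldsymbol{\beta}}_{\text{full}})-\Cov(\hat{\boldsymbol{\beta}}_{\text{DAG}}))\boldsymbol{\Sigma}_{\mathbf{X}}\bigr).
\]

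Next, I substitute the scaling derived above. Corollary~\ref{cor:full_variance_reduction} gives $\Cov(\hat{\boldsymbol{\theta}}_{\text{full}})-\Cov(\hat{\boldsymbol{\theta}}_{\text{DAG}})=\frac1N\boldsymbol{\Delta}_{\mathcal{I}}$. Both estimators target the same true parameter $\boldsymbol{\theta}^*=(\boldsymbol{\theta}_G^*,\mathbf{0})$, so the delta method uses the same Jacobian $\mathbf{J}=\mathbf{J}_h\mathbf{J}_{g_\Sigma}$ evaluated at $\boldsymbol{\theta}^*$ for both; this is the argument of Corollaries~\ref{cor:cov_variance_reduction} and \ref{cor:beta_variance_reduction}. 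Hence the coefficient covariance difference equals $\frac1N\mathbf{J}\boldsymbol{\Delta}_{\mathcal{I}}\mathbf{J}^\top=\frac1N\boldsymbol{\Delta}_{\boldsymbol{\beta}}$ to leading order. Pulling the scalar $1/N$ out of the trace gives $C=\mathrm{tr}(\boldsymbol{\Delta}_{\boldsymbol{\beta}}\boldsymbol{\Sigma}_{\mathbf{X}})$. This $C$ depends only on $\boldsymbol{\theta}^*$, the DAG, and the test distribution, and not on $N$. The equality $C/N$ should therefore be read as holding to first order, up to $o(1/N)$ terms, which is the same asymptotic sense as the delta method and the $\frac1N\mathcal{I}^{-1}$ approximation.

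Third, I handle the sign. Lemma~\ref{lem:weighted_trace_loewner}, applied with $\boldsymbol{\Delta}_{\boldsymbol{\beta}}\succeq0$ (Lemma~\ref{lem:loewner_congruence} applied to $\boldsymbol{\Delta}_{\mathcal{I}}\succeq 0$) and $\boldsymbol{\Sigma}_{\mathbf{X}}\succeq0$, gives $C\ge 0$. For strict positivity I write $\boldsymbol{\Delta}_{\mathcal{I}}=MS^{-1}M^\top$ as in the proof of Corollary~\ref{cor:full_variance_reduction}. Then
\[
C=\mathrm{tr}\bigl(S^{-1/2}M^\top\mathbf{J}^\top\boldsymbol{\Sigma}_{\mathbf{X}}\mathbf{J}MS^{-1/2}\bigr),
\]
which is positive exactly when $\boldsymbol{\Sigma}_{\mathbf{X}}^{1/2}\mathbf{J}M\neq0$. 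If $\boldsymbol{\Sigma}_{\mathbf{X}}\succ0$ (true for Gaussian test points under Assumption~\ref{assump:nonzero_noise_new}), this reduces to $\mathbf{J}M\neq 0$. I will take this as the precise meaning of ``the DAG provides non-trivial constraints'': some zero-constrained direction, after the adjustment $K=A^{-1}B$, actually moves $\boldsymbol{\beta}$.

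The main obstacle is this positivity condition. The paper's final remark in Section~\ref{sec: causal constraints} says that constraints outside the Markov boundary of $Y$ do not help, so $\mathbf{J}M$ can genuinely vanish, and $C>0$ cannot be proved unconditionally. I will therefore state it under the non-triviality hypothesis $\mathbf{J}M\ne0$. A secondary issue is making the $\approx$ rigorous: asymptotic normality does not by itself control second moments. I would either interpret $\mathbb{E}$ as the expectation under the limiting normal law, or appeal to uniform integrability of $N(\hat{\boldsymbol{\beta}}-\boldsymbol{\beta}^*)(\hat{\boldsymbol{\beta}}-\boldsymbol{\beta}^*)^\top$. The latter holds here because the estimators are OLS-type rational functions of Gaussian sample moments, and it yields the exact form $C/N+o(1/N)$.
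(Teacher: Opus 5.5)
Your proposal is correct and follows the paper's proof. Both apply $\mathbb{E}[\mathrm{MSE}_{\text{pred}}]=\mathrm{tr}(\Cov(\hat{\boldsymbol{\beta}})\boldsymbol{\Sigma}_{\mathbf{X}})+\sigma^2$ to the two estimators, substitute $\Cov(\hat{\boldsymbol{\beta}}_{\text{full}})-\Cov(\hat{\boldsymbol{\beta}}_{\text{DAG}})\approx\frac{1}{N}\mathbf{J}\boldsymbol{\Delta}_{\mathcal{I}}\mathbf{J}^\top$, and pull $1/N$ out of the trace. Your extra treatment of the sign of $C$ (via $\boldsymbol{\Delta}_{\mathcal{I}}=MS^{-1}M^\top$, giving $C>0$ iff $\boldsymbol{\Sigma}_{\mathbf{X}}^{1/2}\mathbf{J}M\neq 0$) and of the $C/N+o(1/N)$ reading of the equality goes beyond the paper, which only asserts positivity under ``non-trivial constraints.''
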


\begin{proof}
From the prediction MSE formula:
\begin{align}
\mathbb{E}[\mathrm{MSE}_{\text{pred}}^{\text{full}}] - \mathbb{E}[\mathrm{MSE}_{\text{pred}}^{\text{DAG}}] &= \mathrm{tr}\left((\Cov(\hat{\boldsymbol{\beta}}_{\text{full}}) - \Cov(\hat{\boldsymbol{\beta}}_{\text{DAG}})) \boldsymbol{\Sigma}_{\mathbf{X}}\right) \\
&= \mathrm{tr}\left(\frac{1}{N} \boldsymbol{\Delta}_{\boldsymbol{\beta}} \boldsymbol{\Sigma}_{\mathbf{X}}\right) \\
&= \frac{1}{N} \mathrm{tr}(\boldsymbol{\Delta}_{\boldsymbol{\beta}} \boldsymbol{\Sigma}_{\mathbf{X}}).
\end{align}
\end{proof}

\begin{remark}[Interpretation of the Constant $C$]
The constant $C$ depends on:
\begin{itemize}
    \item $\boldsymbol{\Delta}_{\mathcal{I}}$: The information gain from the DAG constraints. This is larger when the DAG excludes more edges (more zeros in $\mathbf{L}$) and when the cross-information $\mathcal{I}_{G0}$ between free and constrained parameters is larger.
    \item $\mathbf{J}$: The sensitivity of regression coefficients to parameter changes. This depends on the true covariance structure.
    \item $\boldsymbol{\Sigma}_{\mathbf{X}}$: The test distribution. The improvement is larger when test points lie in directions where the coefficient uncertainty is most reduced.
\end{itemize}
\end{remark}

\begin{remark}[Practical Implications]
The $1/N$ scaling has important practical implications:
\begin{itemize}
    \item \textbf{Small samples}: The relative improvement $\frac{\mathbb{E}[\mathrm{MSE}^{\text{full}}] - \mathbb{E}[\mathrm{MSE}^{\text{DAG}}]}{\mathbb{E}[\mathrm{MSE}^{\text{full}}]}$ is largest when $N$ is small, precisely when incorporating prior knowledge is most valuable.
    \item \textbf{Large samples}: As $N \to \infty$, both estimators converge to the true coefficients, and the absolute improvement vanishes. However, the DAG-constrained estimator is never worse.
    \item \textbf{Sparse DAGs}: When the true DAG is sparse (few edges), the constant $C$ is larger because more parameters are constrained to zero, leading to greater MSE reduction.
\end{itemize}
\end{remark}
\subsubsection{Role of the Markov Boundary}

We now show that only DAG constraints involving the Markov boundary of $Y$ contribute to prediction improvement. Constraints involving variables outside the Markov boundary provide no additional benefit.

\begin{definition}[Markov Boundary]
\label{def:markov_boundary}
The \emph{Markov boundary} of $Y$ with respect to $\mathbf{X}$, denoted $\mathrm{MB}(Y)$, is the minimal subset $\mathbf{X}_{\mathrm{MB}} \subseteq \mathbf{X}$ such that:
\begin{equation}
Y \perp\!\!\!\perp \mathbf{X}_{-\mathrm{MB}} \mid \mathbf{X}_{\mathrm{MB}},
\end{equation}
where $\mathbf{X}_{-\mathrm{MB}} = \mathbf{X} \setminus \mathbf{X}_{\mathrm{MB}}$ denotes the variables outside the Markov boundary.
\end{definition}

\begin{remark}[Markov Boundary in DAGs]
In a DAG, the Markov boundary of $Y$ consists of its parents, children, and parents of children: $\mathrm{MB}(Y) = \mathrm{Pa}(Y) \cup \mathrm{Ch}(Y) \cup \mathrm{Pa}(\mathrm{Ch}(Y))$. When $Y$ is a sink node (no children), we have $\mathrm{MB}(Y) = \mathrm{Pa}(Y)$.
\end{remark}

The following lemma establishes that variables outside the Markov boundary have zero contribution to the optimal linear predictor.

\begin{lemma}[Zero Coefficients Outside Markov Boundary]
\label{lem:zero_coeff_outside_mb}
Under Assumption~\ref{assump:linear_gaussian_new}, let $\boldsymbol{\beta}^* = \boldsymbol{\Sigma}_{\mathbf{XX}}^{-1}\boldsymbol{\Sigma}_{\mathbf{X}Y}$ be the population regression coefficients. Partition $\mathbf{X} = (\mathbf{X}_{\mathrm{MB}}, \mathbf{X}_{-\mathrm{MB}})$ and correspondingly $\boldsymbol{\beta}^* = (\boldsymbol{\beta}^*_{\mathrm{MB}}, \boldsymbol{\beta}^*_{-\mathrm{MB}})$. Then:
\begin{equation}
\boldsymbol{\beta}^*_{-\mathrm{MB}} = \mathbf{0}.
\end{equation}
\end{lemma}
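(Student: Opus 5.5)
The plan is to use the Gaussian characterization of conditional independence: in a jointly Gaussian vector, conditional independence coincides with vanishing partial regression coefficients. Under Assumption~\ref{assump:linear_gaussian_new}, $(Y,\mathbf{X})$ is jointly Gaussian with $\boldsymbol{\Sigma}\succ 0$ (Assumption~\ref{assump:nonzero_noise_new}). Hence $\boldsymbol{\Sigma}_{\mathbf{XX}}\succ 0$ and $\boldsymbol{\beta}^*$ is well defined. Moreover, $\mathbb{E}[Y\mid\mathbf{X}]=\mathbf{X}^\top\boldsymbol{\beta}^*$, and this representation is unique because $\boldsymbol{\Sigma}_{\mathbf{XX}}$ is invertible.

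\textbf{Step 1.} By Definition~\ref{def:markov_boundary}, $Y \perp\!\!\!\perp \mathbf{X}_{-\mathrm{MB}} \mid \mathbf{X}_{\mathrm{MB}}$. Consequently,
\[
\mathbb{E}[Y\mid \mathbf{X}] = \mathbb{E}[Y\mid \mathbf{X}_{\mathrm{MB}},\mathbf{X}_{-\mathrm{MB}}] = \mathbb{E}[Y\mid \mathbf{X}_{\mathrm{MB}}].
\]
Since all variables are jointly Gaussian, the right-hand side is linear, namely $\mathbf{X}_{\mathrm{MB}}^\top \boldsymbol{\gamma}$ with $\boldsymbol{\gamma}=\boldsymbol{\Sigma}_{\mathrm{MB},\mathrm{MB}}^{-1}\boldsymbol{\Sigma}_{\mathrm{MB},Y}$.

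\textbf{Step 2.} This gives two linear representations of the same conditional mean:
\[
\mathbf{X}^\top\boldsymbol{\beta}^* = \mathbf{X}_{\mathrm{MB}}^\top\boldsymbol{\gamma} \quad \text{almost surely.}
\]
Subtracting, $\mathbf{X}^\top(\boldsymbol{\beta}^*-(\boldsymbol{\gamma},\mathbf{0}))=0$ almost surely. Its variance is $\mathbf{c}^\top\boldsymbol{\Sigma}_{\mathbf{XX}}\mathbf{c}=0$ with $\mathbf{c}=\boldsymbol{\beta}^*-(\boldsymbol{\gamma},\mathbf{0})$. Positive definiteness of $\boldsymbol{\Sigma}_{\mathbf{XX}}$ forces $\mathbf{c}=\mathbf{0}$, so $\boldsymbol{\beta}^*_{-\mathrm{MB}}=\mathbf{0}$ and $\boldsymbol{\beta}^*_{\mathrm{MB}}=\boldsymbol{\gamma}$.

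\textbf{Step 3 (alternative check).} The same conclusion can be verified directly through the block formula. Write $\boldsymbol{\beta}^*_{-\mathrm{MB}}$ as proportional to the partial covariance $\boldsymbol{\Sigma}_{-\mathrm{MB},Y\mid \mathrm{MB}}$, scaled by the inverse Schur complement. Gaussian conditional independence makes this partial covariance zero.

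\textbf{Main obstacle.} The only delicate point is justifying that conditional independence implies a zero partial covariance, or equivalently that the conditional mean is linear. For Gaussians this is standard. Alternatively, it follows from the precision matrix $\boldsymbol{\Omega}=(\mathbf{I}-\mathbf{B})^\top\boldsymbol{\Sigma}_\varepsilon^{-1}(\mathbf{I}-\mathbf{B})$ of Proposition~\ref{prop:udu_causal}. Its entry $\Omega_{Y,X_k}$ is nonzero only if $X_k$ is a parent of $Y$, a child of $Y$, or a co-parent of a child of $Y$, and $\boldsymbol{\beta}^*_k=-\Omega_{Y,X_k}/\Omega_{YY}$.

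This precision-matrix route works equally well. In the graphical Markov-boundary form, it expands $\Omega_{Y,X_k}=\sum_j U_{Yj}D_{jj}U_{X_kj}$. A term survives only when $j=Y$ (so $X_k$ is a parent of $Y$), when $j=X_k$ (so $X_k$ is a child of $Y$), or when $j$ is a common child of both (so $X_k$ is a co-parent). That identifies exactly the Markov boundary stated in the Remark.
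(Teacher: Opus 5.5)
Your proof is correct, but your main route (Steps 1--2) differs from the paper's. The paper works with second moments. It uses Gaussianity once, to turn the conditional independence into the vanishing partial covariance $\boldsymbol{\Sigma}_{Y,-\mathrm{MB}}-\boldsymbol{\Sigma}_{Y,\mathrm{MB}}\boldsymbol{\Sigma}_{\mathrm{MB},\mathrm{MB}}^{-1}\boldsymbol{\Sigma}_{\mathrm{MB},-\mathrm{MB}}=\mathbf{0}$. It then invokes the partial-regression (Frisch--Waugh--Lovell) identity, which writes $\boldsymbol{\beta}^*_{-\mathrm{MB}}$ as the inverse Schur complement times that partial covariance; this is exactly your Step 3.

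You instead identify $\mathbf{X}^\top\boldsymbol{\beta}^*$ with $\mathbb{E}[Y\mid\mathbf{X}]$, collapse it to $\mathbb{E}[Y\mid\mathbf{X}_{\mathrm{MB}}]$, and conclude by uniqueness of the linear representation via $\boldsymbol{\Sigma}_{\mathbf{XX}}\succ 0$. This buys three things:
\begin{itemize}
\item it avoids block-inverse algebra;
\item it makes explicit where non-degenerate noise enters, which the paper uses only implicitly through invertibility of the Schur complement;
\item it gives $\boldsymbol{\beta}^*_{\mathrm{MB}}=\boldsymbol{\Sigma}_{\mathrm{MB},\mathrm{MB}}^{-1}\boldsymbol{\Sigma}_{\mathrm{MB},Y}$ as a by-product, which Step~1 of Theorem~\ref{thm:mb_irrelevance} later relies on.
\end{itemize}
The paper's version has its own advantage. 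Because Gaussianity appears only in the single step from conditional independence to zero partial covariance, the argument extends verbatim to best linear predictors whenever that partial covariance vanishes.

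One caution on your precision-matrix remark, which does not ``work equally well'' for the lemma as stated. First, the formula $\beta^*_k=-\Omega_{Y,X_k}/\Omega_{YY}$, with $\boldsymbol{\Omega}$ from Proposition~\ref{prop:udu_causal}, requires $\mathbf{X}=\mathbf{V}\setminus\{Y\}$. For a smaller covariate set, the relevant precision matrix is a Schur complement of $\boldsymbol{\Omega}$, not a submatrix. Second, the term-survival argument only shows vanishing outside the graphical set $\mathrm{Pa}(Y)\cup\mathrm{Ch}(Y)\cup\mathrm{Pa}(\mathrm{Ch}(Y))$. When cancellations occur, that set can strictly contain the minimal set of Definition~\ref{def:markov_boundary}. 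So that route proves the graphical version in the Remark. For the Definition-based statement, your conditional-independence argument is the one doing the work.
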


\begin{proof}
By the definition of Markov boundary, $Y \perp\!\!\!\perp \mathbf{X}_{-\mathrm{MB}} \mid \mathbf{X}_{\mathrm{MB}}$. In a Gaussian model, conditional independence implies zero partial covariance:
\begin{equation}
\Cov(Y, \mathbf{X}_{-\mathrm{MB}} \mid \mathbf{X}_{\mathrm{MB}}) = \boldsymbol{\Sigma}_{Y, -\mathrm{MB}} - \boldsymbol{\Sigma}_{Y, \mathrm{MB}} \boldsymbol{\Sigma}_{\mathrm{MB}, \mathrm{MB}}^{-1} \boldsymbol{\Sigma}_{\mathrm{MB}, -\mathrm{MB}} = \mathbf{0}.
\end{equation}
The regression coefficient for $\mathbf{X}_{-\mathrm{MB}}$ in the full regression $Y \sim \mathbf{X}$ equals the coefficient in the partial regression of $(Y \mid \mathbf{X}_{\mathrm{MB}})$ on $(\mathbf{X}_{-\mathrm{MB}} \mid \mathbf{X}_{\mathrm{MB}})$. Since the partial covariance is zero, these coefficients vanish: $\boldsymbol{\beta}^*_{-\mathrm{MB}} = \mathbf{0}$.
\end{proof}

We now establish that constraints on variables outside the Markov boundary do not improve prediction.

\begin{theorem}[Irrelevance of Non-Markov-Boundary Constraints]
\label{thm:mb_irrelevance}
Partition the DAG constraints into two sets:
\begin{itemize}
    \item $\mathcal{C}_{\mathrm{MB}}$: constraints involving at least one variable in $\mathrm{MB}(Y) \cup \{Y\}$,
    \item $\mathcal{C}_{-\mathrm{MB}}$: constraints involving only variables in $\mathbf{X}_{-\mathrm{MB}}$.
\end{itemize}
Let $\hat{\boldsymbol{\beta}}_{\mathcal{C}_{\mathrm{MB}}}$ denote the estimator using only constraints $\mathcal{C}_{\mathrm{MB}}$, and let $\hat{\boldsymbol{\beta}}_{\mathcal{C}_{\mathrm{MB}} \cup \mathcal{C}_{-\mathrm{MB}}}$ denote the estimator using all constraints. Then the expected prediction MSE is identical:
\begin{equation}
\mathbb{E}[\mathrm{MSE}_{\mathrm{pred}}^{\mathcal{C}_{\mathrm{MB}} \cup \mathcal{C}_{-\mathrm{MB}}}] = \mathbb{E}[\mathrm{MSE}_{\mathrm{pred}}^{\mathcal{C}_{\mathrm{MB}}}].
\end{equation}
\end{theorem}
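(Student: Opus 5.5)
The approach is to work in the UDU parameterization of Proposition~\ref{prop:udu_causal}, where $\boldsymbol{\theta}$ splits into node-local blocks $\boldsymbol{\theta}_j=(\boldsymbol{\beta}_j,\sigma_j^2)$, one for each $V_j$: column $j$ of $\mathbf{U}$ together with $D_{jj}$. The argument then has two parts. First, the asymptotic covariance of the block estimates does not depend on constraints imposed on \emph{other} blocks. Second, the Jacobian $\mathbf{J}$ of $\boldsymbol{\theta}\mapsto\boldsymbol{\beta}$ is nonzero only on the blocks of $Y$ and its children, and no constraint in $\mathcal{C}_{-\mathrm{MB}}$ touches those blocks. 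Combining these with the delta-method formula $\mathbb{E}[\mathrm{MSE}_{\mathrm{pred}}]\approx\mathrm{tr}(\mathbf{J}\,\Cov(\hat{\boldsymbol{\theta}})\,\mathbf{J}^\top\boldsymbol{\Sigma}_{\mathbf{X}})+\sigma^2$ from Theorem~\ref{thm:mse_improvement_rate} gives equality.

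\textbf{Step 1 (block-diagonal Fisher information).} By the factorization in Section~\ref{subsec: eq to constrained MLE}, the log-likelihood is $\sum_j \ell_j(\boldsymbol{\theta}_j)$, and each $\boldsymbol{\theta}_j$ appears only in $\ell_j$. Hence $\mathcal{I}$ is block diagonal across nodes, and this holds for any pattern of zeros imposed within blocks. By the argument of Proposition~\ref{prop:crb_mle}, the constrained MLE for $\boldsymbol{\theta}_j$ is the OLS fit of $V_j$ on its allowed parents. Its asymptotic covariance is the inverse of that block's information, and it is asymptotically uncorrelated with the other blocks. Consequently, imposing or dropping zero constraints inside block $k$ changes only the covariance of $\hat{\boldsymbol{\theta}}_k$; every other block's covariance is unchanged, and all cross-block covariances stay zero.

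\textbf{Step 2 (locating the constraints).} A constraint $U_{ij}=0$ with $i<j$ lives in the block of the child-side node $V_j$. A constraint in $\mathcal{C}_{-\mathrm{MB}}$ has both endpoints in $\mathbf{X}_{-\mathrm{MB}}$, so its block $j$ is neither $Y$ nor a child of $Y$. Moreover, dropping it cannot alter the Markov boundary structure. It cannot add an edge into $Y$, since that would involve $Y$. It cannot make a node a child of $Y$, since that would also involve $Y$. It cannot add a parent to a child of $Y$, since that would involve an $\mathrm{MB}$ node. Therefore the blocks $\{\boldsymbol{\theta}_Y\}\cup\{\boldsymbol{\theta}_C: C\in\mathrm{Ch}(Y)\}$ have identical free parameters in both models, and identical true values.

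\textbf{Step 3 ($\mathbf{J}$ ignores other blocks).} For any $\boldsymbol{\theta}$ in either model, the Gaussian conditional factors as
\[
p(Y\mid\mathbf{X})\propto p(Y\mid\PA(Y))\prod_{C\in\mathrm{Ch}(Y)}p(C\mid\PA(C)).
\]
Since $\boldsymbol{\beta}(\boldsymbol{\theta})=\boldsymbol{\Sigma}_{\mathbf{XX}}^{-1}\boldsymbol{\Sigma}_{\mathbf{X}Y}$ is exactly the coefficient vector of $\mathbb{E}[Y\mid\mathbf{X}]$, completing the square shows that $\boldsymbol{\beta}$ is a function only of $\boldsymbol{\theta}_Y$ and the $\boldsymbol{\theta}_C$. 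This is consistent with Lemma~\ref{lem:zero_coeff_outside_mb}. So the columns of $\mathbf{J}$ for all other blocks vanish identically, including the blocks touched by $\mathcal{C}_{-\mathrm{MB}}$.

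\textbf{Conclusion and main obstacle.} By Steps 1 and 2, $\mathbf{J}\,\Cov(\hat{\boldsymbol{\theta}})\,\mathbf{J}^\top$ involves only the covariance of the $Y$ and child blocks, which is the same under $\mathcal{C}_{\mathrm{MB}}$ alone and under $\mathcal{C}_{\mathrm{MB}}\cup\mathcal{C}_{-\mathrm{MB}}$. Hence $\Cov(\hat{\boldsymbol{\beta}})$ and the trace formula coincide. The main obstacle I expect is Step 3: verifying that $\boldsymbol{\beta}$, viewed as a function on the \emph{larger} (less constrained) parameter space, still has zero partial derivatives with respect to the non-$\mathrm{MB}$ blocks. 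The delta method evaluates $\mathbf{J}$ in the ambient space, not just on the constrained submanifold. I would handle this through the factorization identity above, which holds for every DAG compatible with the dropped constraints, and I would note that the equality is asymptotic, in the same delta-method sense as Theorem~\ref{thm:mse_improvement_rate}.
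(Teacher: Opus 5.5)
Your proposal is correct, and it takes a different route from the paper's.

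\textbf{The paper's route.} The paper uses Lemma~\ref{lem:zero_coeff_outside_mb} to analyze a surrogate estimator. It fixes $\hat{\boldsymbol{\beta}}_{-\mathrm{MB}}=\mathbf{0}$ and runs OLS of $Y$ on $\mathbf{X}_{\mathrm{MB}}$. This reduces the trace to $\mathrm{tr}(\Cov(\hat{\boldsymbol{\beta}}_{\mathrm{MB}})\boldsymbol{\Sigma}_{\mathrm{MB}})+\sigma^2$. It then notes that $\sigma^2(\mathbf{X}_{\mathrm{MB}}^\top\mathbf{X}_{\mathrm{MB}})^{-1}$ does not involve relations among non-boundary variables. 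That is brief and intuitive. However, the surrogate is not the constrained estimator $h\circ g_\Sigma(\hat{\boldsymbol{\theta}})$ studied in the preceding results. Read literally, the argument would make the $\mathcal{C}_{\mathrm{MB}}$ zeros irrelevant too, since OLS on the boundary uses none of them. That sits badly with the gain reported for predicting $B$ in $A\to B\to C$, where the helpful zero $U_{AC}=0$ lies in $\mathcal{C}_{\mathrm{MB}}$.

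\textbf{Your route.} You stay inside the paper's own delta-method framework and track the actual constrained MLE. You use two facts: the Fisher information is block diagonal across nodes, and the map $\boldsymbol{\theta}\mapsto\boldsymbol{\beta}$ is local. You can make the locality explicit by writing $\boldsymbol{\beta}=-\boldsymbol{\Omega}_{\mathbf{X}Y}/\Omega_{YY}$ with $\boldsymbol{\Omega}=(\mathbf{I}-\mathbf{B})^\top\boldsymbol{\Sigma}_\varepsilon^{-1}(\mathbf{I}-\mathbf{B})$. In this formula only nodes $j$ with $(\mathbf{I}-\mathbf{B})_{jY}\neq 0$ contribute, and these are $j=Y$ and the children of $Y$.

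\textbf{Your stated obstacle.} This settles it. The zeros $B_{jY}=0$ for non-children $j$ involve $Y$, so they belong to $\mathcal{C}_{\mathrm{MB}}$ and hold identically in both models. Suppose you instead embedded everything in the fully unconstrained space, as the paper's corollaries do. Then the columns of $\mathbf{J}$ for those $B_{jY}$ would generally be nonzero, but they would multiply zero rows of the embedded covariance, so the conclusion is unchanged.

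\textbf{What your route buys.} It proves equality of $\Cov(\hat{\boldsymbol{\beta}})$ itself, and hence also of the fixed-test-point MSE. It also yields a sharper fact than the theorem states: only zeros lying in the blocks of $Y$ and its children matter. So some constraints in $\mathcal{C}_{\mathrm{MB}}$ are irrelevant as well, for example a missing edge between two parents of $Y$.
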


\begin{proof}
The proof proceeds in two steps: first we show that the optimal DAG-aware estimator only involves $\mathbf{X}_{\mathrm{MB}}$, then we show that constraints among non-MB variables cannot affect this estimator.

\textbf{Step 1: Reduction to Markov Boundary Regression.}
By Lemma~\ref{lem:zero_coeff_outside_mb}, $\boldsymbol{\beta}^*_{-\mathrm{MB}} = \mathbf{0}$. An efficient DAG-constrained estimator exploits this by:
\begin{itemize}
    \item Setting $\hat{\boldsymbol{\beta}}_{-\mathrm{MB}} = \mathbf{0}$ (the known true value), and
    \item Estimating $\hat{\boldsymbol{\beta}}_{\mathrm{MB}}$ by regressing $Y$ on $\mathbf{X}_{\mathrm{MB}}$ alone.
\end{itemize}

Since $\hat{\boldsymbol{\beta}}_{-\mathrm{MB}}$ is fixed (not estimated), we have:
\begin{equation}
\Cov(\hat{\boldsymbol{\beta}}_{-\mathrm{MB}}) = \mathbf{0}, \qquad \Cov(\hat{\boldsymbol{\beta}}_{\mathrm{MB}}, \hat{\boldsymbol{\beta}}_{-\mathrm{MB}}) = \mathbf{0}.
\end{equation}

\textbf{Step 2: Simplification of the Trace Formula.}
Partition the true covariance matrix as:
\begin{equation}
\boldsymbol{\Sigma}_{\mathbf{X}} = \begin{pmatrix} \boldsymbol{\Sigma}_{\mathrm{MB}} & \boldsymbol{\Sigma}_{\mathrm{MB}, -\mathrm{MB}} \\ \boldsymbol{\Sigma}_{-\mathrm{MB}, \mathrm{MB}} & \boldsymbol{\Sigma}_{-\mathrm{MB}} \end{pmatrix}.
\end{equation}

The expected prediction MSE becomes:
\begin{align}
\mathbb{E}[\mathrm{MSE}_{\mathrm{pred}}] &= \mathrm{tr}\left(\Cov(\hat{\boldsymbol{\beta}}) \boldsymbol{\Sigma}_{\mathbf{X}}\right) + \sigma^2 \nonumber \\
&= \mathrm{tr}\left(\begin{pmatrix} \Cov(\hat{\boldsymbol{\beta}}_{\mathrm{MB}}) & \mathbf{0} \\ \mathbf{0} & \mathbf{0} \end{pmatrix} \begin{pmatrix} \boldsymbol{\Sigma}_{\mathrm{MB}} & \boldsymbol{\Sigma}_{\mathrm{MB}, -\mathrm{MB}} \\ \boldsymbol{\Sigma}_{-\mathrm{MB}, \mathrm{MB}} & \boldsymbol{\Sigma}_{-\mathrm{MB}} \end{pmatrix}\right) + \sigma^2 \nonumber \\
&= \mathrm{tr}\left(\Cov(\hat{\boldsymbol{\beta}}_{\mathrm{MB}}) \boldsymbol{\Sigma}_{\mathrm{MB}}\right) + \sigma^2.
\end{align}

\textbf{Step 3: Irrelevance of Non-MB Constraints.}
The estimator $\hat{\boldsymbol{\beta}}_{\mathrm{MB}}$ is obtained by regressing $Y$ on $\mathbf{X}_{\mathrm{MB}}$ only. Its covariance is:
\begin{equation}
\Cov(\hat{\boldsymbol{\beta}}_{\mathrm{MB}}) = \sigma^2 \left(\mathbf{X}_{\mathrm{MB}}^\top \mathbf{X}_{\mathrm{MB}}\right)^{-1} \xrightarrow{p} \frac{\sigma^2}{N} \boldsymbol{\Sigma}_{\mathrm{MB}}^{-1}.
\end{equation}

This covariance depends only on the distribution of $\mathbf{X}_{\mathrm{MB}}$ and the noise variance $\sigma^2$. The constraints $\mathcal{C}_{-\mathrm{MB}}$, which involve only relationships among variables in $\mathbf{X}_{-\mathrm{MB}}$, do not affect the estimation of $\hat{\boldsymbol{\beta}}_{\mathrm{MB}}$ nor its covariance.

Therefore, adding constraints $\mathcal{C}_{-\mathrm{MB}}$ to $\mathcal{C}_{\mathrm{MB}}$ leaves the expected prediction MSE unchanged.
\end{proof}

\begin{remark}[Practical Implication]
This result has important practical implications: when the goal is to predict $Y$, one need only incorporate DAG constraints that involve the Markov boundary of $Y$. Constraints among variables that are conditionally independent of $Y$ given its Markov boundary---while valid structural knowledge---provide no benefit for prediction. This justifies focusing computational and statistical effort on learning and enforcing constraints relevant to the target variable.
\end{remark}

\section{Per variable results}
\label{app:per_variable}

\begin{table}[htbp]
\centering
\caption{Per-variable MSE by Augmenter (Known Graph) - Bold indicates best or statistically tied}
\label{tab:known_graph_per_var_mse}
\footnotesize

\begin{tabular}{lcccccccccc}
\toprule
Augmenter & \rotatebox{90}{angle\_1} & \rotatebox{90}{angle\_2} & \rotatebox{90}{blue} & \rotatebox{90}{current} & \rotatebox{90}{green} & \rotatebox{90}{ir\_1} & \rotatebox{90}{ir\_2} & \rotatebox{90}{ir\_3} & \rotatebox{90}{l\_11} & \rotatebox{90}{l\_12} \\
\midrule
ADMGTian & 0.007 & \textbf{0.008} & 0.193 & 0.563 & 0.503 & 0.134 & 0.134 & 0.151 & 1.209 & 1.227 \\
ARF      & 0.895 & 0.613 & 0.555 & 0.530 & 0.794 & 0.306 & 0.291 & 0.300 & \textbf{1.026} & \textbf{1.021} \\
CTGAN    & 0.420 & 0.415 & 0.451 & 0.596 & 0.862 & 0.299 & 0.290 & 0.282 & 1.134 & 1.139 \\
CRB     & \textbf{0.003} & \textbf{0.007} & \textbf{0.064} & \textbf{0.489} & \textbf{0.184} & \textbf{0.100} & \textbf{0.106} & \textbf{0.129} & \textbf{1.023} & \textbf{1.020} \\
DDPM     & 0.019 & \textbf{0.011} & 0.120 & 0.559 & 0.412 & \textbf{0.106} & \textbf{0.101} & \textbf{0.116} & 1.295 & 1.279 \\
None     & 0.006 & \textbf{0.007} & 0.127 & \textbf{0.501} & 0.410 & 0.122 & \textbf{0.108} & \textbf{0.123} & \textbf{1.051} & 1.139 \\
TVAE     & 0.138 & 0.158 & 0.249 & 0.525 & 0.631 & 0.150 & 0.142 & 0.154 & 1.171 & 1.198 \\
\bottomrule
\end{tabular}

\vspace{1em}

\begin{tabular}{lcccccccccc}
\toprule
Augmenter & \rotatebox{90}{l\_21} & \rotatebox{90}{l\_22} & \rotatebox{90}{l\_31} & \rotatebox{90}{l\_32} & \rotatebox{90}{pol\_1} & \rotatebox{90}{pol\_2} & \rotatebox{90}{red} & \rotatebox{90}{vis\_1} & \rotatebox{90}{vis\_2} & \rotatebox{90}{vis\_3} \\
\midrule
ADMGTian & 1.212 & 1.256 & 1.209 & 1.245 & 0.007 & 0.014 & 0.245 & 0.122 & 0.129 & 0.148 \\
ARF      & \textbf{1.032} & \textbf{1.039} & \textbf{1.053} & \textbf{1.033} & 0.856 & 0.694 & 0.663 & 0.261 & 0.316 & 0.317 \\
CTGAN    & 1.092 & 1.134 & 1.109 & 1.122 & 0.428 & 0.426 & 0.511 & 0.257 & 0.232 & 0.324 \\
CRB     & \textbf{1.019} & \textbf{1.041} & \textbf{1.036} & \textbf{1.021} & \textbf{0.003} & \textbf{0.005} & \textbf{0.077} & \textbf{0.096} & \textbf{0.102} & \textbf{0.125} \\
DDPM     & 1.275 & 1.319 & 1.285 & 1.289 & 0.016 & 0.012 & 0.145 & \textbf{0.096} & \textbf{0.103} & \textbf{0.112} \\
None     & \textbf{1.043} & \textbf{1.130} & \textbf{1.082} & 1.090 & 0.005 & \textbf{0.007} & 0.161 & \textbf{0.100} & \textbf{0.108} & \textbf{0.121} \\
TVAE     & 1.181 & 1.209 & 1.172 & 1.180 & 0.129 & 0.147 & 0.255 & 0.133 & 0.146 & 0.150 \\
\bottomrule
\end{tabular}

\end{table}

\section{Full Performance Tables for Learned Graph Experiments}
\label{app:full_tables}

Tables~\ref{tab:boston_100_samples}--\ref{tab:uniform_reference_2000_samples} present the complete performance metrics for all augmentation methods across the benchmark datasets and density metrics used for tabular data generators: $\alpha$-Precision and $\beta$-Recall~\citep{alaa2022faithful}. $\alpha$-Precision assesses the similarity of the distribution to the reference one, and $\beta$ -Recall quantifies the diversity of the points. We also used privacy metrics: DCR~\citep{zhao2021ctab}, which assesses how likely the data is to be copied from the training set, and $\delta$-Presence~\citep{qian2023synthcity}.

In the case of the Causal Chambers dataset, i.e., the dataset with an underlying causal structure, CRB methods consistently outperform all other methods on MSE and population metrics. On the Sachs dataset, CRB and DDPM perform on par with respect to the MSE metric; as the size grows, DDPM becomes the best-performing method on distributional metrics. For white wine, initially CRB performs better than DDPM (100 and 500 samples), but for the larger samples, DDPM is better. For red wine, DDPM is again the best-performing model on MSE and density metrics, similar to the Boston dataset.

\begin{table}[htbp]
\centering
\caption{Performance metrics for Boston dataset (100 samples) - Bold indicates best or statistically tied}
\label{tab:boston_100_samples}
\footnotesize
\begin{tabular}{lccccc}
\toprule
Augmenter & Mean MSE $\downarrow$ & $\alpha$-Precision $\uparrow$ & $\beta$-Recall $\uparrow$ & DCR $\uparrow$ & $\delta$-Presence $\uparrow$ \\
\midrule
ADMGTian & 1.886 {\scriptsize (1.43, 2.12)} & 0.327 {\scriptsize (0.24, 0.48)} & 0.014 {\scriptsize (0.01, 0.03)} & 0.126 {\scriptsize (0.10, 0.15)} & 0.999 {\scriptsize (1.00, 1.00)} \\
ARF & 0.625 {\scriptsize (0.59, 0.70)} & 0.855 {\scriptsize (0.84, 0.87)} & 0.045 {\scriptsize (0.03, 0.06)} & \textbf{0.394} {\scriptsize (0.37, 0.42)} & 1.000 {\scriptsize } \\
CTGAN & 0.612 {\scriptsize (0.58, 0.66)} & 0.873 {\scriptsize (0.84, 0.90)} & 0.065 {\scriptsize (0.04, 0.08)} & 0.287 {\scriptsize (0.26, 0.33)} & 1.000 {\scriptsize } \\
CRB & 0.457 {\scriptsize (0.45, 0.47)} & 0.724 {\scriptsize (0.70, 0.78)} & 0.114 {\scriptsize (0.10, 0.13)} & 0.249 {\scriptsize (0.24, 0.26)} & 1.000 {\scriptsize } \\
DDPM & \textbf{0.371} {\scriptsize (0.36, 0.39)} & 0.886 {\scriptsize (0.86, 0.91)} & \textbf{0.304} {\scriptsize (0.29, 0.32)} & 0.119 {\scriptsize (0.11, 0.13)} & 0.996 {\scriptsize (0.99, 1.00)} \\
NFLOW & 0.489 {\scriptsize (0.48, 0.51)} & 0.833 {\scriptsize (0.77, 0.87)} & 0.097 {\scriptsize (0.08, 0.12)} & 0.285 {\scriptsize (0.25, 0.33)} & 1.000 {\scriptsize } \\
None & \textbf{0.364} {\scriptsize (0.35, 0.38)} & \textbf{0.943} {\scriptsize (0.93, 0.95)} & 0.194 {\scriptsize (0.19, 0.20)} & 0.133 {\scriptsize (0.13, 0.14)} & 0.979 {\scriptsize (0.97, 0.98)} \\
TVAE & 0.531 {\scriptsize (0.50, 0.57)} & 0.757 {\scriptsize (0.70, 0.81)} & 0.137 {\scriptsize (0.12, 0.16)} & 0.201 {\scriptsize (0.18, 0.22)} & 1.000 {\scriptsize } \\
\bottomrule
\end{tabular}
\end{table}

\begin{table}[htbp]
\centering
\caption{Performance metrics for Sachs dataset (100 samples) - Bold indicates best or statistically tied}
\label{tab:sachs_100_samples}
\footnotesize
\begin{tabular}{lccccc}
\toprule
Augmenter & Mean MSE $\downarrow$ & $\alpha$-Precision $\uparrow$ & $\beta$-Recall $\uparrow$ & DCR $\uparrow$ & $\delta$-Presence $\uparrow$ \\
\midrule
ADMGTian & \textbf{0.596} {\scriptsize (0.54, 0.67)} & \textbf{0.925} {\scriptsize (0.90, 0.94)} & 0.010 {\scriptsize (0.01, 0.01)} & 0.020 {\scriptsize (0.02, 0.03)} & 0.081 {\scriptsize (0.06, 0.10)} \\
ARF & 0.830 {\scriptsize (0.81, 0.85)} & 0.577 {\scriptsize (0.52, 0.62)} & 0.010 {\scriptsize (0.01, 0.01)} & \textbf{0.050} {\scriptsize (0.04, 0.06)} & \textbf{0.515} {\scriptsize (0.46, 0.57)} \\
CTGAN & 0.876 {\scriptsize (0.81, 0.95)} & \textbf{0.870} {\scriptsize (0.81, 0.91)} & 0.038 {\scriptsize (0.03, 0.05)} & 0.016 {\scriptsize (0.01, 0.02)} & 0.153 {\scriptsize (0.10, 0.19)} \\
CRB & \textbf{0.766} {\scriptsize (0.55, 1.58)} & 0.863 {\scriptsize (0.80, 0.90)} & 0.027 {\scriptsize (0.02, 0.03)} & 0.020 {\scriptsize (0.02, 0.02)} & 0.153 {\scriptsize (0.12, 0.20)} \\
DDPM & \textbf{0.545} {\scriptsize (0.48, 0.72)} & 0.863 {\scriptsize (0.81, 0.90)} & \textbf{0.057} {\scriptsize (0.05, 0.06)} & 0.013 {\scriptsize (0.01, 0.01)} & 0.244 {\scriptsize (0.21, 0.30)} \\
NFLOW & 0.783 {\scriptsize (0.72, 0.85)} & 0.816 {\scriptsize (0.76, 0.86)} & 0.028 {\scriptsize (0.02, 0.04)} & 0.022 {\scriptsize (0.02, 0.02)} & 0.260 {\scriptsize (0.22, 0.30)} \\
None & 0.998 {\scriptsize (0.99, 1.00)} & --- & --- & --- & --- \\
TVAE & \textbf{0.795} {\scriptsize (0.71, 0.86)} & 0.861 {\scriptsize (0.79, 0.89)} & \textbf{0.055} {\scriptsize (0.05, 0.06)} & 0.014 {\scriptsize (0.01, 0.02)} & 0.146 {\scriptsize (0.12, 0.17)} \\
\bottomrule
\end{tabular}
\end{table}

\begin{table}[htbp]
\centering
\caption{Performance metrics for Wine (red) dataset (100 samples) - Bold indicates best or statistically tied}
\label{tab:wine_red_100_samples}
\footnotesize
\begin{tabular}{lccccc}
\toprule
Augmenter & Mean MSE $\downarrow$ & $\alpha$-Precision $\uparrow$ & $\beta$-Recall $\uparrow$ & DCR $\uparrow$ & $\delta$-Presence $\uparrow$ \\
\midrule
ADMGTian & 0.831 {\scriptsize (0.79, 0.86)} & 0.780 {\scriptsize (0.72, 0.82)} & 0.020 {\scriptsize (0.02, 0.03)} & 0.211 {\scriptsize (0.20, 0.23)} & 0.643 {\scriptsize (0.58, 0.72)} \\
ARF & 0.981 {\scriptsize (0.92, 1.02)} & 0.814 {\scriptsize (0.80, 0.83)} & 0.027 {\scriptsize (0.02, 0.03)} & \textbf{0.303} {\scriptsize (0.29, 0.31)} & 1.000 {\scriptsize } \\
CTGAN & 0.769 {\scriptsize (0.73, 0.80)} & 0.826 {\scriptsize (0.70, 0.88)} & 0.104 {\scriptsize (0.08, 0.12)} & 0.217 {\scriptsize (0.20, 0.26)} & 1.000 {\scriptsize (1.00, 1.00)} \\
CRB & 0.635 {\scriptsize (0.62, 0.65)} & 0.876 {\scriptsize (0.85, 0.91)} & \textbf{0.141} {\scriptsize (0.13, 0.15)} & 0.205 {\scriptsize (0.20, 0.21)} & 1.000 {\scriptsize } \\
DDPM & \textbf{0.586} {\scriptsize (0.58, 0.60)} & 0.846 {\scriptsize (0.82, 0.87)} & \textbf{0.136} {\scriptsize (0.13, 0.15)} & 0.160 {\scriptsize (0.16, 0.16)} & 0.907 {\scriptsize (0.89, 0.92)} \\
NFLOW & 0.668 {\scriptsize (0.63, 0.72)} & 0.789 {\scriptsize (0.68, 0.87)} & 0.096 {\scriptsize (0.07, 0.11)} & 0.244 {\scriptsize (0.23, 0.28)} & 1.000 {\scriptsize (1.00, 1.00)} \\
None & \textbf{0.587} {\scriptsize (0.58, 0.59)} & \textbf{0.940} {\scriptsize (0.93, 0.95)} & 0.068 {\scriptsize (0.06, 0.07)} & 0.208 {\scriptsize (0.20, 0.21)} & 0.667 {\scriptsize (0.64, 0.69)} \\
TVAE & 0.764 {\scriptsize (0.73, 0.80)} & 0.620 {\scriptsize (0.56, 0.67)} & 0.123 {\scriptsize (0.11, 0.13)} & 0.190 {\scriptsize (0.17, 0.21)} & 1.000 {\scriptsize (1.00, 1.00)} \\
\bottomrule
\end{tabular}
\end{table}

\begin{table}[htbp]
\centering
\caption{Performance metrics for Wine (white) dataset (100 samples) - Bold indicates best or statistically tied}
\label{tab:wine_white_100_samples}
\footnotesize
\begin{tabular}{lccccc}
\toprule
Augmenter & Mean MSE $\downarrow$ & $\alpha$-Precision $\uparrow$ & $\beta$-Recall $\uparrow$ & DCR $\uparrow$ & $\delta$-Presence $\uparrow$ \\
\midrule
ADMGTian & 0.874 {\scriptsize (0.83, 0.93)} & 0.831 {\scriptsize (0.76, 0.88)} & 0.009 {\scriptsize (0.01, 0.01)} & \textbf{0.175} {\scriptsize (0.16, 0.21)} & 0.627 {\scriptsize (0.57, 0.69)} \\
ARF & 1.034 {\scriptsize (1.03, 1.04)} & 0.880 {\scriptsize (0.75, 0.93)} & 0.014 {\scriptsize (0.01, 0.02)} & \textbf{0.211} {\scriptsize (0.20, 0.24)} & 1.000 {\scriptsize } \\
CTGAN & 0.912 {\scriptsize (0.86, 1.03)} & 0.754 {\scriptsize (0.67, 0.84)} & 0.038 {\scriptsize (0.03, 0.05)} & \textbf{0.169} {\scriptsize (0.15, 0.20)} & 1.000 {\scriptsize (1.00, 1.00)} \\
CRB & \textbf{0.689} {\scriptsize (0.67, 0.71)} & 0.878 {\scriptsize (0.86, 0.90)} & \textbf{0.064} {\scriptsize (0.06, 0.07)} & 0.147 {\scriptsize (0.14, 0.16)} & 1.000 {\scriptsize } \\
DDPM & 0.757 {\scriptsize (0.73, 0.78)} & 0.827 {\scriptsize (0.79, 0.86)} & 0.047 {\scriptsize (0.04, 0.05)} & 0.130 {\scriptsize (0.12, 0.14)} & 0.884 {\scriptsize (0.84, 0.93)} \\
NFLOW & 0.799 {\scriptsize (0.77, 0.84)} & 0.848 {\scriptsize (0.73, 0.92)} & 0.037 {\scriptsize (0.03, 0.04)} & \textbf{0.180} {\scriptsize (0.17, 0.20)} & 1.000 {\scriptsize } \\
None & \textbf{0.679} {\scriptsize (0.67, 0.68)} & \textbf{0.946} {\scriptsize (0.94, 0.95)} & 0.023 {\scriptsize (0.02, 0.02)} & 0.185 {\scriptsize (0.18, 0.19)} & 0.585 {\scriptsize (0.56, 0.60)} \\
TVAE & 0.837 {\scriptsize (0.80, 0.90)} & 0.610 {\scriptsize (0.53, 0.71)} & 0.049 {\scriptsize (0.03, 0.06)} & 0.147 {\scriptsize (0.13, 0.18)} & 0.998 {\scriptsize (0.99, 1.00)} \\
\bottomrule
\end{tabular}
\end{table}

\begin{table}[htbp]
\centering
\caption{Performance metrics for Causal Chambers dataset (100 samples) - Bold indicates best or statistically tied}
\label{tab:uniform_reference_100_samples}
\footnotesize
\begin{tabular}{lccccc}
\toprule
Augmenter & Mean MSE $\downarrow$ & $\alpha$-Precision $\uparrow$ & $\beta$-Recall $\uparrow$ & DCR $\uparrow$ & $\delta$-Presence $\uparrow$ \\
\midrule
ADMGTian & 0.474 {\scriptsize (0.47, 0.48)} & \textbf{0.925} {\scriptsize (0.90, 0.94)} & 0.009 {\scriptsize (0.01, 0.01)} & 0.606 {\scriptsize (0.60, 0.62)} & 1.000 {\scriptsize } \\
ARF & 0.738 {\scriptsize (0.72, 0.76)} & \textbf{0.937} {\scriptsize (0.90, 0.96)} & 0.002 {\scriptsize (0.00, 0.00)} & \textbf{0.779} {\scriptsize (0.77, 0.79)} & 1.000 {\scriptsize } \\
CTGAN & 0.579 {\scriptsize (0.55, 0.61)} & \textbf{0.914} {\scriptsize (0.89, 0.93)} & 0.018 {\scriptsize (0.01, 0.02)} & 0.670 {\scriptsize (0.65, 0.69)} & 1.000 {\scriptsize } \\
CRB & \textbf{0.383} {\scriptsize (0.38, 0.38)} & \textbf{0.944} {\scriptsize (0.91, 0.96)} & \textbf{0.090} {\scriptsize (0.09, 0.09)} & 0.573 {\scriptsize (0.57, 0.58)} & 1.000 {\scriptsize } \\
DDPM & 0.483 {\scriptsize (0.48, 0.49)} & 0.758 {\scriptsize (0.74, 0.78)} & 0.037 {\scriptsize (0.03, 0.04)} & 0.606 {\scriptsize (0.60, 0.61)} & 1.000 {\scriptsize } \\
NFLOW & 0.537 {\scriptsize (0.50, 0.60)} & 0.615 {\scriptsize (0.46, 0.75)} & 0.011 {\scriptsize (0.01, 0.02)} & \textbf{0.751} {\scriptsize (0.71, 0.82)} & 1.000 {\scriptsize } \\
None & 0.422 {\scriptsize (0.42, 0.43)} & \textbf{0.938} {\scriptsize (0.92, 0.95)} & 0.010 {\scriptsize (0.01, 0.01)} & 0.634 {\scriptsize (0.63, 0.64)} & 1.000 {\scriptsize } \\
TVAE & 0.543 {\scriptsize (0.53, 0.55)} & 0.575 {\scriptsize (0.50, 0.63)} & 0.058 {\scriptsize (0.05, 0.06)} & 0.585 {\scriptsize (0.58, 0.59)} & 1.000 {\scriptsize } \\
\bottomrule
\end{tabular}
\end{table}

\begin{table}[htbp]
\centering
\caption{Performance metrics for Sachs dataset (500 samples) - Bold indicates best or statistically tied}
\label{tab:sachs_100_samples}
\footnotesize
\begin{tabular}{lccccc}
\toprule
Augmenter & Mean MSE $\downarrow$ & $\alpha$-Precision $\uparrow$ & $\beta$-Recall $\uparrow$ & DCR $\uparrow$ & $\delta$-Presence $\uparrow$ \\
\midrule
ADMGTian & 0.436 {\scriptsize (0.39, 0.54)} & 0.869 {\scriptsize (0.83, 0.90)} & 0.029 {\scriptsize (0.02, 0.04)} & 0.016 {\scriptsize (0.01, 0.02)} & 0.126 {\scriptsize (0.11, 0.15)} \\
ARF & 0.628 {\scriptsize (0.60, 0.69)} & 0.692 {\scriptsize (0.56, 0.74)} & 0.102 {\scriptsize (0.07, 0.11)} & \textbf{0.039} {\scriptsize (0.03, 0.07)} & \textbf{0.371} {\scriptsize (0.31, 0.55)} \\
CTGAN & 0.841 {\scriptsize (0.71, 1.30)} & 0.812 {\scriptsize (0.60, 0.89)} & 0.148 {\scriptsize (0.10, 0.18)} & \textbf{0.025} {\scriptsize (0.01, 0.07)} & \textbf{0.231} {\scriptsize (0.15, 0.48)} \\
CRB & \textbf{0.372} {\scriptsize (0.36, 0.38)} & 0.863 {\scriptsize (0.84, 0.88)} & 0.141 {\scriptsize (0.13, 0.15)} & 0.021 {\scriptsize (0.02, 0.02)} & 0.166 {\scriptsize (0.14, 0.20)} \\
DDPM & \textbf{0.400} {\scriptsize (0.37, 0.43)} & \textbf{0.938} {\scriptsize (0.93, 0.95)} & \textbf{0.341} {\scriptsize (0.33, 0.35)} & 0.011 {\scriptsize (0.01, 0.01)} & 0.121 {\scriptsize (0.10, 0.14)} \\
NFLOW & 0.697 {\scriptsize (0.65, 0.78)} & \textbf{0.910} {\scriptsize (0.83, 0.94)} & 0.200 {\scriptsize (0.17, 0.21)} & 0.018 {\scriptsize (0.01, 0.03)} & 0.164 {\scriptsize (0.12, 0.26)} \\
TVAE & 0.520 {\scriptsize (0.49, 0.55)} & \textbf{0.922} {\scriptsize (0.87, 0.95)} & 0.246 {\scriptsize (0.22, 0.26)} & 0.012 {\scriptsize (0.01, 0.01)} & 0.106 {\scriptsize (0.09, 0.12)} \\
\bottomrule
\end{tabular}
\end{table}

\begin{table}[htbp]
\centering
\caption{Performance metrics for Wine (red) dataset (500 samples) - Bold indicates best or statistically tied}
\label{tab:wine_red_100_samples}
\footnotesize
\begin{tabular}{lccccc}
\toprule
Augmenter & Mean MSE $\downarrow$ & $\alpha$-Precision $\uparrow$ & $\beta$-Recall $\uparrow$ & DCR $\uparrow$ & $\delta$-Presence $\uparrow$ \\
\midrule
ADMGTian & 1.210 {\scriptsize (1.14, 1.32)} & 0.283 {\scriptsize (0.24, 0.32)} & 0.016 {\scriptsize (0.01, 0.02)} & 0.205 {\scriptsize (0.19, 0.23)} & 0.734 {\scriptsize (0.69, 0.80)} \\
ARF & 0.597 {\scriptsize (0.58, 0.62)} & 0.872 {\scriptsize (0.86, 0.89)} & 0.278 {\scriptsize (0.26, 0.29)} & \textbf{0.260} {\scriptsize (0.25, 0.27)} & 1.000 {\scriptsize } \\
CTGAN & 0.624 {\scriptsize (0.60, 0.64)} & \textbf{0.919} {\scriptsize (0.89, 0.95)} & 0.375 {\scriptsize (0.35, 0.40)} & 0.207 {\scriptsize (0.20, 0.22)} & 1.000 {\scriptsize (1.00, 1.00)} \\
CRB & 0.494 {\scriptsize (0.47, 0.53)} & 0.835 {\scriptsize (0.81, 0.86)} & 0.438 {\scriptsize (0.43, 0.45)} & 0.203 {\scriptsize (0.20, 0.21)} & 1.000 {\scriptsize } \\
DDPM & \textbf{0.411} {\scriptsize (0.40, 0.42)} & \textbf{0.912} {\scriptsize (0.88, 0.94)} & \textbf{0.509} {\scriptsize (0.50, 0.52)} & 0.168 {\scriptsize (0.16, 0.18)} & 0.997 {\scriptsize (0.99, 1.00)} \\
NFLOW & 0.578 {\scriptsize (0.55, 0.64)} & \textbf{0.921} {\scriptsize (0.89, 0.94)} & 0.391 {\scriptsize (0.37, 0.41)} & 0.212 {\scriptsize (0.20, 0.22)} & 1.000 {\scriptsize } \\
TVAE & 0.553 {\scriptsize (0.53, 0.57)} & 0.818 {\scriptsize (0.77, 0.87)} & 0.431 {\scriptsize (0.42, 0.44)} & 0.178 {\scriptsize (0.17, 0.18)} & 1.000 {\scriptsize (1.00, 1.00)} \\
\bottomrule
\end{tabular}
\end{table}

\begin{table}[htbp]
\centering
\caption{Performance metrics for Wine (white) dataset (500 samples) - Bold indicates best or statistically tied}
\label{tab:wine_white_100_samples}
\footnotesize
\begin{tabular}{lccccc}
\toprule
Augmenter & Mean MSE $\downarrow$ & $\alpha$-Precision $\uparrow$ & $\beta$-Recall $\uparrow$ & DCR $\uparrow$ & $\delta$-Presence $\uparrow$ \\
\midrule
ADMGTian & \textbf{0.630} {\scriptsize (0.55, 0.92)} & \textbf{0.908} {\scriptsize (0.65, 0.98)} & 0.098 {\scriptsize (0.06, 0.11)} & 0.126 {\scriptsize (0.12, 0.13)} & 0.611 {\scriptsize (0.60, 0.62)} \\
ARF & 0.688 {\scriptsize (0.68, 0.69)} & \textbf{0.915} {\scriptsize (0.89, 0.93)} & 0.128 {\scriptsize (0.12, 0.14)} & \textbf{0.182} {\scriptsize (0.17, 0.19)} & \textbf{1.000} {\scriptsize (1.00, 1.00)} \\
CTGAN & 0.736 {\scriptsize (0.72, 0.76)} & \textbf{0.874} {\scriptsize (0.81, 0.92)} & 0.153 {\scriptsize (0.13, 0.17)} & \textbf{0.161} {\scriptsize (0.15, 0.18)} & \textbf{1.000} {\scriptsize (1.00, 1.00)} \\
CRB & \textbf{0.557} {\scriptsize (0.55, 0.56)} & \textbf{0.877} {\scriptsize (0.85, 0.90)} & 0.214 {\scriptsize (0.21, 0.22)} & 0.151 {\scriptsize (0.15, 0.16)} & \textbf{1.000} {\scriptsize (1.00, 1.00)} \\
DDPM & \textbf{0.552} {\scriptsize (0.55, 0.56)} & \textbf{0.869} {\scriptsize (0.83, 0.90)} & \textbf{0.238} {\scriptsize (0.22, 0.25)} & 0.128 {\scriptsize (0.12, 0.13)} & 0.986 {\scriptsize (0.97, 0.99)} \\
NFLOW & 0.652 {\scriptsize (0.64, 0.68)} & \textbf{0.865} {\scriptsize (0.77, 0.91)} & 0.159 {\scriptsize (0.13, 0.18)} & \textbf{0.167} {\scriptsize (0.16, 0.18)} & \textbf{1.000} {\scriptsize (1.00, 1.00)} \\
TVAE & 0.694 {\scriptsize (0.68, 0.71)} & 0.751 {\scriptsize (0.72, 0.78)} & 0.176 {\scriptsize (0.16, 0.19)} & 0.138 {\scriptsize (0.13, 0.14)} & 0.999 {\scriptsize (1.00, 1.00)} \\
\bottomrule
\end{tabular}
\end{table}

\begin{table}[htbp]
\centering
\caption{Performance metrics for Causal chambers dataset (500 samples) - Bold indicates best or statistically tied}
\label{tab:uniform_reference_100_samples}
\footnotesize
\begin{tabular}{lccccc}
\toprule
Augmenter & Mean MSE $\downarrow$ & $\alpha$-Precision $\uparrow$ & $\beta$-Recall $\uparrow$ & DCR $\uparrow$ & $\delta$-Presence $\uparrow$ \\
\midrule
ADMGTian & 0.412 {\scriptsize (0.41, 0.41)} & \textbf{0.967} {\scriptsize (0.96, 0.97)} & 0.042 {\scriptsize (0.04, 0.05)} & 0.584 {\scriptsize (0.58, 0.59)} & 1.000 {\scriptsize } \\
ARF & 0.556 {\scriptsize (0.55, 0.59)} & 0.961 {\scriptsize (0.95, 0.97)} & 0.031 {\scriptsize (0.03, 0.03)} & \textbf{0.727} {\scriptsize (0.72, 0.73)} & 1.000 {\scriptsize } \\
CTGAN & 0.468 {\scriptsize (0.46, 0.49)} & 0.897 {\scriptsize (0.83, 0.94)} & 0.124 {\scriptsize (0.10, 0.14)} & 0.647 {\scriptsize (0.63, 0.67)} & 1.000 {\scriptsize } \\
CRB & \textbf{0.362} {\scriptsize (0.36, 0.36)} & \textbf{0.976} {\scriptsize (0.97, 0.98)} & \textbf{0.331} {\scriptsize (0.33, 0.33)} & 0.583 {\scriptsize (0.58, 0.59)} & 1.000 {\scriptsize } \\
DDPM & 0.443 {\scriptsize (0.44, 0.45)} & 0.781 {\scriptsize (0.74, 0.82)} & 0.249 {\scriptsize (0.24, 0.26)} & 0.561 {\scriptsize (0.55, 0.57)} & 1.000 {\scriptsize } \\
NFLOW & 0.421 {\scriptsize (0.41, 0.43)} & 0.879 {\scriptsize (0.80, 0.92)} & 0.137 {\scriptsize (0.11, 0.17)} & 0.647 {\scriptsize (0.63, 0.68)} & 1.000 {\scriptsize } \\
TVAE & 0.469 {\scriptsize (0.46, 0.48)} & 0.812 {\scriptsize (0.78, 0.84)} & 0.189 {\scriptsize (0.18, 0.20)} & 0.607 {\scriptsize (0.60, 0.61)} & 1.000 {\scriptsize } \\
\bottomrule
\end{tabular}
\end{table}

\begin{table}[htbp]
\centering
\caption{Performance metrics for Sachs dataset (1000 samples) - Bold indicates best or statistically tied}
\label{tab:sachs_1000_samples}
\footnotesize
\begin{tabular}{lccccc}
\toprule
Augmenter & Mean MSE $\downarrow$ & $\alpha$-Precision $\uparrow$ & $\beta$-Recall $\uparrow$ & DCR $\uparrow$ & $\delta$-Presence $\uparrow$ \\
\midrule
ADMGTian & 0.416 {\scriptsize (0.38, 0.47)} & 0.753 {\scriptsize (0.73, 0.77)} & 0.034 {\scriptsize (0.03, 0.05)} & 0.018 {\scriptsize (0.02, 0.02)} & 0.202 {\scriptsize (0.19, 0.22)} \\
ARF & 0.570 {\scriptsize (0.51, 0.70)} & 0.691 {\scriptsize (0.56, 0.76)} & 0.180 {\scriptsize (0.11, 0.22)} & \textbf{0.036} {\scriptsize (0.03, 0.06)} & \textbf{0.392} {\scriptsize (0.30, 0.60)} \\
CTGAN & 0.659 {\scriptsize (0.62, 0.69)} & \textbf{0.941} {\scriptsize (0.91, 0.96)} & 0.351 {\scriptsize (0.32, 0.38)} & 0.013 {\scriptsize (0.01, 0.02)} & 0.147 {\scriptsize (0.14, 0.16)} \\
CRB & \textbf{0.349} {\scriptsize (0.33, 0.41)} & 0.857 {\scriptsize (0.84, 0.88)} & 0.238 {\scriptsize (0.22, 0.25)} & 0.022 {\scriptsize (0.02, 0.02)} & 0.176 {\scriptsize (0.16, 0.19)} \\
DDPM & \textbf{0.341} {\scriptsize (0.33, 0.36)} & \textbf{0.954} {\scriptsize (0.94, 0.96)} & \textbf{0.564} {\scriptsize (0.56, 0.57)} & 0.011 {\scriptsize (0.01, 0.01)} & 0.081 {\scriptsize (0.07, 0.09)} \\
NFLOW & 0.630 {\scriptsize (0.59, 0.68)} & 0.893 {\scriptsize (0.85, 0.92)} & 0.345 {\scriptsize (0.32, 0.37)} & 0.016 {\scriptsize (0.01, 0.02)} & 0.109 {\scriptsize (0.08, 0.17)} \\
TVAE & 0.500 {\scriptsize (0.47, 0.55)} & 0.911 {\scriptsize (0.87, 0.93)} & 0.364 {\scriptsize (0.27, 0.41)} & 0.013 {\scriptsize (0.01, 0.01)} & 0.099 {\scriptsize (0.09, 0.11)} \\
\bottomrule
\end{tabular}
\end{table}

\begin{table}[htbp]
\centering
\caption{Performance metrics for Wine (red) dataset (1000 samples) - Bold indicates best or statistically tied}
\label{tab:wine_red_1000_samples}
\footnotesize
\begin{tabular}{lccccc}
\toprule
Augmenter & Mean MSE $\downarrow$ & $\alpha$-Precision $\uparrow$ & $\beta$-Recall $\uparrow$ & DCR $\uparrow$ & $\delta$-Presence $\uparrow$ \\
\midrule
ADMGTian & 1.639 {\scriptsize (1.52, 1.90)} & 0.164 {\scriptsize (0.14, 0.19)} & 0.016 {\scriptsize (0.01, 0.02)} & \textbf{0.272} {\scriptsize (0.25, 0.29)} & 0.815 {\scriptsize (0.74, 0.88)} \\
ARF & 0.516 {\scriptsize (0.50, 0.52)} & \textbf{0.898} {\scriptsize (0.88, 0.91)} & 0.587 {\scriptsize (0.57, 0.60)} & \textbf{0.266} {\scriptsize (0.26, 0.27)} & 1.000 {\scriptsize } \\
CTGAN & 0.587 {\scriptsize (0.55, 0.61)} & \textbf{0.924} {\scriptsize (0.91, 0.93)} & 0.637 {\scriptsize (0.62, 0.65)} & 0.213 {\scriptsize (0.21, 0.22)} & 1.000 {\scriptsize (1.00, 1.00)} \\
CRB & 0.452 {\scriptsize (0.44, 0.46)} & 0.807 {\scriptsize (0.78, 0.83)} & 0.669 {\scriptsize (0.66, 0.68)} & 0.216 {\scriptsize (0.21, 0.22)} & 1.000 {\scriptsize } \\
DDPM & \textbf{0.344} {\scriptsize (0.33, 0.36)} & \textbf{0.920} {\scriptsize (0.88, 0.94)} & \textbf{0.736} {\scriptsize (0.72, 0.75)} & 0.185 {\scriptsize (0.18, 0.20)} & 0.999 {\scriptsize (1.00, 1.00)} \\
NFLOW & 0.522 {\scriptsize (0.49, 0.55)} & \textbf{0.923} {\scriptsize (0.89, 0.94)} & 0.640 {\scriptsize (0.62, 0.66)} & 0.232 {\scriptsize (0.22, 0.24)} & 1.000 {\scriptsize } \\
TVAE & 0.508 {\scriptsize (0.49, 0.53)} & 0.830 {\scriptsize (0.80, 0.88)} & 0.672 {\scriptsize (0.66, 0.68)} & 0.199 {\scriptsize (0.19, 0.21)} & 1.000 {\scriptsize (1.00, 1.00)} \\
\bottomrule
\end{tabular}
\end{table}

\begin{table}[htbp]
\centering
\caption{Performance metrics for Wine (white) dataset (1000 samples) - Bold indicates best or statistically tied}
\label{tab:wine_white_1000_samples}
\footnotesize
\begin{tabular}{lccccc}
\toprule
Augmenter & Mean MSE $\downarrow$ & $\alpha$-Precision $\uparrow$ & $\beta$-Recall $\uparrow$ & DCR $\uparrow$ & $\delta$-Presence $\uparrow$ \\
\midrule
ADMGTian & \textbf{0.816} {\scriptsize (0.48, 1.41)} & \textbf{0.816} {\scriptsize (0.52, 0.97)} & 0.173 {\scriptsize (0.10, 0.21)} & 0.125 {\scriptsize (0.11, 0.14)} & 0.666 {\scriptsize (0.61, 0.77)} \\
ARF & 0.641 {\scriptsize (0.63, 0.65)} & \textbf{0.934} {\scriptsize (0.92, 0.94)} & 0.229 {\scriptsize (0.21, 0.24)} & \textbf{0.182} {\scriptsize (0.17, 0.19)} & \textbf{1.000} {\scriptsize (1.00, 1.00)} \\
CTGAN & 0.683 {\scriptsize (0.67, 0.70)} & \textbf{0.913} {\scriptsize (0.87, 0.94)} & 0.257 {\scriptsize (0.23, 0.27)} & 0.155 {\scriptsize (0.15, 0.16)} & \textbf{1.000} {\scriptsize (1.00, 1.00)} \\
CRB & 0.534 {\scriptsize (0.52, 0.55)} & 0.841 {\scriptsize (0.82, 0.88)} & 0.321 {\scriptsize (0.31, 0.33)} & 0.148 {\scriptsize (0.14, 0.16)} & \textbf{1.000} {\scriptsize (1.00, 1.00)} \\
DDPM & \textbf{0.494} {\scriptsize (0.49, 0.50)} & \textbf{0.923} {\scriptsize (0.89, 0.94)} & \textbf{0.352} {\scriptsize (0.34, 0.36)} & 0.139 {\scriptsize (0.13, 0.15)} & 0.998 {\scriptsize (1.00, 1.00)} \\
NFLOW & 0.636 {\scriptsize (0.61, 0.66)} & \textbf{0.919} {\scriptsize (0.88, 0.95)} & 0.264 {\scriptsize (0.25, 0.28)} & \textbf{0.166} {\scriptsize (0.16, 0.18)} & \textbf{1.000} {\scriptsize (1.00, 1.00)} \\
TVAE & 0.655 {\scriptsize (0.64, 0.67)} & 0.803 {\scriptsize (0.78, 0.83)} & 0.269 {\scriptsize (0.25, 0.29)} & 0.148 {\scriptsize (0.14, 0.16)} & 0.999 {\scriptsize (1.00, 1.00)} \\
\bottomrule
\end{tabular}
\end{table}

\begin{table}[htbp]
\centering
\caption{Performance metrics for Causal chambers dataset (1000 samples) - Bold indicates best or statistically tied}
\label{tab:uniform_reference_1000_samples}
\footnotesize
\begin{tabular}{lccccc}
\toprule
Augmenter & Mean MSE $\downarrow$ & $\alpha$-Precision $\uparrow$ & $\beta$-Recall $\uparrow$ & DCR $\uparrow$ & $\delta$-Presence $\uparrow$ \\
\midrule
ADMGTian & 0.401 {\scriptsize (0.40, 0.41)} & \textbf{0.980} {\scriptsize (0.97, 0.98)} & 0.081 {\scriptsize (0.07, 0.09)} & 0.580 {\scriptsize (0.58, 0.58)} & 1.000 {\scriptsize } \\
ARF & 0.511 {\scriptsize (0.51, 0.52)} & 0.970 {\scriptsize (0.97, 0.97)} & 0.084 {\scriptsize (0.08, 0.09)} & \textbf{0.712} {\scriptsize (0.71, 0.72)} & 1.000 {\scriptsize } \\
CTGAN & 0.449 {\scriptsize (0.43, 0.49)} & 0.917 {\scriptsize (0.89, 0.94)} & 0.263 {\scriptsize (0.23, 0.29)} & 0.633 {\scriptsize (0.62, 0.64)} & 1.000 {\scriptsize } \\
CRB & \textbf{0.358} {\scriptsize (0.36, 0.36)} & \textbf{0.989} {\scriptsize (0.98, 0.99)} & \textbf{0.505} {\scriptsize (0.50, 0.51)} & 0.579 {\scriptsize (0.58, 0.58)} & 1.000 {\scriptsize } \\
DDPM & 0.418 {\scriptsize (0.40, 0.43)} & 0.859 {\scriptsize (0.82, 0.90)} & 0.472 {\scriptsize (0.46, 0.48)} & 0.568 {\scriptsize (0.56, 0.58)} & 1.000 {\scriptsize } \\
NFLOW & 0.421 {\scriptsize (0.41, 0.43)} & 0.844 {\scriptsize (0.78, 0.90)} & 0.202 {\scriptsize (0.17, 0.23)} & 0.666 {\scriptsize (0.65, 0.68)} & 1.000 {\scriptsize } \\
TVAE & 0.450 {\scriptsize (0.44, 0.46)} & 0.810 {\scriptsize (0.77, 0.84)} & 0.351 {\scriptsize (0.34, 0.36)} & 0.600 {\scriptsize (0.60, 0.60)} & 1.000 {\scriptsize } \\
\bottomrule
\end{tabular}
\end{table}

\begin{table}[htbp]
\centering
\caption{Performance metrics for Sachs dataset (2000 samples) - Bold indicates best or statistically tied}
\label{tab:sachs_2000_samples}
\footnotesize
\begin{tabular}{lccccc}
\toprule
Augmenter & Mean MSE $\downarrow$ & $\alpha$-Precision $\uparrow$ & $\beta$-Recall $\uparrow$ & DCR $\uparrow$ & $\delta$-Presence $\uparrow$ \\
\midrule
ADMGTian & 0.429 {\scriptsize (0.38, 0.48)} & 0.533 {\scriptsize (0.50, 0.57)} & 0.049 {\scriptsize (0.03, 0.07)} & \textbf{0.039} {\scriptsize (0.03, 0.05)} & \textbf{0.398} {\scriptsize (0.37, 0.43)} \\
ARF & 0.490 {\scriptsize (0.43, 0.62)} & 0.751 {\scriptsize (0.64, 0.80)} & 0.363 {\scriptsize (0.24, 0.42)} & \textbf{0.029} {\scriptsize (0.02, 0.04)} & \textbf{0.342} {\scriptsize (0.28, 0.49)} \\
CTGAN & 0.680 {\scriptsize (0.63, 0.75)} & 0.884 {\scriptsize (0.68, 0.94)} & 0.450 {\scriptsize (0.31, 0.53)} & 0.019 {\scriptsize (0.01, 0.03)} & 0.194 {\scriptsize (0.15, 0.33)} \\
CRB & \textbf{0.317} {\scriptsize (0.30, 0.35)} & 0.849 {\scriptsize (0.83, 0.86)} & 0.367 {\scriptsize (0.36, 0.38)} & 0.024 {\scriptsize (0.02, 0.03)} & 0.214 {\scriptsize (0.19, 0.24)} \\
DDPM & \textbf{0.316} {\scriptsize (0.30, 0.35)} & \textbf{0.965} {\scriptsize (0.96, 0.97)} & \textbf{0.757} {\scriptsize (0.75, 0.76)} & 0.012 {\scriptsize (0.01, 0.01)} & 0.071 {\scriptsize (0.06, 0.08)} \\
NFLOW & 0.628 {\scriptsize (0.55, 0.72)} & 0.901 {\scriptsize (0.84, 0.94)} & 0.555 {\scriptsize (0.53, 0.57)} & 0.015 {\scriptsize (0.01, 0.02)} & 0.099 {\scriptsize (0.07, 0.14)} \\
TVAE & 0.421 {\scriptsize (0.39, 0.44)} & \textbf{0.957} {\scriptsize (0.93, 0.97)} & 0.558 {\scriptsize (0.48, 0.60)} & 0.014 {\scriptsize (0.01, 0.01)} & 0.110 {\scriptsize (0.10, 0.13)} \\
\bottomrule
\end{tabular}
\end{table}

\begin{table}[htbp]
\centering
\caption{Performance metrics for Wine (white) dataset (2000 samples) - Bold indicates best or statistically tied}
\label{tab:wine_white_2000_samples}
\footnotesize
\begin{tabular}{lccccc}
\toprule
Augmenter & Mean MSE $\downarrow$ & $\alpha$-Precision $\uparrow$ & $\beta$-Recall $\uparrow$ & DCR $\uparrow$ & $\delta$-Presence $\uparrow$ \\
\midrule
ADMGTian & 0.952 {\scriptsize (0.52, 1.70)} & 0.734 {\scriptsize (0.47, 0.91)} & 0.311 {\scriptsize (0.17, 0.40)} & \textbf{0.178} {\scriptsize (0.15, 0.22)} & 0.787 {\scriptsize (0.73, 0.88)} \\
ARF & 0.606 {\scriptsize (0.60, 0.63)} & \textbf{0.942} {\scriptsize (0.94, 0.95)} & 0.396 {\scriptsize (0.38, 0.41)} & \textbf{0.198} {\scriptsize (0.18, 0.22)} & \textbf{1.000} {\scriptsize (1.00, 1.00)} \\
CTGAN & 0.734 {\scriptsize (0.66, 0.91)} & 0.847 {\scriptsize (0.58, 0.93)} & 0.367 {\scriptsize (0.24, 0.41)} & \textbf{0.202} {\scriptsize (0.17, 0.30)} & \textbf{1.000} {\scriptsize (1.00, 1.00)} \\
CRB & 0.504 {\scriptsize (0.50, 0.51)} & 0.814 {\scriptsize (0.80, 0.83)} & 0.464 {\scriptsize (0.46, 0.47)} & \textbf{0.166} {\scriptsize (0.15, 0.18)} & \textbf{1.000} {\scriptsize (1.00, 1.00)} \\
DDPM & \textbf{0.449} {\scriptsize (0.44, 0.46)} & \textbf{0.924} {\scriptsize (0.91, 0.94)} & \textbf{0.519} {\scriptsize (0.50, 0.53)} & \textbf{0.159} {\scriptsize (0.15, 0.17)} & 0.999 {\scriptsize (1.00, 1.00)} \\
NFLOW & 0.629 {\scriptsize (0.60, 0.66)} & \textbf{0.925} {\scriptsize (0.90, 0.94)} & 0.404 {\scriptsize (0.38, 0.43)} & \textbf{0.186} {\scriptsize (0.17, 0.21)} & \textbf{1.000} {\scriptsize (1.00, 1.00)} \\
TVAE & 0.611 {\scriptsize (0.59, 0.62)} & 0.828 {\scriptsize (0.80, 0.85)} & 0.434 {\scriptsize (0.41, 0.45)} & \textbf{0.165} {\scriptsize (0.15, 0.18)} & 1.000 {\scriptsize (1.00, 1.00)} \\
\bottomrule
\end{tabular}
\end{table}

\begin{table}[htbp]
\centering
\caption{Performance metrics for Causal chambers dataset (2000 samples) - Bold indicates best or statistically tied}
\label{tab:uniform_reference_2000_samples}
\footnotesize
\begin{tabular}{lccccc}
\toprule
Augmenter & Mean MSE $\downarrow$ & $\alpha$-Precision $\uparrow$ & $\beta$-Recall $\uparrow$ & DCR $\uparrow$ & $\delta$-Presence $\uparrow$ \\
\midrule
ADMGTian & 0.395 {\scriptsize (0.39, 0.40)} & \textbf{0.983} {\scriptsize (0.98, 0.99)} & 0.158 {\scriptsize (0.14, 0.18)} & 0.580 {\scriptsize (0.58, 0.58)} & 1.000 {\scriptsize } \\
ARF & 0.499 {\scriptsize (0.48, 0.57)} & 0.970 {\scriptsize (0.97, 0.97)} & 0.183 {\scriptsize (0.13, 0.20)} & \textbf{0.704} {\scriptsize (0.70, 0.73)} & 1.000 {\scriptsize } \\
CTGAN & 0.468 {\scriptsize (0.43, 0.52)} & 0.917 {\scriptsize (0.87, 0.95)} & 0.387 {\scriptsize (0.32, 0.45)} & 0.630 {\scriptsize (0.62, 0.64)} & 1.000 {\scriptsize } \\
CRB & \textbf{0.356} {\scriptsize (0.36, 0.36)} & \textbf{0.988} {\scriptsize (0.98, 0.99)} & \textbf{0.697} {\scriptsize (0.69, 0.70)} & 0.582 {\scriptsize (0.58, 0.59)} & 1.000 {\scriptsize } \\
DDPM & 0.379 {\scriptsize (0.36, 0.39)} & 0.910 {\scriptsize (0.87, 0.95)} & \textbf{0.696} {\scriptsize (0.69, 0.70)} & 0.571 {\scriptsize (0.56, 0.58)} & 1.000 {\scriptsize } \\
NFLOW & 0.415 {\scriptsize (0.41, 0.42)} & 0.948 {\scriptsize (0.94, 0.96)} & 0.376 {\scriptsize (0.35, 0.40)} & 0.649 {\scriptsize (0.64, 0.65)} & 1.000 {\scriptsize } \\
TVAE & 0.421 {\scriptsize (0.41, 0.43)} & 0.867 {\scriptsize (0.83, 0.89)} & 0.546 {\scriptsize (0.51, 0.57)} & 0.602 {\scriptsize (0.60, 0.61)} & 1.000 {\scriptsize } \\
\bottomrule
\end{tabular}
\end{table}

\section{Results with neural networks}
\label{app:nn_results}
\begin{figure}[htbp]
    \centering
    \includegraphics[width=\columnwidth]{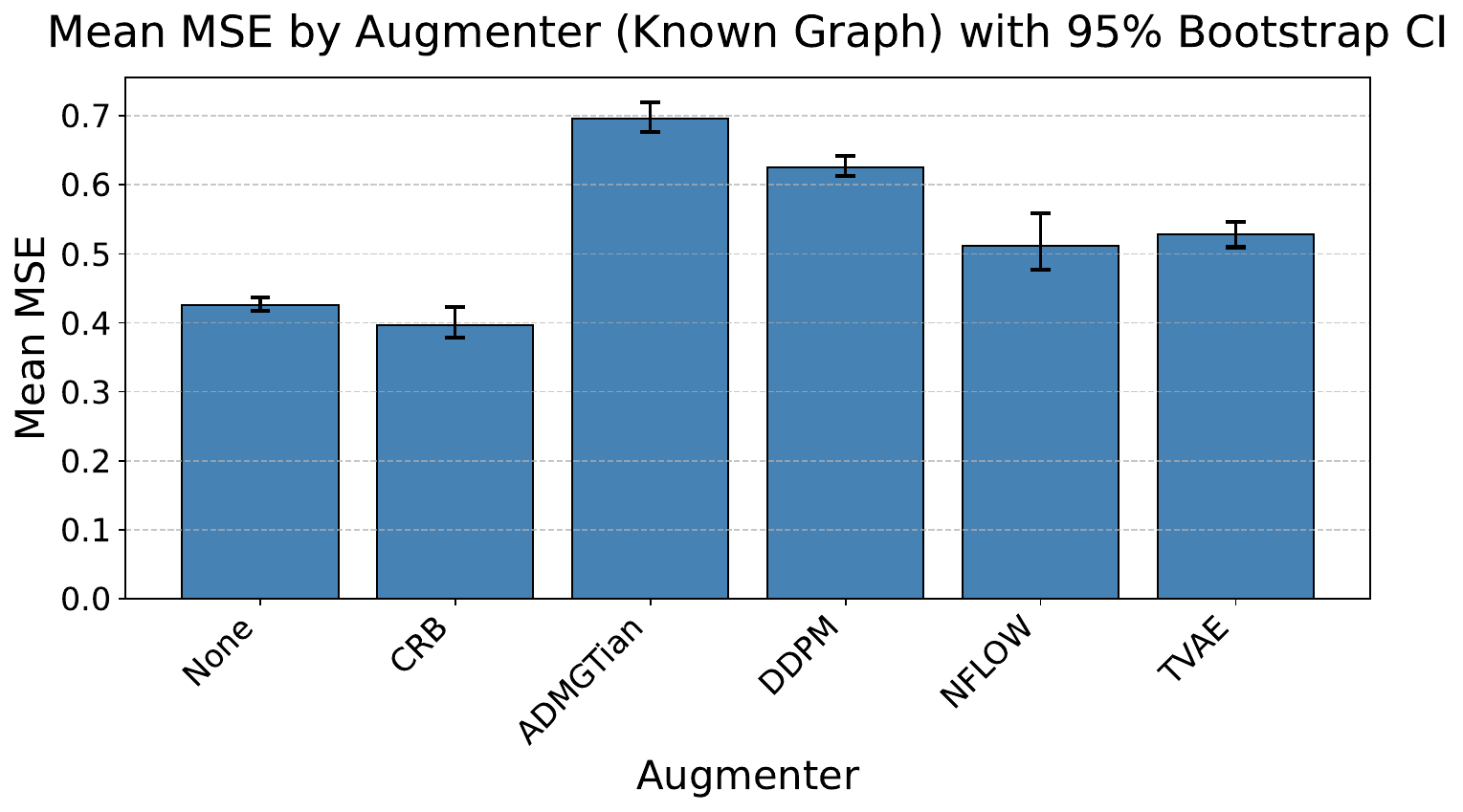}
    \caption{Mean MSE across all variables as sample size increases (Known Graph). Lower is better. CRB maintains strong performance across sample sizes. Neural networks were used as predictors.}
    \label{fig:known_graph_samples_mean_mse}
\end{figure}
\section{Experimental evaluation details}
\subsection{Hyperparameter grid search}
\label{app:hyperparams}
All grid were performed with 3-fold cross validation.

  Hyperparameter optimization for Xgboost was performed using Optuna with 500 trials. The search space for XGBoost included: number of
  estimators $n_{\text{estimators}} \in \{10, 110, \ldots, 1020\}$, learning rate $\eta \in [0.01, 0.3]$ (log-uniform), L2
  regularization $\lambda \in [0.1, 100]$, L1 regularization $\alpha \in [10^{-4}, 10]$ (log-uniform), minimum child weight $\in
  \{1, 3, \ldots, 19\}$, subsample ratio $\in [0.1, 1.0]$, column subsample ratio $\in [0.1, 1.0]$, and minimum loss reduction
  $\gamma \in [0, 5]$.

    Hyperparameter optimization for the neural network was performed using Optuna with 500 trials. The search space included: number
   of training epochs $\in \{50, 100, \ldots, 10000\}$, number of hidden layers $\in \{2, 3, 4\}$, hidden layer width $\in \{4, 8,
   \ldots, 32\}$, and learning rate $\in [10^{-5}, 10^{-2}]$ (log-uniform). The network used ReLU activations and was trained with
   Adam optimizer and early stopping (patience of 10 epochs).

  TVAE hyperparameters: encoder layers $\in \{1, \ldots, 5\}$, encoder units $\in [50, 500]$, encoder activation $\in
  \{\text{relu}, \text{leaky\_relu}, \text{tanh}, \text{elu}\}$, embedding units $\in [50, 500]$, decoder layers $\in \{1, \ldots,
   5\}$, decoder units $\in [50, 500]$, decoder activation $\in \{\text{relu}, \text{leaky\_relu}, \text{tanh}, \text{elu}\}$,
  iterations $\in [100, 1000]$, learning rate $\in [10^{-4}, 10^{-3}]$ (log), weight decay $\in [10^{-4}, 10^{-3}]$ (log).

  DDPM hyperparameters: iterations $\in [1000, 10000]$, learning rate $\in [10^{-5}, 10^{-1}]$ (log), weight decay $\in [10^{-4},
  10^{-3}]$ (log), diffusion timesteps $\in [10, 1000]$.

  ARF hyperparameters: number of trees $\in [1, 500]$ (step 10), minimum node size $\in [15, 500]$ (step 10).

    CTGAN hyperparameters: generator layers $\in \{1, \ldots, 5\}$, generator units $\in [1, 2000]$ (step 50), generator activation
  $\in \{\text{relu}, \text{leaky\_relu}, \text{tanh}, \text{elu}, \text{selu}\}$, discriminator layers $\in \{1, \ldots, 5\}$,
  discriminator units $\in [1, 2000]$ (step 50), discriminator activation $\in \{\text{relu}, \text{leaky\_relu}, \text{tanh},
  \text{elu}, \text{selu}\}$, iterations $\in [200, 5000]$ (step 100), discriminator iterations $\in \{1, \ldots, 5\}$, learning
  rate $\in [10^{-5}, 0.03]$ (log).

  ADMG-Tian hyperparameters: bandwidth temperature $\in [10^{-4}, 0.1]$ (log), weight threshold $\in [10^{-5}, 10^{-2}]$ (log).

  NFLOW (Normalizing Flows):
  Normalizing Flows hyperparameters: hidden layers $\in \{1, \ldots, 10\}$, hidden units $\in [10, 100]$, linear transform $\in
  \{\text{lu}, \text{permutation}, \text{svd}\}$, base transform $\in \{\text{affine-coupling}, \text{quadratic-coupling},
  \text{rq-coupling}, \text{affine-autoregressive}, \text{quadratic-autoregressive}, \text{rq-autoregressive}\}$, dropout $\in [0,
   0.2]$, batch normalization $\in \{\text{true}, \text{false}\}$, learning rate $\in [2 \times 10^{-4}, 10^{-3}]$ (log),
  iterations $\in [100, 5000]$.

\subsection{Causal discovery}
\label{app:cd_details}
We used DirectLiNGAM~\citep{shimizu2014lingam} for causal discovery with default settings: \texttt{pwling} independence measure and threshold $= 0$ (no
  edge pruning) and no prior knowledge about structure.

\section{Synthetic Data Experiments} \label{sec: synthetic data exp}
We first provide experiments on synthetic data to demonstrate how our approach is able to outperform the previous ADMG approach by \citet{teshima2021incorporating}.

To better understand the behavior of our proposed method in controlled settings, we constructed a simple synthetic experiment using a three-node causal chain $A \rightarrow B \rightarrow C$ with additive noise. In this setup, our objective is to predict the value of node $B$ given observations of both $A$ and $C$. It is important to note that both $A$ and $C$ lie in the Markov blanket of $B$---$A$ as its parent and $C$ as its child---making both variables necessary for the optimal prediction of $B$. This three-node chain represents the smallest configuration where meaningful causal data augmentation can be performed.

We evaluated four different data-generating configurations to assess how the method performs under various functional forms and noise distributions. The configurations tested were: (1) linear relationships with Gaussian noise, (2) linear relationships with non-Gaussian noise (specifically, uniform noise), (3) nonlinear relationships using quadratic functions with Gaussian noise, and (4) nonlinear relationships using ReLU neural networks with architecture $[4,4]$ and Gaussian noise.

\begin{figure*}[htbp]
\centering
\begin{subfigure}[t]{0.49\textwidth}
        \centering

        \includegraphics[width=\textwidth]{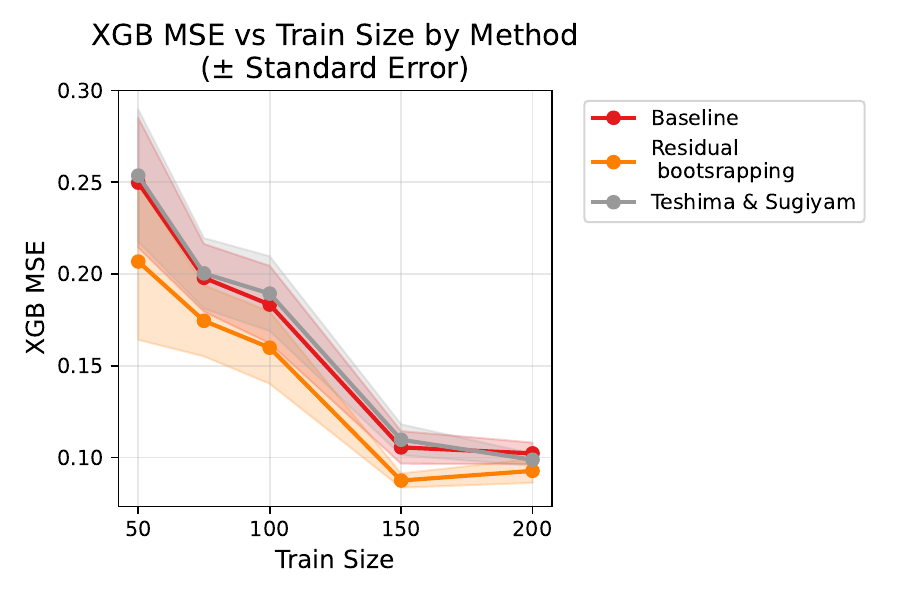}

    \caption{Results for chain with linear relations and additive Gaussian noise.}
    \label{fig:linear_gaussian}
\end{subfigure}
\hfill
\begin{subfigure}[t]{0.49\textwidth}
        \centering
        \includegraphics[width=\textwidth]{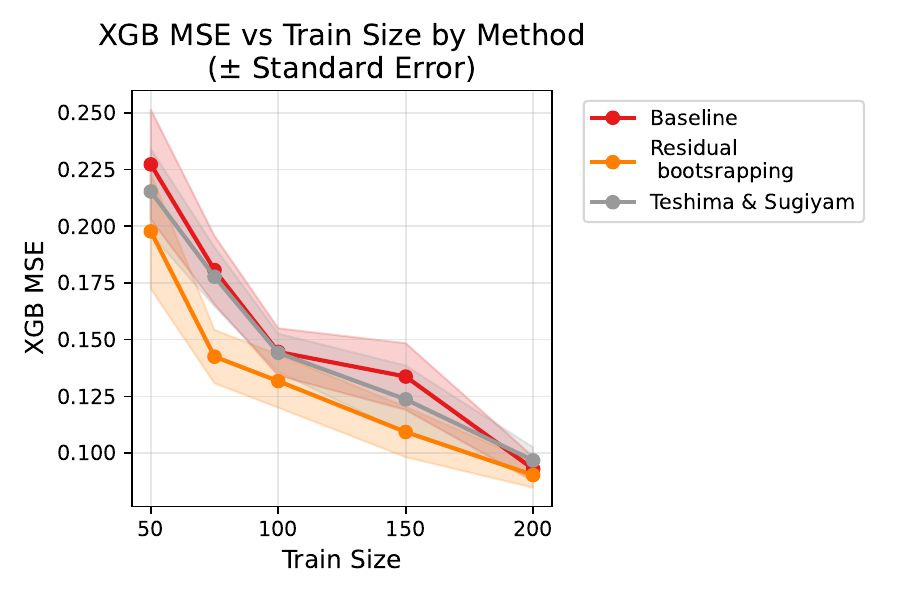}
    \caption{Results for chain with linear relations and additive uniform noise.}
\label{fig:linear_nongaussian}
\end{subfigure}
\caption{Results for chain with linear relations}
\end{figure*}

As we can see in Fig.~\ref{fig:linear_gaussian} the biggest improvment for residual bootstrapping can be seen for linear data with Gaussian additive noise, a relatively easy setup. We can also observe that residual bootstrapping yields better results for linear data with additive uniform noise in Fig.~\ref{fig:linear_nongaussian}.

For a non-linear function, we see an interesting phenomenon for both residual bootstrapping and the method from~\citet{teshima2021incorporating} -- more samples (around $75$) are needed for both methods to improve over baseline because of a more complicated functional relationship.

But once there are enough samples, both methods perform similarly, exhibiting slightly better results. This trend is visible for quadratic relationships in~\ref{fig:quadratic_gaussian} and more complicated relations sampled from random ReLu networks in~\ref{fig:relu_gaussian}.

\begin{figure*}[htbp]
\centering
\begin{subfigure}[t]{0.49\textwidth}
        \centering

        \includegraphics[width=\textwidth]{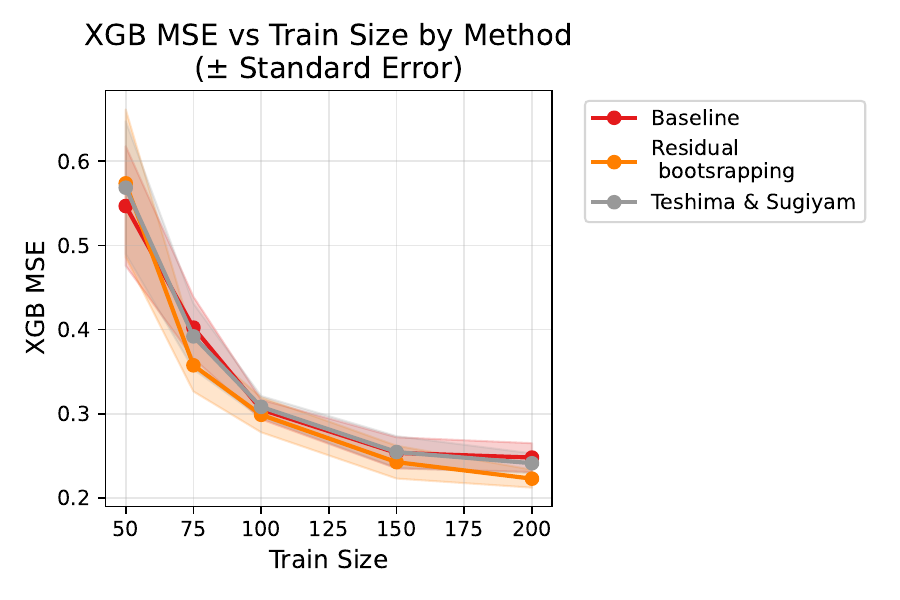}
    \caption{Results for chain with quadratic relations and additive Gaussian noise.}
\label{fig:quadratic_gaussian}
\end{subfigure}
\hfill
\begin{subfigure}[t]{0.49\textwidth}
        \centering
        \includegraphics[width=\textwidth]{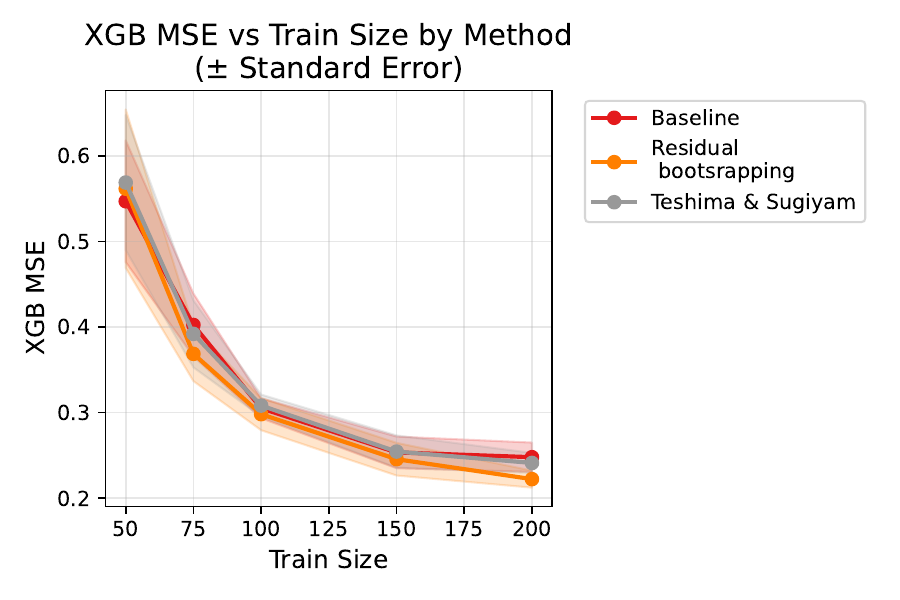}
    \caption{Results for chain with ReLu relations and additive Gaussian noise.}
\label{fig:relu_gaussian}
\end{subfigure}
\caption{Results for chain with non-linear relations}

\end{figure*}

\section{Compute resources}
\label{appendix:resources}
The computations were carried out on a FormatServer THOR E221 (Supermicro) server equipped with two AMD EPYC 7702 64-Core processors and 512 GB of RAM with operating system Ubuntu 22.04.1 LTS.

\section{Experimental Setup for Section~\ref{sec: data augmentation and causal structure}}\label{sec: implementation details for causal structure loss}

We generated $100$ random Directed Acyclic Graphs (DAGs) using the Erdos-Renyi model with $10$ nodes and $10$ expected edges. Edge weights were sampled uniformly from the range $[0.5,2.0]$. For each DAG, we simulated $2,000$ datapoints from the corresponding linear Gaussian SEM (where the noise term for each variable was drawn from a standard Normal distribution).

 The diffusion model, VAE, and GAN were trained to approximate the original data distribution. Training was conducted until the Fréchet Inception Distance (FID) plateaued. We utilized FID as the primary model selection criterion rather than loss or optimizer convergence, as the latter are often poor stopping criteria for these architectures. Given that the true data distribution is multivariate normal, FID is a more appropriate metric for generative quality in this simulation.
\end{document}